%% LyX 2.2.2 created this file.  For more info, see http://www.lyx.org/.
%% Do not edit unless you really know what you are doing.
\documentclass[11pt]{article}
\usepackage{graphicx} % For images
\usepackage{float}    % For tables and other floats
\usepackage{verbatim} % For comments and other
\usepackage{amsmath}  % For math
\usepackage{amssymb}  % For more math
\usepackage{subfig}   % For subfigures
\usepackage[colorlinks=true,citecolor=blue,linkcolor=blue,urlcolor=blue]{hyperref}
\usepackage{fullpage}
\usepackage{amsthm,mathtools}
\usepackage{enumerate}
\usepackage{paralist}
\usepackage{xspace}
\usepackage{caption}
\usepackage{bbm}
\usepackage{url}
\usepackage{mathrsfs}
\usepackage{algorithm}
\usepackage{algorithmic}
\usepackage{color}
\usepackage{epstopdf}

\newcommand{\vertiii}[1]{{\left\vert\kern-0.25ex\left\vert\kern-0.25ex\left\vert #1 
		\right\vert\kern-0.25ex\right\vert\kern-0.25ex\right\vert}}

\makeatletter

%%%%%%%%%%%%%%%%%%%%%%%%%%%%%% LyX specific LaTeX commands.
%% Because html converters don't know tabularnewline

%%%%%%%%%%%%%%%%%%%%%%%%%%%%%% Textclass specific LaTeX commands.
%\newtheorem{notation}{Notation}
\newtheorem{theorem}{Theorem}
\newtheorem{definition}{Definition}
\newtheorem{lemma}{Lemma}

\newtheorem{proposition}{Proposition}

\newtheorem{assumption}{Assumption}
\newtheorem{fact}{Fact}
%\newcommand*{\qed}{\hfill\ensuremath{\blacksquare}}

%%%%%%%%%%%%%%%%%%%%%%%%%%%%%% User specified LaTeX commands.
\usepackage[margin=1in]{geometry}
\usepackage{times}

\usepackage{thmtools}
\usepackage{thm-restate}

\usepackage{listings}

\@ifundefined{showcaptionsetup}{}{%
	\PassOptionsToPackage{caption=false}{subfig}}
\usepackage{subfig}
\makeatother

\usepackage{listings}

\global\long\def\H2{\mathcal{H}_2}

\global\long\def\E1{\mathcal{E}_1}

\global\long\def\linf{\infty/\infty}
\global\long\def\lone{1/1}
\global\long\def\loff{\mathrm{off}}

\global\long\def\DI{\Delta^{\cap}}

\bibliographystyle{abbrv}

\newcommand{\R}{\ensuremath{\mathbb{R}}}

\newcommand*{\defeq}{\stackrel{\text{def}}{=}}

\begin{document}

	\title{\bf Scalable Inference of Sparsely-changing Markov Random Fields with Strong Statistical Guarantees}
	\author{Salar Fattahi$^1$ and Andr\'es G\'omez$^2$\vspace{5mm}\\
	$^1$Industrial and Operations Engineering, University of Michigan\\
	$^2$Industrial \& Systems Engineering, University of Southern California
%		\thanks{Salar Fattahi is with the Department of Industrial Engineering and Operations Research, University of California, Berkeley. Nikolai Matni is with the Department of Electrical and Systems Engineering, University of Pennsylvania. Somayeh Sojoudi is with the Departments of Electrical Engineering and Computer Sciences and Mechanical Engineering as well as the Tsinghua-Berkeley Shenzhen Institute, University of California, Berkeley. This work was supported by the ONR Award N00014-18-1-2526, NSF Award 1808859 and AFSOR Award FA9550-19-1-0055.}
	}

	\maketitle
	
	\begin{abstract}
In this paper, we study the problem of inferring time-varying Markov random fields (MRF), where the underlying graphical model is both sparse and changes {sparsely} over time. Most of the existing methods for the inference of time-varying MRFs rely on the \textit{regularized maximum likelihood estimation} (MLE), that typically suffer from weak statistical guarantees and high computational time. Instead, we introduce a new class of constrained optimization problems for the inference of sparsely-changing MRFs. The proposed optimization problem is formulated based on the exact $\ell_0$ regularization, and can be solved in near-linear time and memory. Moreover, we show that the proposed estimator enjoys a provably small estimation error. As a special case, we derive sharp statistical guarantees for the inference of sparsely-changing Gaussian MRFs (GMRF) in the high-dimensional regime, showing that such problems can be learned with as few as one sample per time. Our proposed method is extremely efficient in practice: it can accurately estimate sparsely-changing graphical models with more than 500 million variables in less than one hour.
\end{abstract}

\section{Introduction}\label{sec:intro}
Contemporary systems are comprised of massive numbers of interconnected components that interact according to a hierarchy of complex, unknown, and time-varying topologies. For {example}, with billions of neurons and hundreds of thousands of voxels, the human brain is considered as one of the most complex physiological networks~\cite{rubinov2010complex, huang2010learning, liu2013time, narayan2015two, kim2015highly}. 
% As another example, the emergence of self-driving cars has only accentuated the need for the development of real-time and reliable methods for detecting moving objects, whose temporal locations are captured through a dynamically-evolving 3D network~\cite{isard2003pampas, murphy2004using, sigal2004tracking}. 
% As another example, stock correlation networks have recently gained a prominent role in deciphering the dynamic correlation among different stock prices~\cite{kazemilari2015correlation, kenett2010dominating, sun2020spillovers}.
% The vast amounts of parameters to be estimated, caused both by the large number of components and the time-varying nature of the associated data, are currently the major bottlenecks in our ability to successfully solve such inference problems.

The temporal behavior of today's interconnected systems, such as those mentioned above, can be captured via \textit{time-varying Markov random fields (MRF)}. Time-varying MRFs are associated with a temporal sequence of undirected \textit{Markov graphs} {$\mathcal{G}_t(V,E_t)$, where $V$ and $E_t$ are the set of nodes and edges in the graph at time $t$}. The node set $V$ represents the random variables in the model, while the edge set $E_t$ captures the conditional dependency between these variables at time $t$.
%; a feature known as \textit{Markov property}.
A popular approach for the inference of graphical models is based on the so-called \textit{maximum-likelihood estimation} (MLE): to obtain a model based on which the observed data is most probable to occur~\cite{wainwright2008graphical}.

Despite being known as theoretically powerful tools \cite{kalman1960new, shellenbarger1966estimation}, MLE-based methods suffer from several fundamental \textit{drawbacks} which render them impractical in realistic settings. First, they often suffer from notoriously high computational cost in massive problems, where the number of variables to be inferred is in the order of millions, or more. Second, they fail to efficiently incorporate \textit{prior} structural information into their estimation procedure. For instance, it is well-known that most large-scale systems exhibit \textit{sparse} interactions amongst their components, which can be captured via \textit{sparsely-changing MRFs}, and estimated using sparsity-promoting \textit{regularizers} (such as $\ell_0$ penalty). However, due to the inherent computational complexity of the $\ell_0$-regularized MLE, most of the existing methods inevitably resort to \textit{relaxed} or weaker variants of such regularization (such as $\ell_1$ penalty), thereby suffering from inferior statistical guarantees. {The aforementioned drawbacks of MLE estimators are further compounded in time-varying settings, since parameters need to be estimated for \emph{each} time period, resulting in a dramatic increase in the size of the problems.}

In this work we propose an alternative to MLE estimators, which explicitly incorporates the $\ell_0$-penalties but is also tractable and scales to massive instances.

\noindent{\bf Notations.} 
% Upper- and lower-case letters are used to denote matrices and vectors, respectively. The $ij^{\text{th}}$ element of a matrix $M$ is denoted as $M_{ij}$. 
The $i^{th}$ element of a time-series vector $v_t$ is denoted as $v_{t;i}$. For a vector $v$, the notation $v_{i:j}$ is used to denote the subvector of $v$ from index $i$ to $j$. 
% Given a symmetric matrix $M\in\mathbb{R}^{d\times d}$, the notation $M\succ 0$ implies that it is positive definite. 
For a vector $v$, the notations $\|v\|_\infty$, $\|v\|_2$, $\|v\|_0$ denote the $\ell_\infty$ norm, $\ell_2$ norm, and the number of nonzero elements, respectively. Moreover, for a matrix $M$, the notations $\|M\|_2$, $\|M\|_\infty$, $\|M\|_{1/1}$, $\|M\|_{\linf}$ refer to the induced 2-norm, induced $\infty$-norm, $\ell_1/\ell_1$ norm, and $\ell_\infty/\ell_\infty$ norm, respectively. Moreover, we define $\|M\|_{\loff} = \|M\|_{\lone}-\sum_{i=1}^d|M_{ii}|$. For a vector $v$ and matrix $M$, the notations $\mathrm{supp}(v)$ and $\mathrm{supp}(M)$ are defined as the sets of their nonzero elements. Given two sequences $f(n)$ and $g(n)$ indexed by $n$, the notation $f(n)\lesssim g(n)$ implies that there exists a constant $C<\infty$ that satisfies $f(n) \leq Cg(n)$. Finally, the notation $f(n)\asymp g(n)$ implies that $f(n)\lesssim g(n)$ and $g(n)\lesssim f(n)$. 
% Given two scalars $a$ and $b$, the notation $a\vee b$ denotes their maximum.

Due to space restrictions, all proofs are deferred to the supplementary file.

\subsection{Warm-up: Regularized MLE for Sparsely-changing Gaussian MRFs}\label{subsec:TVGL}
To illustrate the fundamental drawbacks of the regularized MLE, first we consider the class of time-varying Gaussian MRFs (GMRF) with sparsely-changing structure. 
% Such models are commonly used in financial markets~\cite{hallac2017network} and video processing~\cite{ullah2018directed}. 
The parameters of time-varying GMRFs can be inferred by estimating a sequence of inverse covariance matrices (also known as precision matrices) $\{\Theta_t\}_{t=0}^T$, where both $\Theta_t$ and $\Theta_t-\Theta_{t-1}$ are sparse. Given these precision matrices, the edge set of the Markov matrix $\mathcal{G}_t$ coincides with the off-diagonal nonzero elements of $\Theta_t$~\cite{weiss2000correctness}. The sparse precision matrices can be estimated via the following regularized MLE, also known as \textit{time-varying Graphical Lasso (GL)}~\cite{hallac2017network, cai2018capturing}:
\begin{subequations}\label{mle_reg_gmrf}
	\begin{align}
	\{\widehat{\Theta}_t\}_{t=0}^T =  \arg\min_{\Theta_t}& \ \ \sum_{t=0}^{T}\left(\langle\Theta_t,\widehat{\Sigma}_t\rangle-\log\det(\Theta_t)\right)\notag\\
	&\hspace{-2cm}+\gamma_1 \sum_{t=0}^{T}\|\Theta_t\|_{\loff}+\gamma_2\sum_{t=1}^{T}\|\Theta_t- \Theta_{t-1}\|_{\lone}\\
	\text{s.t.}&\ \ \Theta_t\succ 0\qquad\quad t = 0,1,\dots, T
	\end{align}
\end{subequations}
where $\widehat\Sigma_t\in\mathbb{R}^{d\times d}$ is the sample covariance matrix. Without loss of generality and to streamline the presentation, we assume that the samples have zero mean. 
%Solving the above optimization is a daunting task due to its highly nonlinear and nonconvex nature. To circumvent this challenge, a common approach is to replace the nonconvex $\ell_0$ penalty with its convex $\ell_1$ relaxation. The resulting relaxed problem is often referred to as \textit{time-varying Graphical Lasso}~\cite{hallac2017network, cai2018capturing}. 
The next example shows that the time-varying GL may lead to poor estimates.

\begin{figure}[ht]\centering
	\includegraphics[width=0.7\paperwidth]{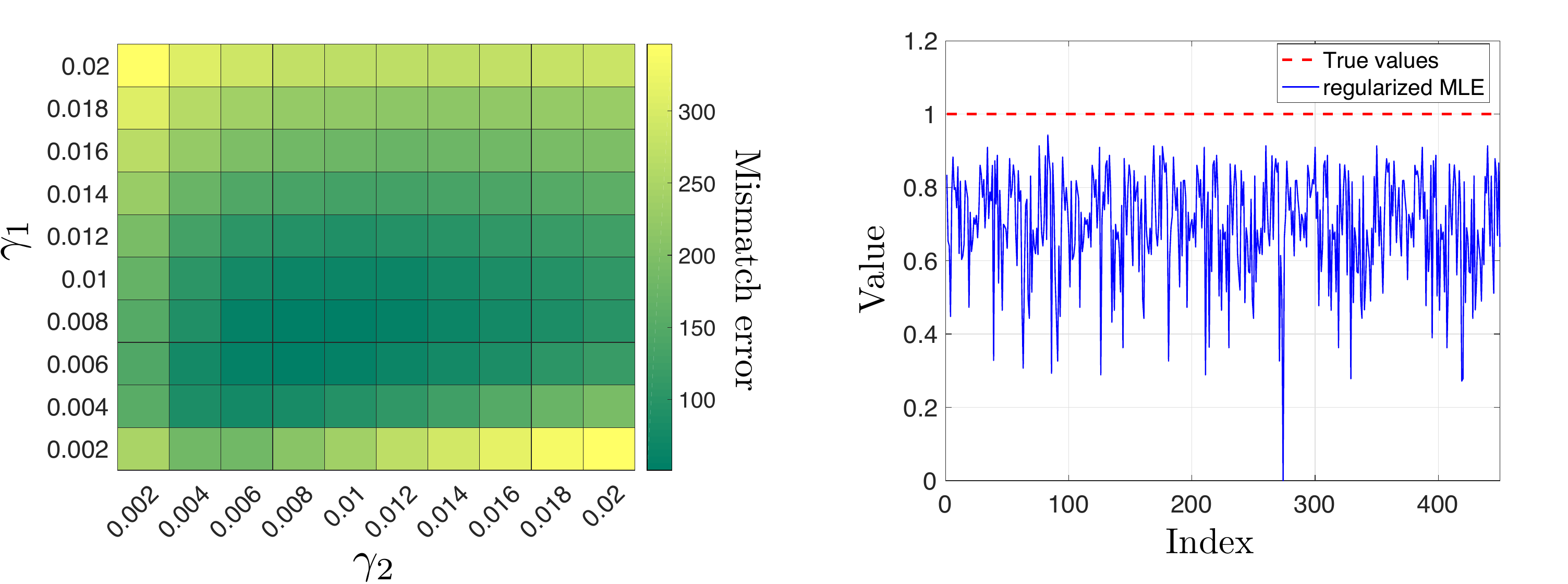}\vspace{-2mm}
	\caption{\footnotesize(left) The heatmap of the mismatch error. (right) The true and estimated nonzero elements of the precision matrix.}\label{fig_mle_reg}
\end{figure}

% \begin{figure*}
% 	\centering
% 	\includegraphics[width=0.55\paperwidth]{fig_mle.pdf}
% 	\vspace{-2mm}
% 	\caption{ (left) The heatmap of the mismatch error. (right) The true and estimated nonzero elements of the precision matrix.}\label{fig_mle_reg}
% \end{figure*}

%\begin{example}
\noindent{\bf Example 1. }{\it Consider a scenario where $\{\Theta_t\}_{t=0}^4 \in\mathbb{R}^{25\times 25}$ are randomly generated symmetric and sparse matrices. At each time $t = 0,\dots,4$, the precision matrix $\Theta_t$ has exactly 100 off-diagonal elements with value one in its upper-triangular part, and the remaining off-diagonal entries are set to zero. Moreover, the diagonal entries $\Theta_{t;ii}$ are chosen as $1+\sum_{j\not=i}\Theta_{t;ij}$. At every time, 10 nonzero off-diagonal elements are changed to zero, and 10 zero elements are set to one. To generate the sample covariance matrix $\widehat{\Sigma}_t$, each element of the true inverse covariance matrix $\Sigma_t^{-1}$ is perturbed with a noise value, uniformly chosen from $[-0.1,0.1]$. Figure~\ref{fig_mle_reg} (left) illustrates a heatmap of the \textit{mismatch error}, i.e., the total number of mismatches in the sparsity patterns of the true and estimated precision matrices and their differences, for different values of the regularization coefficients. It can be seen that after an exhaustive search over the regularization coefficient space, the best achievable mismatch error is in the order of 70.
	% , which is comparable to the total number of nonzero elements in the true inverse covariance matrices. 
	Thus, the estimated parameters reveal little information about the true structure of the time-varying GMRF. %On the hand,
	{Moreover, }Figure~\ref{fig_mle_reg} (right) depicts the concatenation of the nonzero elements in the true precision matrices (dashed red line), and their corresponding values in the estimated matrices (blue curve) at time $t=1$. It can be seen that, even when the sparsity pattern of the elements is correctly recovered, the estimated nonzero entries are ``shrunk'' toward zero, incurring a {substantial} bias.}

%\end{example}
The above example shows the inferior statistical performance of the time-varying GL as an instance of a regularized MLE method for sparsely-changing GMRFs. In addition to its subpar statistical performance, time-varying GL suffers from expensive computational complexity:
% roughly speaking, an off-the-shelf solver for convex optimization, such as \texttt{MOSEK}~\cite{mosek2010mosek}, would take hours on a normal computer to solve instances of the time-varying GL with sizes beyond $d = 200$ and $T = 5$. 
a typical numerical solver for the time-varying GL has a \emph{per-iteration} complexity in the order of $\mathcal{O}(Td^3)$, even if it is tailored to a specific class of problems~\cite{hallac2017network, ravikumar2010high, lelearning}. Solvers with such computational complexity may fall short of practical use in the large-scale settings.
%Optimistically, suppose that an instance of the regularized MLE with $d = 1000$ and $T = 1$ can be solved in 1 minute. Then a 10-fold increase in both $d$ and $T$ suggests that it would take almost 1 week to solve the new problem using the same solver. 
%This calls for a new inference framework that can efficiently cope with the increasing scale of time-varying graphical models, while being statistically correct; a topic that is at the crux of this paper.

\section{Proposed Approach}
% The aim of this paper is to address the aforementioned challenges.
%At a high level, our main idea is to solve a simpler class of surrogate optimization problems \textit{in lieu of} the standard regularized MLE.
The proposed framework is based on exact solutions to a class of \emph{tractable} discrete $\ell_0$-problems, thus circumventing bias and other drawbacks of the standard $\ell_1$-approximations, while guaranteeing the scalability of the proposed method.
As a general framework, we study the optimization problem:
\begin{subequations}\label{generalopt}
	\begin{align}
	{\min}\ \ & (1-\gamma)\sum_{t=0}^T\|\theta_t\|_0+\gamma\sum_{t=1}^T\|\theta_t-\theta_{t-1}\|_0\\
	\text{s.t.}\ \ & \|\theta_t-{\widetilde F^*}(\widehat{\mu}_t)\|_{\linf}\leq \lambda_t\qquad t = 0,1,\dots, T
	\end{align}
\end{subequations}
where the optimal solutions $\{\widehat{\theta}_t\}_{t=0}^T$ with $\widehat\theta_t\in\mathbb{R}^{p}$ are the {estimates of the} unknown \textit{canonical parameters} of the sparsely-changing MRF, $\left\{\widehat{\mu}_t\right\}_{t=0}^T$ are the so-called \textit{empirical moment parameters}, and ${\widetilde F^*}(\cdot)$ is an \textit{approximate backward mapping} of the model; see \S\ref{sec:preliminaries} for formal definitions. In the context of time-varying GMRFs, the unknown canonical parameters correspond to the vectorization of the upper-triangular part of the precision matrices with $p = d(d+1)/2$. Moreover, the mean parameters correspond to the sample covariance matrices obtained directly from the data, and the approximate backward mapping is a \textit{proxy} of the precision matrix. In particular, we use the \textit{inverse of a soft-thresholded weighted sample covariance matrix} as a proxy; see \S\ref{sec:GMRF}. %In the next section, the formal definitions of these parameters will be discussed in details. 
% \noteag{Any reason why we define the vectorized precision matrices, instead of the matrices itself? Should we say $p=d\times d$, or $p=d(d+1)/2$?} \noteag{To make this section as self contained as possible, provide intuition, and facilitate the understanding from a reader not very familiar to the topic, I think it would make to mention, as an example, what the backward mapping is in the context of GMRF.}
%Later, we will explain how to efficiently obtain such approximate backward mappings for different classes of time-varying MRFs. 

Our first result establishes a deterministic guarantee on the estimation error of the optimal solution to~\eqref{generalopt}.
% In particular, it provides a deterministic guarantee on the estimation error and the sparsity pattern (also known as \textit{sparsistency}) of the optimal solution to~\eqref{generalopt}.
\begin{theorem}[Estimation error and sparsistency]\label{thm_deterministic}
	Suppose that $0<\gamma<1$. For every $t=0,\dots,T$, define $\mathcal{S}_t$ as the set of indices corresponding to the nonzero elements of the true canonical parameter $\theta_t^*$. Similarly, for every $t=1,\dots,T$, define $\mathcal{D}_t$ as the set of indices corresponding to the nonzero elements of $\theta_t^*-\theta_{t-1}^*$. Assume that\newline
	$\bullet$ $\left\|\theta_t^*-{\widetilde F^*}(\widehat{\mu}_t)\right\|_{\infty}<\lambda_t$,  $\forall 0\leq t\leq T$,\newline
	$\bullet$ $2\lambda_t\leq\min_{i\in\mathcal{S}_t}|\theta_{t;i}^*|$, $\forall 0\leq t\leq T$,\newline
	$\bullet$ $2\lambda_t+2\lambda_{t-1}\leq\min_{i\in\mathcal{D}_t}|\theta_{t;i}^*-\theta_{t-1;i}^*|$, $\forall 0\leq t\leq T$.
	
	Then, the following statements hold for every $0\leq t\leq T$:
	\begin{itemize}
		\item[-] \textit{(Sparsistency)} We have $\mathrm{supp}\left(\widehat{\theta}_t\right) = \mathrm{supp}\left({\theta}^*_t\right)$ and $\mathrm{supp}\left(\widehat{\theta}_t-\widehat{\theta}_{t-1}\right) = \mathrm{supp}\left({\theta}^*_t-{\theta}^*_{t-1}\right)$.
		% 		\begin{align*}
		% 		\mathrm{supp}\left(\widehat{\theta}_t\right) &= \mathrm{supp}\left({\theta}^*_t\right) &\!\forall 0\leq t\leq T\\
		% 		\mathrm{supp}\left(\widehat{\theta}_t-\widehat{\theta}_{t-1}\right) &= \mathrm{supp}\left({\theta}^*_t-{\theta}^*_{t-1}\right) &\!\forall 0\leq t\leq T
		% 		\end{align*}
		\item[-] (Estimation error) We have
		\begin{align*}
		\|\widehat{\theta}_t-\theta_t^*\|_{\infty}\leq 2\lambda_t,\qquad \|\widehat{\theta}_t-\theta_t^*\|_2\leq 2\sqrt{|\mathcal{S}_t|}\lambda_t.
		\end{align*}
	\end{itemize}
\end{theorem}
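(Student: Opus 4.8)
The plan is to exploit the fact that both the objective and the constraints in~\eqref{generalopt} are separable across the $p$ coordinates, so that the whole problem splits into $p$ independent scalar time-series problems; the actual argument, however, is cleanest as a sandwich on the optimal objective value and needs no explicit decoupling. Write $\widehat F_t := \widetilde F^*(\widehat\mu_t)$ for the proxy, so that the feasible set is the product of boxes $|\theta_{t;i}-\widehat F_{t;i}|\le\lambda_t$. Two consequences of the hypotheses drive everything. First, the true parameter $\theta^*$ is itself feasible: by the first assumption $|\theta^*_{t;i}-\widehat F_{t;i}|<\lambda_t$ for every $i,t$, and its objective value is exactly $(1-\gamma)\sum_{t}|\mathcal S_t|+\gamma\sum_{t}|\mathcal D_t|$. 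Second, the remaining two assumptions are calibrated so that the true nonzeros and the true changes are \emph{forced} on every feasible point, which I establish next.

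First I would prove the two ``detection'' inclusions, valid for \emph{any} feasible $\theta$ (in particular $\widehat\theta$), using only the triangle inequality. For $i\in\mathcal S_t$, combining $|\theta^*_{t;i}|\ge 2\lambda_t$ with $|\theta^*_{t;i}-\widehat F_{t;i}|<\lambda_t$ yields $|\widehat F_{t;i}|>\lambda_t$, so $0$ lies outside the box and $\theta_{t;i}\neq 0$ is forced; hence $\mathcal S_t\subseteq\mathrm{supp}(\theta_t)$ and $\|\theta_t\|_0\ge|\mathcal S_t|$. Symmetrically, for $i\in\mathcal D_t$ I would note that any feasible point satisfies $|\theta_{t;i}-\theta^*_{t;i}|\le|\theta_{t;i}-\widehat F_{t;i}|+|\widehat F_{t;i}-\theta^*_{t;i}|<2\lambda_t$ and likewise $|\theta_{t-1;i}-\theta^*_{t-1;i}|<2\lambda_{t-1}$; the reverse triangle inequality together with the third assumption $|\theta^*_{t;i}-\theta^*_{t-1;i}|\ge 2\lambda_t+2\lambda_{t-1}$ then gives $|\theta_{t;i}-\theta_{t-1;i}|>0$, so $\mathcal D_t\subseteq\mathrm{supp}(\theta_t-\theta_{t-1})$ and $\|\theta_t-\theta_{t-1}\|_0\ge|\mathcal D_t|$.

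Next I would close the argument by sandwiching. Summing the two detection bounds shows that every feasible point has objective at least $(1-\gamma)\sum_{t}|\mathcal S_t|+\gamma\sum_{t}|\mathcal D_t|$, which is precisely the objective of the feasible $\theta^*$. Since $\widehat\theta$ is optimal, its objective is simultaneously $\le$ that of $\theta^*$ and $\ge$ the lower bound, so all three agree. Because $0<\gamma<1$ makes both coefficients strictly positive while each term obeys $\|\widehat\theta_t\|_0\ge|\mathcal S_t|$ and $\|\widehat\theta_t-\widehat\theta_{t-1}\|_0\ge|\mathcal D_t|$, equality of the totals forces every term to meet its bound: $\|\widehat\theta_t\|_0=|\mathcal S_t|$ and $\|\widehat\theta_t-\widehat\theta_{t-1}\|_0=|\mathcal D_t|$. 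Combined with the inclusions above and the matching cardinalities, this yields the two sparsistency equalities. The $\ell_\infty$ estimation bound is then immediate from feasibility, $|\widehat\theta_{t;i}-\theta^*_{t;i}|\le|\widehat\theta_{t;i}-\widehat F_{t;i}|+|\widehat F_{t;i}-\theta^*_{t;i}|\le 2\lambda_t$, and since sparsistency makes $\widehat\theta_t-\theta^*_t$ supported on $\mathcal S_t$, I would conclude $\|\widehat\theta_t-\theta^*_t\|_2\le\sqrt{|\mathcal S_t|}\,\|\widehat\theta_t-\theta^*_t\|_\infty\le 2\sqrt{|\mathcal S_t|}\lambda_t$.

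The calculations are all one-line triangle inequalities, so the only genuine idea — and the step I expect to be the crux — is recognizing that the three assumptions are tuned so that the support of the truth is forced on every feasible point while the truth itself stays feasible, turning sparsistency into a tightness statement for a sandwiched objective rather than anything requiring optimality conditions or duality. The one point demanding care is handling the two $\ell_0$ penalties at once, i.e., ensuring that tightness of the summed objective propagates to each individual $\ell_0$ term, which is exactly where $0<\gamma<1$ is used.
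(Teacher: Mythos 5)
Your proposal is correct and follows essentially the same route as the paper's proof: feasibility plus the triangle inequality give the $\ell_\infty$ bound and force $\mathcal{S}_t\subseteq\mathrm{supp}(\widehat\theta_t)$ and $\mathcal{D}_t\subseteq\mathrm{supp}(\widehat\theta_t-\widehat\theta_{t-1})$, and comparing the objective of $\widehat\theta$ with that of the feasible $\theta^*$, using the strict positivity of both coefficients from $0<\gamma<1$, forces the reverse inclusions, after which the $\ell_2$ bound follows from the support containment. Your ``sandwich/termwise tightness'' framing and your detection step for every feasible point (via $|\widehat F_{t;i}|>\lambda_t$) are only cosmetic repackagings of the paper's cancellation argument, which derives $(1-\gamma)\sum_t\sum_{i\notin\mathcal{S}_t}|\widehat\theta_{t;i}|_0+\gamma\sum_t\sum_{i\notin\mathcal{D}_t}|\widehat\theta_{t;i}-\widehat\theta_{t-1;i}|_0\leq 0$ directly.
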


% \noteag{Shouldn't we say somewhere that the proof is in the online companion?}

Theorem~\ref{thm_deterministic} presents a set of conditions under which the proposed estimation method achieves sparsistency and small element-wise estimation error. The first condition entails that the true canonical parameter $\theta_t^*$ is a feasible solution to~\eqref{generalopt}. %, by controlling the approximation error of the backward mapping.
The second and third conditions imply that there is a non-negligible gap between the zero and nonzero elements of the true parameters and their temporal changes. Such assumptions are crucial for excluding false negatives from the support of the estimated parameters. 

%It is worthwhile to note that 
Theorem~\ref{thm_deterministic} holds for the general class of sparsely-changing MRFs, provided that an accurate approximation of the backward mapping is available. Such backward mappings have been widely studied for different classes of MRFs, such as GMRFs~\cite{yang2014elementary, fattahi2019graphical} and Discrete MRFs~\cite{wainwright2003tree}. We focus on the class of sparsely-changing GMRFs, and show how Theorem~\ref{thm_deterministic} can be used to provide end-to-end sample complexity bounds on the inference of sparsely-changing GMRFs in the high-dimensional settings, see \S\ref{sec:GMRF}.

%Our next theorem characterizes the computational complexity of solving~\eqref{generalopt} to global optimality.

\begin{theorem}[Computational complexity]\label{thm_runtime}
	The optimization problem~\eqref{generalopt} can be solved to optimality in {at most} $\mathcal{O}(pT^2)$ time and memory {on a single thread}.
\end{theorem}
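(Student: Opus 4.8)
The plan is to exploit the complete \emph{separability} of problem~\eqref{generalopt} across the $p$ coordinates, and then solve each resulting one-dimensional subproblem by a dynamic program on a chain. First I would observe that both the objective and the constraints decouple coordinatewise. Writing $g_{t;i}:=[\widetilde F^*(\widehat\mu_t)]_i$, the constraint $\|\theta_t-\widetilde F^*(\widehat\mu_t)\|_{\linf}\le\lambda_t$ is equivalent to the independent box constraints $\theta_{t;i}\in I_{t;i}:=[g_{t;i}-\lambda_t,\,g_{t;i}+\lambda_t]$. Moreover $\|\theta_t\|_0=\sum_{i=1}^p\mathbbm{1}[\theta_{t;i}\neq 0]$ and $\|\theta_t-\theta_{t-1}\|_0=\sum_{i=1}^p\mathbbm{1}[\theta_{t;i}\neq\theta_{t-1;i}]$, so the objective is a sum over $i$ of terms involving only the scalar time series $(\theta_{0;i},\dots,\theta_{T;i})$. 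Hence~\eqref{generalopt} splits into $p$ independent subproblems, and it suffices to solve one coordinate in $\mathcal{O}(T^2)$ time and memory.

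Fix a coordinate $i$ and drop it from the notation. The subproblem is to choose $\theta_t\in I_t$ minimizing $(1-\gamma)\sum_{t=0}^T\mathbbm{1}[\theta_t\neq 0]+\gamma\sum_{t=1}^T\mathbbm{1}[\theta_t\neq\theta_{t-1}]$. The key structural observation is that the cost depends on the chosen real values only through their \emph{support} and their \emph{change pattern}: any feasible sequence is partitioned into maximal runs of equal value (``blocks''), the first term equals $(1-\gamma)$ times the total length of the nonzero blocks, and the second equals $\gamma$ times the number of block boundaries. Realizing a given block decomposition is therefore feasible if and only if, on each block $[a,b]$, the boxes admit a common value: a \emph{zero} block is feasible iff $0\in I_t$ for all $t\in[a,b]$, and a \emph{nonzero} block is feasible iff the intersection $\bigcap_{t=a}^b I_t=[\max_t(g_t-\lambda_t),\,\min_t(g_t+\lambda_t)]$ is nonempty and contains a point other than $0$. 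This reduces the continuous problem to a finite search over block decompositions.

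Next I would encode this search as a shortest-path dynamic program on the chain $0,\dots,T$. Let $D(b)$ be the optimal cost of the prefix $\theta_0,\dots,\theta_b$, with $D(-1):=0$. Conditioning on the last block $[a,b]$ yields
\begin{align*}
D(b)=\min_{0\le a\le b}\Big\{D(a-1)+\gamma\,\mathbbm{1}[a>0]+w(a,b)\Big\},
\end{align*}
where $w(a,b)=0$ if the block $[a,b]$ can be set to $0$, otherwise $w(a,b)=(1-\gamma)(b-a+1)$ if it admits a nonzero common value, and $w(a,b)=+\infty$ if neither holds; an optimal sequence is recovered by backtracking, assigning each nonzero block any value in $\bigcap_{t=a}^b I_t\setminus\{0\}$. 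For fixed $b$, sweeping $a$ from $b$ down to $0$ lets me maintain the running intersection $[\max(g_t-\lambda_t),\min(g_t+\lambda_t)]$ and the ``$0\in I_t$'' flag in $\mathcal{O}(1)$ per step, so each weight $w(a,b)$ is evaluated in amortized constant time. The recursion touches the $\mathcal{O}(T^2)$ pairs $(a,b)$ once, giving $\mathcal{O}(T^2)$ time and memory per coordinate; summing over the $p$ independent coordinates yields the claimed $\mathcal{O}(pT^2)$ bound.

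The main obstacle is the second step: justifying that the continuous, nonconvex subproblem collapses to a finite combinatorial DP. Concretely, one must prove that an optimal solution may always be taken constant on the blocks induced by its change pattern, and that feasibility of each block reduces \emph{exactly} to the interval-intersection tests above, so that no information about the specific real values need be carried in the DP state. Given this reduction, the correctness of the recursion is routine: whenever the minimization places a boundary between two adjacent blocks, either they admit no common feasible value (so the change is genuine and the $\gamma$ is correctly charged) or a common value exists, in which case the merged single block is feasible and strictly cheaper and is therefore selected by the minimum. The complexity accounting then follows immediately.
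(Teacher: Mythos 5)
Your proof is correct, but it takes a genuinely different route from the paper's for the per-coordinate subproblem. The first step is identical: both you and the paper (via its Fact~\ref{fact}) exploit the coordinatewise separability of \eqref{generalopt} into $p$ scalar time-series problems with box constraints. From there the paper proceeds structurally: it proves a lemma (Lemma~\ref{prop:allZero}) that any optimal solution is either identically zero or never zero on each zero-feasible sequence, uses the \texttt{Greedy} routine (Algorithm~\ref{alg_greedy}, justified by Proposition~\ref{prop_beta1}) to solve the $\alpha=1$ problem on the segments between maximal zero-feasible sequences, and then prices these as arcs of a DAG on the $Z$ maximal zero-feasible sequences, solving a shortest-path problem in $\mathcal{O}(ZT)$ per coordinate (Theorems~\ref{thm_spath} and~\ref{thm_runtime_sp}). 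You instead run a flat dynamic program over \emph{all} block endpoints $(a,b)$, with the weight $w(a,b)$ determined by interval-intersection tests maintained in amortized $\mathcal{O}(1)$ time during a backward sweep; the correctness sandwich (the decomposition induced by an optimal $\theta$ certifies $D(T)\leq f^*$, and backtracking realizes a feasible $\theta$ with $f(\theta)\leq D(T)$, so $D(T)=f^*$) is sound, and the structural step you flag as the ``main obstacle'' is in fact near-tautological since blocks are by definition the maximal constant runs of a solution. What each approach buys: yours is more elementary and self-contained --- it needs no zero-feasibility lemma, no \texttt{Greedy} subroutine, and none of the paper's boundary case analysis --- and it suffices for the stated worst-case $\mathcal{O}(pT^2)$ bound (indeed your sweep needs only $\mathcal{O}(T)$ memory per coordinate). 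The paper's construction, by contrast, is instance-adaptive: its $\mathcal{O}(pZT)$ complexity collapses to $\mathcal{O}(pT)$ when $Z=\mathcal{O}(1)$, which is precisely the refinement the paper invokes to explain its near-linear empirical runtime, whereas your DP costs $\Theta(T^2)$ per coordinate on every instance.
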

Theorem~\ref{thm_runtime} shows that the optimization problem~\eqref{generalopt} can be solved efficiently in practice despite its non-convex nature. Our solution method for~\eqref{generalopt} relies on the \textit{element-wise decomposability} of~\eqref{generalopt}. In particular, we decompose~\eqref{generalopt} into smaller subproblems over different coordinates of $\{{\theta}_t\}_{t=0}^T$. Then, we show that the optimal solution to each subproblem can be obtained by solving a shortest path problem on an auxiliary \textit{weighted directed acyclic graph (DAG)}. The details of our solution method, as well as some improvements on its runtime are presented in \S\ref{sec:algorithm}. {
	% We point out that in practice, the proposed algorithm may be much faster than the worst case complexity stated in Theorem~\ref{thm_runtime}, even when executed on in single thread. 
	Moreover, our algorithm is easily parallelizable, leading to better runtimes in practice.} \vspace{2mm}
%As will be delineated in Section~\ref{label}, the complexity of solving~\eqref{generalopt} can be further reduced to $\mathcal{O}(\sum_{i=1}^{p}TZ_i)$, where $0\leq Z_i\leq T/2$ is the number of zero-feasible sequences in the subproblem $i$.

\begin{figure}
	\centering
	\includegraphics[width=0.7\paperwidth]{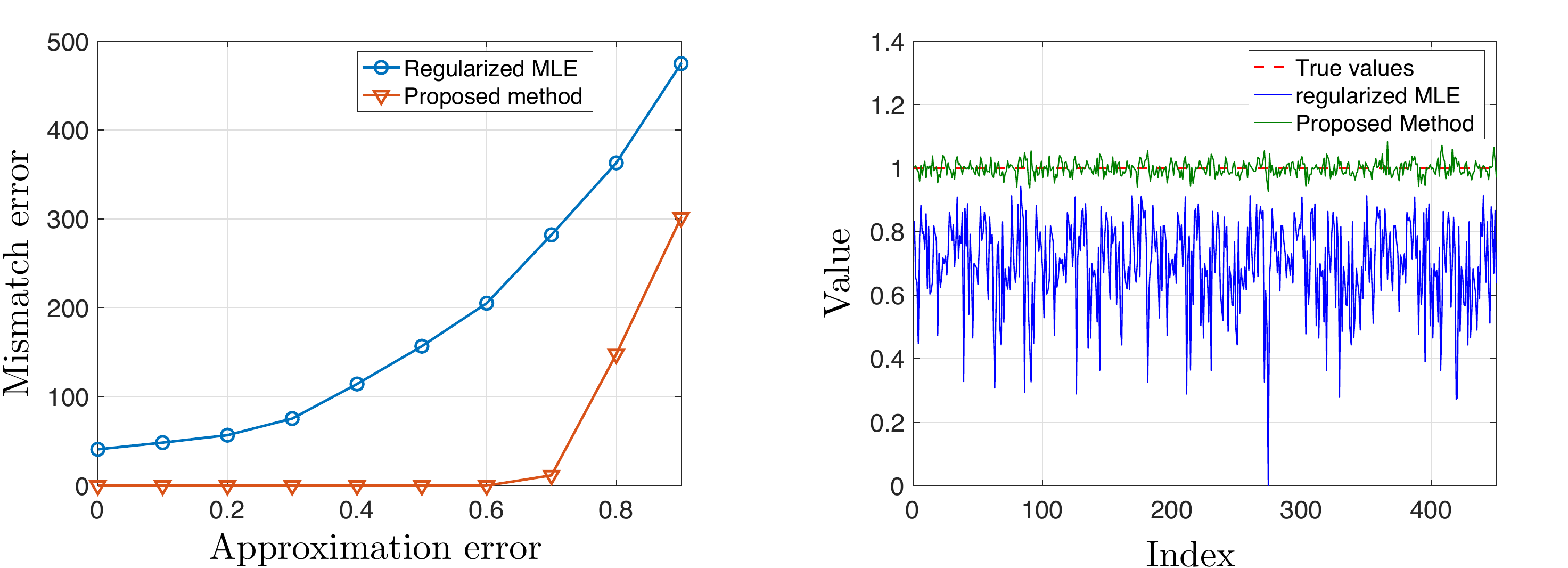}
	\vspace{-2mm}
	\caption{ (left) The heatmap of the mismatch error. (right) The true and estimated nonzero elements of the precision matrix.}\label{fig_mle_l0}
\end{figure}

\noindent{\bf Example 1 (continued). } {\it Figure~\ref{fig_mle_l0} depicts the performance of the proposed method, compared to that of the regularized MLE with $\gamma_1 = 0.06$ and $\gamma_2 = 0.08$ (corresponding to the smallest mismatch error) for the instances generated in Example 1. The regularization parameter $\gamma$ is set to $1/3$.
	% \noteag{While technically there is nothing incorrect here, I think it might be easier for a reader if we use different notation for the regularization params of glasso vs our method, e.g., replace $\gamma_1$ and $\gamma_2$ with $\gamma_1$ and $\gamma_2$. Also, I dislike using $\alpha$ since in stats $\alpha$ is often used for a decision variable, but that could be me being particular}. 
	Each element of the true inverse covariance matrix $\Sigma_t^{-1}$ is perturbed with a noise value, uniformly chosen from $[-\alpha,\alpha]$ for a varying scalar $\alpha$, and $\lambda_t$ is set to $\alpha$. Figure~\ref{fig_mle_l0} (left) demonstrates that the proposed method enjoys a significantly smaller mismatch error, for different levels of approximation in the backward mapping. In particular, the proposed estimator achieves a zero mismatch error, even for a fairly large approximation error ($\|\theta_t^*-\widetilde F^*(\widehat{\mu}_t)\|_\infty\leq 0.6$). On the other hand, Figure~\ref{fig_mle_l0} (right) shows that the synthetic bias caused by the regularized MLE is alleviated via the proposed method. 
	% 	Our algorithm can solve instances with 500 million unknown parameters ($p \approx 5\times 10^7$, and $T = 10$) in less than one hour, which are far beyond the capability of the available solvers for the time-varying Graphical Lasso. 
}
%\noteag{I wonder if we can put here a mismatch error as a function of the regularization params, as we did for glasso? Perhaps we can show that our method is robust in terms of the regularization params.}$\hfill\square$ }

\section{Related Works}
\label{sec:literature}

% A considerable amount of work has been carried out to study the inference of MRF with different side or prior information. 
We now summarize the works most relevant to our results.

\noindent {\bf Inference of time-varying MRF.} 
% The problem of inferring MRFs can be traced back to Kalman filters, where the goal is to predict a random signal evolving over time by ``filtering out'' the observational noise~\cite{kalman1960new, brown1992introduction}. More recent results have studied the non-asymptotic inference of time-varying MRF with side information, such as sparsity and smoothness. 
In addition to the time-varying graphical Lasso introduced in \S\ref{subsec:TVGL}, a recent line of works have studied the inference of smoothly changing GMRFs~\cite{zhou2010time, greenewald2017time}, where a kernel averaging technique combined with Graphical Lasso is used to estimate the smoothly-changing precision matrices. However, these methods do not leverage the prior information about the sparsity of the parameter differences. With the goal of addressing this deficiency, several works have studied the inference of sparsely-changing MRF (also known as sparse \textit{differential networks})~\cite{wang2018fast, zhao2014direct, liu2017learning}. However, the main drawback of these methods is that they only estimate the parameter differences, and their theoretical guarantees are restricted to problems with two time steps ($T=0,1$).
% Moreover, similar to the aforementioned time-varying GL, they heavily rely on relaxed versions of sparsity-promoting regularizations (such as $\ell_1$ penalty), and hence, they too suffer from substantial bias and inferior statistical performance.

\noindent {\bf Sparsity-promoting optimization.} Optimization problems with $\ell_0$ terms are often deemed to be intractable, and approximations are solved instead. Perhaps the most popular approach is the \emph{fused lasso} \cite{rinaldo2009properties,tibshirani2005sparsity,rudin1992nonlinear,vogel1996iterative}, which calls for replacing terms $\|\theta_t\|_0$ and $\|\theta_t-\theta_{t-1}\|_0$ with their $\ell_1$-approximations. Nonetheless, such approximations result in subpar statistical performance when compared with exact $\ell_0$ methods \cite{jewell2017exact,miller2002subset}.

Exact or near-optimal methods for optimization problems of the form 
\begin{align}\label{pairwise}
\min_{ \theta\in [\ell,u]^p}\;& \|\theta\|_0+\sum_{i=1}^p\sum_{j=i+1}^p g_{ij}(\theta_i-\theta_j),
\end{align}
for given one-dimensional functions $g_{ij}:\R\to \R$, have also been studied in the literature.  If functions $g_{ij}$ are convex, then problem \eqref{pairwise} admits pseudo-polynomial time algorithm \cite{ahuja2004cut,bach2019submodular}. Moreover, convex relaxations that deliver near-optimal solutions for \eqref{pairwise} were proposed for the special case of convex quadratic $g$ functions \cite{atamturk2018sparse}; if, additionally, we have $\ell=0$ and $u=\infty$, then problem \eqref{pairwise} is in fact solvable in strongly polynomial time \cite{atamturk2018strong}. On the other hand, problem \eqref{pairwise} is much more challenging for non-convex $g$: if $g(x)=\mathbbm{1}\left\{x\neq 0\right\}$, as is the case in \eqref{generalopt}, then problem \eqref{pairwise} is NP-hard even if the term $\|\theta\|_0$ is dropped from the objective \cite{hochbaum2001efficient}. Nonetheless, as we show in this paper, problem \eqref{pairwise} can be solved quite efficiently in the context of time-varying MRFs, where $g_{ij}(x)=0$ whenever $j>i+1$. 

\section{On Time-varying MRFs and Exponential Families}\label{sec:preliminaries}

% \noteag{I really dislike the term ``preliminaries" this late in the paper. How about killing it? ``On Time-varying MRFs and Exponential Families"?}

% \noteag{Can we send this section to an appendix? It seems to be that we are simply defining terms and notions that may be widely known among statisticians.} \notesf{I don't think this is widely known. We should certainly define what we mean by ``backward mapping'' and ``canonical parameters'' before using them.}

A large class of time-varying MRFs can be expressed as sequences of exponential distributions, defined as:
\begin{align}\label{exp}
\mathbb{P}(X_t;\theta_t) = \exp\left\{\langle \theta_t, \phi(X_t)\rangle-A(\theta_t)\right\}\quad  \forall 0\leq t\leq T,
\end{align}
where $\theta_t\in\Omega\in\mathbb{R}^p$ is the \textit{canonical parameter}
% \footnote{Without loss of generality, we assume that all canonical parameters belong to the same feasible region $\Omega$ that does not change over time.} 
of the exponential distribution at time $t$, the function $\phi : \mathbb{R}^n\to\mathbb{R}^p$ is the \textit{sufficient statistics}, and $A: \mathbb{R}^p\to\mathbb{R}$ is the \textit{log-partition} function, which is used to normalize the distribution. Special classes of time-varying MRFs that can be represented as instances of~\eqref{exp} include time-varying Gaussian MRFs (GMRFs) and Discrete MRFs (DMRFs), corresponding to multivariate Gaussian and discrete random processes, respectively. 
% \noteag{Here we have a small intro to GMRFs and DMRFs... however, we use those in Section 2 without the proper intro.} \notesf{I think it is fine, since we are not getting into the technical details in the intro.}
Due to the equivalence between time-varying MRFs and exponential families, Markov graphs can be systematically obtained from the canonical parameters $\{\theta_t\}_{t=0}^T$~\cite{wainwright2008graphical}. For instance, the canonical parameters in time-varying
GMRFs at time $t$ correspond to the tuple of time-varying precision matrix $\Theta_t$ and mean vector $\eta_t$. 
Moreover, the edge set of the Markov matrix $\mathcal{G}_t$ coincides  with the off-diagonal nonzero elements of $\Theta_t$~\cite{weiss2000correctness}. 
%As another example, the canonical parameters in time-varying DMRFs are the nodewise and edgewise marginal probabilities, whose block-sparsity reveal the structure of the Markov graphs~\cite{wainwright2008graphical}. 

An alternative parameterization of exponential families is via \textit{mean} or \textit{moment parameters}, i.e., the moments of the sufficient statistics $\mu_t(\theta_t) = \mathbb{E}_{\theta_t}[\phi(X_t)]\in\mathcal{M}$. Given the canonical parameters $\{\theta_t\}_{t=0}^T$, the mean parameters $\{\mu_t(\theta_t)\}_{t=0}^T$ can be obtained via the \textit{forward mapping} $F:\Omega\to\mathcal{M}$, where $\mu_t(\theta_t) = F(\theta_t) = \nabla A(\theta_t)$. The conjugate (or Fenchel) duality can be used to define the \emph{backward mapping} $F^*:\mathcal{M}\to\Omega$ with $\theta_t(\mu_t) = F^*(\mu_t) = \nabla A^*(\mu_t)$, where $A^*$ is the conjugate dual of the log-partition function. In practice, the true mean parameters are rarely available, and should be replaced by their \textit{empirical} versions $\{\widehat{\mu}_t\}_{t=0}^T$, where $\widehat{\mu}_t = \frac{1}{N_t}\sum_{i=1}^{N_t}\phi\left(X^{(i)}_t\right)$, and $\left\{X_t^{(i)}\right\}_{i=1}^{N_t}$ is the sequence of available data samples at time~$t$. 

\section{Time-varying GMRFs} 

Theorem~\ref{thm_deterministic} presents a set of deterministic conditions under which the estimates from~\eqref{generalopt} enjoy zero mismatch error and small estimation error. However, the formulation of~\eqref{generalopt} is contingent upon the availability of an accurate backward mapping, and a choice of $\lambda_t$ that satisfies the conditions of Theorem~\ref{thm_deterministic}. In this section, we show how to efficiently design sample-efficient approximate backward mappings, and select $\lambda_t$ accordingly for the class of sparsely-changing GMRFs. Moreover, we use the deterministic conditions of Theorem~\ref{thm_deterministic} to arrive at a non-asymptotic probabilistic guarantee for the inference of time-varying GMFRs under different prior knowledge on their temporal behavior, such as sparsity and smoothness.
% First, we present the proof of Theorem~\ref{thm_deterministic}. 
% Then, we show how Theorem~\ref{thm_deterministic} can be used to provide non-asymptotic statistical guarantees for the inference of sparsely-changing GMRFs in high-dimensional settings.

\subsection{Sparsely-changing GMRFs}\label{subsec:sparse_GMRF}\label{sec:GMRF}

% \noteag{This subsection title is the same as the section.}

Consider a multivariate zero-mean Gaussian process $\{X_t\}_{t=0}^T$ with distribution
{\small
	\begin{align}\label{gmrf}
	\mathbb{P}(X_t) \!=\! \exp\left\{-\frac{1}{2}\langle\Theta_t,X_tX_t^\top\rangle\!+\!\langle\eta_t,X_t\rangle\!-\!A(\mu_t,\Theta_t)\right\} 
	\end{align}
}
for $t=0,1,\dots,T$ where, without loss of generality, we assumed that the mean is zero. Note that~\eqref{gmrf} is a special case of the exponential family~\eqref{exp}. The canonical parameter, i.e., the precision matrix $\Theta_t$, belongs to the domain $\Omega = \{M\in\mathbb{R}^{d\times d}:M\succ 0\}$ with the effective dimension of $p=d(d+1)/2$. Suppose that at any given time $t$, a sequence of data samples $\{X_t^{(i)}\}_{i=1}^{N_t}$ is collected from~\eqref{gmrf}. Therefore, the inference of time-varying GMRFs reduces to estimating the time-varying precision matrix $\Theta_t$ from the data samples $\{X^{(i)}_t\}_{i=1}^{N_t}$. The forward and backward mappings take the following closed-form expressions: $F(\Theta_t) = \Theta_t^{-1}$ and $F^*(\Sigma_t) = \Sigma_t^{-1}$, where $\Sigma_t$ is the true covariance matrix at time $t$. A common choice of approximate backward mapping is $\widetilde F^*(\widehat\Sigma_t) = {\widehat\Sigma_t}^{-1}$, where $\widehat \Sigma_t = \frac{1}{N_t}\sum_{i=1}^{N_t}X^{(i)}_t{X^{(i)}_t}^\top$ is the sample covariance matrix. However, in the high-dimensional settings where $p\gg N_t$, this approximate backward mapping is not well-defined, since the sample covariance matrix is highly rank-deficient and not invertible.  

To address this issue, \cite{yang2014elementary} propose a \textit{proxy} backward mapping for high-dimensional settings: consider the soft-thresholding operator $\texttt{ST}_{\nu}(M):\mathbb{R}^{p\times p}\to \mathbb{R}^{p\times p}$, where $\texttt{ST}_{\nu}(M)]_{ij} = M_{ij}-\mathrm{sign}(M_{ij})\min\{|M_{ij}|,\nu\}$ if $i\not=j$, and $\texttt{ST}_{\nu}(M)]_{ij} = M_{ij}$ if $i=j$.
% \noteag{Don't we need $M_{ij}-\mathrm{sign}(M_{ij})\min\{|M_{ij}|,\nu\}$ below?}
% \begin{align}
% [\texttt{ST}_{\nu}(M)]_{ij} = \begin{cases} M_{ij}-\mathrm{sign}(M_{ij})\min\{|M_{ij}|,\nu\}\ \ \ &\text{if}\ i\not=j\\
% M_{ij} \ \ \ &\text{if}\ i=j.
% \end{cases}
% \end{align}
Based on this definition, we choose the approximate backward mapping as $\widetilde F^*(\widehat \Sigma_t) = [\texttt{ST}_{\nu}(\widehat \Sigma_t)]^{-1}$. \cite{yang2014elementary} show that $\widetilde F^*(\widehat \Sigma_t)$ is well-defined, even in the high dimension, with an appropriate choice of the threshold $\nu$. 

We now make two assumptions. 
% Moreover,~\cite{yang2014elementary} shows that this approximate backward mapping benefits from strong statistical guarantees, provided that the following conditions are satisfied:
\begin{assumption}[Bounded norm]\label{assum1}
	There exist constant numbers $\kappa_1<\infty$, $\kappa_2>0$, and $\kappa_3<\infty$ such that
	\begin{align*}
	\|\Theta_t\|_\infty\leq \kappa_1,\; \inf_{w:\|w\|_\infty=1}\|\Sigma_tw\|_\infty\geq \kappa_2,\; \|\Sigma_t\|_{\linf}\leq \kappa_3,
	\end{align*}
	for every $t = 0,\dots, T$.
\end{assumption}
Assumption~\ref{assum1} implies that the true covariance matrices and their inverses have bounded norms. 
\begin{assumption}[Weak sparsity]\label{assum2} \begin{sloppypar}Given any $t = 0,\dots, T$, the covariance matrix $\Sigma_t$ satisfies $\max_{i}\sum_{j=1}^d|[\Sigma_t]_{ij}|^q\leq s(q,d)$, for some function $s:\mathbb{R}\times \mathbb{Z}_+\to\mathbb{R}$ and $0\leq q<1$. 
		% \noteag{ for some function $s:\R\times \Z_+\to \R$?}.
	\end{sloppypar} 
\end{assumption}
Assuming that $s(0,d)\leq s\ll d$, Assumption~\ref{assum2} reduces to the covariance matrix being sparse. Moreover, in many cases, a sparse inverse covariance matrix leads to weakly sparse covariance matrices. For instance, if $\Theta_t$ has a banded structure with small bandwidth, then it is known that the elements of $\Sigma_t = \Theta_t^{-1}$ enjoy exponential decay away from the main diagonal elements~\cite{demko1984decay, kershaw1970inequalities}. Under such circumstances, one can verify that $s(q,d)\leq\frac{C}{1-\rho^q}$ for some constant $C>0$ and $\rho<1$. More generally, a similar statement holds for a class of inverse covariance matrices whose support graphs have large average path length~\cite{benzi2007decay, benzi2015decay}; a large class of inverse covariance matrices with row- and column-sparse structures satisfy this condition. 

To streamline the presentation, define $\Theta_t^{\min} = \min_{(i,j)\in\mathcal{S}_t}|[\Theta_t]_{ij}|$ and $\Delta\Theta_t^{\min} = \min_{(i,j)\in\mathcal{D}_t}|[\Theta_t-\Theta_{t-1}]_{ij}|$, where $\mathcal{S}_t$ and $\mathcal{D}_t$ are defined as in Theorem~\ref{thm_deterministic}. We assume that $\Theta_t^{\min}$ and $\Delta\Theta_t^{\min}$ are independent of $d$ and $T$.

\begin{theorem}\label{cor_GMRF}
	Consider a sparsely-changing GMRF and let $\zeta = \max\{\log_d(T+1), 1\}$. Given an arbitrary $\tau>\zeta+2$, let $N_t\gtrsim{s(q,d)}^{\frac{2}{1-q}}\tau\log d$.
	% \begin{align}
	%     % N_t\gtrsim\left(\frac{\kappa_1\kappa_3}{\kappa_2}\right)^2{s(q,d)}^{\frac{2}{1-q}}\zeta\log(d\vee N_t)
	%     N_t\gtrsim{s(q,d)}^{\frac{2}{1-q}}\zeta\log d
	% \end{align}
	Then, with the approximate backward mapping $\widetilde F^*(\widehat{\Sigma}_t) = [\texttt{ST}_{\nu_t}(\widehat{\Sigma}_t)]^{-1}$, and parameters $\nu_t \asymp \sqrt{\frac{\tau\log d}{N_t}}$ and $\lambda_t \asymp \sqrt{\frac{\tau\log d}{N_t}}$,
	% \begin{align}
	%     % \nu_t \asymp \kappa_3\sqrt{\frac{\zeta\log\left(d\vee N_t\right)}{N_t}}, \qquad\qquad \lambda_t \asymp \frac{\kappa_1\kappa_3}{\kappa_2}\sqrt{\frac{\zeta\log\left(d\vee N_t\right)}{N_t}},
	%     \nu_t \asymp \sqrt{\frac{\zeta\log d}{N_t}}, \qquad\qquad \lambda_t \asymp \sqrt{\frac{\zeta\log d}{N_t}},
	% \end{align}
	the estimates $\{\widehat{\Theta}_t\}_{t=0}^T$ obtained from~\eqref{generalopt} satisfy the following statements for all $t = 0,1,\dots,T$, with probability of at least $1-4d^{-\tau+\zeta+2}$:
	\begin{itemize}
		\item[-] \textit{(Sparsistency)} We have $\mathrm{supp}\left(\widehat{\Theta}_t\right) = \mathrm{supp}\left({\Theta}^*_t\right)$ and $\mathrm{supp}\left(\widehat{\Theta}_t-\widehat{\Theta}_{t-1}\right) = \mathrm{supp}\left({\Theta}^*_t-{\Theta}^*_{t-1}\right)$.
		% 		\begin{align*}
		% 		\mathrm{supp}\left(\widehat{\Theta}_t\right) &= \mathrm{supp}\left({\Theta}^*_t\right)\\
		% 		\mathrm{supp}\left(\widehat{\Theta}_t-\widehat{\Theta}_{t-1}\right) &= \mathrm{supp}\left({\Theta}^*_t-{\Theta}^*_{t-1}\right).
		% 		\end{align*}
		\item[-] (Estimation error) We have 
		{\small
			\begin{align*}
			\hspace{-7mm}\|\widehat{\Theta}_t\!-\!\Theta_t^*\|_{\linf}&\!\lesssim\! \sqrt{\frac{\tau\log d}{N_t}},\;\|\widehat{\Theta}_t\!-\!\Theta_t^*\|_F\lesssim\! \sqrt{\frac{\tau|\mathcal{S}_t|\log d}{N_t}}.
			\end{align*} }
	\end{itemize}
\end{theorem}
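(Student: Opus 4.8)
The plan is to derive Theorem~\ref{cor_GMRF} as a probabilistic instantiation of the deterministic guarantee in Theorem~\ref{thm_deterministic}. The entire stochastic content reduces to showing that, for the stated sample complexity and the choices $\nu_t\asymp\sqrt{\tau\log d/N_t}$ and $\lambda_t\asymp\sqrt{\tau\log d/N_t}$, the three hypotheses of Theorem~\ref{thm_deterministic} hold simultaneously for all $t$ with probability at least $1-4d^{-\tau+\zeta+2}$; the conclusions then follow by simply invoking that theorem. The crux is thus a high-probability element-wise accuracy bound on the proxy backward mapping, namely $\|\Theta_t^*-[\texttt{ST}_{\nu_t}(\widehat{\Sigma}_t)]^{-1}\|_{\linf}\lesssim\sqrt{\tau\log d/N_t}$, which controls the first (feasibility) hypothesis.

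The heart of the argument, and the step I expect to be most delicate, is this max-norm control of the proxy inverse. I would start from the perturbation identity $[\texttt{ST}_{\nu_t}(\widehat{\Sigma}_t)]^{-1}-\Theta_t^* = [\texttt{ST}_{\nu_t}(\widehat{\Sigma}_t)]^{-1}\big(\Sigma_t^*-\texttt{ST}_{\nu_t}(\widehat{\Sigma}_t)\big)\Sigma_t^{*-1}$ and bound its $\linf$ norm via the mixed-norm inequality $\|AMB\|_{\linf}\le \|A\|_{\infty}\,\|M\|_{\linf}\,\|B\|_{1}$, where $\|\cdot\|_{\infty}$ and $\|\cdot\|_{1}$ denote the induced $\infty$- and $1$-norms. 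Three quantities must then be controlled: (i) the induced $1$-norm $\|\Sigma_t^{*-1}\|_{1}$, which equals $\|\Theta_t^*\|_{\infty}\le\kappa_1$ by symmetry and Assumption~\ref{assum1}; (ii) the max-norm error $\|\Sigma_t^*-\texttt{ST}_{\nu_t}(\widehat{\Sigma}_t)\|_{\linf}$, which I split into $\|\Sigma_t^*-\widehat{\Sigma}_t\|_{\linf}+\|\widehat{\Sigma}_t-\texttt{ST}_{\nu_t}(\widehat{\Sigma}_t)\|_{\linf}$, where sub-Gaussian tail bounds for Gaussian samples give $\|\Sigma_t^*-\widehat{\Sigma}_t\|_{\linf}\lesssim\sqrt{\tau\log d/N_t}$ with the requisite probability while the soft-thresholding bias is at most $\nu_t\asymp\sqrt{\tau\log d/N_t}$; and (iii) the induced $\infty$-norm $\|[\texttt{ST}_{\nu_t}(\widehat{\Sigma}_t)]^{-1}\|_{\infty}$, which I would show is bounded with high probability by arguing that $\texttt{ST}_{\nu_t}(\widehat{\Sigma}_t)$ is close to $\Sigma_t^*$ in operator norm and that $\Sigma_t^*$ has a well-conditioned inverse, $\|\Sigma_t^{*-1}\|_{\infty}\le 1/\kappa_2$, by the second part of Assumption~\ref{assum1}.

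It is precisely in step (iii) that Assumption~\ref{assum2} and the sample complexity enter. To guarantee invertibility and a uniformly bounded inverse of the thresholded estimator, I would invoke the Bickel--Levina-type operator-norm rate for thresholding estimators of weakly sparse covariance matrices, $\|\texttt{ST}_{\nu_t}(\widehat{\Sigma}_t)-\Sigma_t^*\|_{2}\lesssim s(q,d)\,(\tau\log d/N_t)^{(1-q)/2}$, and require this deviation to fall below a constant multiple of $\kappa_2$; rearranging this requirement yields exactly $N_t\gtrsim s(q,d)^{2/(1-q)}\tau\log d$, after which a Neumann-series argument bounds $\|[\texttt{ST}_{\nu_t}(\widehat{\Sigma}_t)]^{-1}\|_{\infty}$. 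Combining (i)--(iii) delivers the target max-norm bound, which I absorb into the choice $\lambda_t\asymp\sqrt{\tau\log d/N_t}$ so that it strictly dominates $\|\Theta_t^*-\widetilde F^*(\widehat{\Sigma}_t)\|_{\linf}$, establishing the first hypothesis.

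Finally, I would verify the two gap hypotheses and assemble the conclusion. Since $\Theta_t^{\min}$ and $\Delta\Theta_t^{\min}$ are assumed independent of $d$ and $T$ while $\lambda_t\to 0$ as $N_t$ grows, the sample-complexity lower bound forces $2\lambda_t\le\Theta_t^{\min}$ and $2\lambda_t+2\lambda_{t-1}\le\Delta\Theta_t^{\min}$ for $N_t$ large enough, so the remaining hypotheses hold. Applying Theorem~\ref{thm_deterministic} yields sparsistency and $\|\widehat{\Theta}_t-\Theta_t^*\|_{\linf}\le 2\lambda_t$; the Frobenius bound follows from its $\ell_2$ statement, since under sparsistency the error is supported on $\mathcal{S}_t$, giving $\|\widehat{\Theta}_t-\Theta_t^*\|_F\le 2\sqrt{|\mathcal{S}_t|}\lambda_t$. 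The last ingredient is the union bound over $t=0,\dots,T$: each time step fails with probability at most $4d^{-(\tau-2)}$, and since $T+1\le d^{\zeta}$ by definition of $\zeta$, the total failure probability is at most $4d^{\zeta}\cdot d^{-(\tau-2)}=4d^{-\tau+\zeta+2}$, matching the claim and explaining the hypothesis $\tau>\zeta+2$. The main obstacle throughout is the simultaneous max-norm and operator-norm control of the thresholded sample covariance and its inverse in step (iii); once that is in place, the rest is bookkeeping with the deterministic theorem and the union bound.
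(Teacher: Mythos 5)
Your proposal is correct and follows essentially the same route as the paper's proof: reduce to Theorem~\ref{thm_deterministic}, bound $\left\|\Theta_t^*-[\texttt{ST}_{\nu_t}(\widehat{\Sigma}_t)]^{-1}\right\|_{\linf}$ via the resolvent identity and mixed induced-norm bounds, split the thresholded covariance error into sampling error plus bias $\nu_t$, use the weak-sparsity (Bickel--Levina-type) deviation bound together with Assumption~\ref{assum1} to obtain $\left\|[\texttt{ST}_{\nu_t}(\widehat{\Sigma}_t)]^{-1}\right\|_{\infty}\leq 2/\kappa_2$ under exactly the stated sample complexity, then verify the gap conditions and apply a union bound with $T+1\lesssim d^{\zeta}$. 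The only cosmetic difference is that the paper runs your step (iii) directly in the induced $\infty$-norm (matching the $\kappa_2$ condition of Assumption~\ref{assum1}, via the lower bound $\|\texttt{ST}_{\nu_t}(\widehat{\Sigma}_t)w\|_\infty\geq(\kappa_2-\|\texttt{ST}_{\nu_t}(\widehat{\Sigma}_t)-\Sigma_t\|_\infty)\|w\|_\infty$) rather than through an operator-norm bound plus Neumann series, which for symmetric matrices is interchangeable with your argument.
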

% \begin{proof}
% 	The proof is provided in Appendix~\ref{app_cor_GMRF}.
% \end{proof}

Theorem~\ref{cor_GMRF} is a direct consequence of Theorem~\ref{thm_deterministic} and provides, to the best of our knowledge, the first non-asymptotic guarantee on the inference of sparsely-changing GMRFs with an arbitrary length of time horizon $T$. In particular, it shows that the proposed optimization~\eqref{generalopt} guarantees small estimation error and zero mismatch error for sparsely-changing GMRFs, provided that $N_t$ scales logarithmically with the dimension of the precision matrices. In the static setting ($T=0$), the derived bound recovers the existing results on the sample complexity of learning static GMRFs~\cite{ravikumar2011high, lam2009sparsistency, rothman2008sparse}. 

\subsection{Sparsely-and-smoothly-changing GMRFs} In many applications, such as financial markets and motion detection in video frames, the associated graphical model should be learned ``on-the-go'', as the data arrives with a continuously changing graphical model. Under such circumstances, one may have access to few (or even one) samples at each time. 
% This naturally gives rise to the following question:
% {Can we infer sparsely-changing GMRFs with as few as one sample per time?}
% At the first glance, one may speculate a negative answer to this question. However, we show that one can still accurately estimate the sparsely-changing GMRFs with only one sample per time, provided that the changes in the underlying graphical model are smooth. 
% \noteag{Since we are already short on space, I think this paragraph asking research questions can be safely dropped.}

Suppose that the precision matrices change smoothly over time. Such smooth changes can be modeled via a continuous function $\Theta(x): [0,1]\to \mathbb{R}^{d\times d}$ with uniformly bounded element-wise second derivatives $[\Theta(x)_{ij}]'' = \frac{d^2\Theta(t)_{ij}}{dx^2}$, such that $\Theta_t^* = \Theta(t/T)$ \cite{greenewald2017time, zhou2010time}. If $\Theta(t) \succeq aI$ for every $t\in[0,1]$ and some $a>0$, then the covariance matrix $\Sigma(t) = \Theta(t)^{-1}$ is well-defined and smooth. Then, the problem of inferring the time-varying GMRF reduces to estimating a sequence of precision matrices $\{\Theta(0),\Theta(1/T),\dots,\Theta(1)\}$ given the samples $X_t\sim\mathcal{N}(0,\Sigma(t/T))$.
% , assuming that: (i) the element-wise second derivatives $[\Theta(x)_{ij}]''$ are uniformly bounded over the domain $[0,1]$, and (ii) $\Theta(t/T)-\Theta((t+1)/T)$ is sparse for every $t=1,\dots, T$. 
To alleviate the scarcity of samples, \cite{greenewald2017time, zhou2010time} leverage the smoothness of the precision matrices, by taking the weighted average of the samples over time, where the weights are obtained from a nonparametric kernel. In particular, consider the weighted sample covariance matrix~$\widehat \Sigma^{w}_t$: 
\small
\begin{align}\label{eq_kernal}
\widehat \Sigma^{w}_t = \sum_{s=0}^tw(s,t)\Sigma_s,\text{ where } w(s,t) = \frac{1}{Th}K\left(\frac{s-t}{Th}\right)
\end{align}\normalsize
and $K(\cdot)$ is a symmetric nonnegative kernel that satisfies a set of mild conditions which hold for most standard kernels, including (truncated) Gaussian kernel. These conditions are delineated in the appendix.
% \begin{assumption}[~\cite{greenewald2017time}]
% 	The kernel $K(x)$ satisfies the following conditions:
% 	\begin{itemize}
% 		\item[-] $\int_{-1}^{1} K(x)dx=1$,
% 		\item[-] $\int_{-1}^{1} x^2K(x)dx\leq\infty$,
% 		\item[-] $K(x)$ is uniformly bounded on its support,
% 		\item[-] $\sup_{-1\leq x\leq 1}K''(x/h) = \mathcal{O}(h^{-4})$.
% 	\end{itemize}
% \end{assumption}
% The above assumptions are satisfied for many standard kernels, including (truncated) Gaussian kernel. Roughly speaking, the chosen kernel assigns larger weights to the samples with smaller temporal distance to $t$, thereby leveraging the smoothness of $\Sigma(t)$. Moreover, the parameter $h$ is the \textit{bandwidth} of the kernel, controlling the decay rate of the weights. 
The following assumptions are the counterparts of Assumptions~\ref{assum1} and~\ref{assum2} for sparsely-and-smoothly-changing GMRF.

\begin{assumption}[Bounded norm]\label{assum12}
	There exist constant numbers $\kappa_1<\infty$, $\kappa_2>0$, and $\kappa_3<\infty$ such that
	\begin{align*}
	&\|\Theta(t/T)\|_\infty\leq \kappa_1,\; \inf_{w:\|w\|_\infty=1}\|\Sigma(t/T)w\|_\infty\geq \kappa_2,\;\|\Sigma(t/T)\|_{\linf}\leq \kappa_3,
	\end{align*}
	for every $t \in[0,T]$.
\end{assumption}
% Assumption~\ref{assum1} implies that the true covariance matrices and their inverses have bounded norms. 
\begin{assumption}[Weak sparsity]\label{assum22} The covariance matrix $\Sigma(t/T)$ satisfies $\max_{i}\sum_{j=1}^d|[\Sigma(t/T)]_{ij}|^q\leq s(q,d)$, for some $0\leq q<1$, and every $t \in[0,T]$.
\end{assumption}
Under Assumptions~\ref{assum12}-\ref{assum22}, we present the analog of Theorem~\ref{cor_GMRF} for sparsely-and-smoothly-changing GMRFs. 
% \noteag{Corollary~\ref{cor_GMRF} is not presented as a corollary, but is actually Theorem~\ref{cor_GMRF}}
\begin{theorem}\label{cor_GMRF_ker}
	Consider a sparsely-and-smoothly-changing GMRF with one sample per time, let $\zeta = \max\{\log_d(T+1), 1\}$, and suppose that the sample covariance matrices are constructed according to~\eqref{eq_kernal} with $h\asymp T^{-1/3}$. Given an arbitrary $\tau>\zeta+2$, let $T\gtrsim{s(q,d)}^{\frac{3}{1-q}}(\tau\log d)^{3/2}$.
	% \begin{align}
	%     % N_t\gtrsim\left(\frac{\kappa_1\kappa_3}{\kappa_2}\right)^2{s(q,d)}^{\frac{2}{1-q}}\zeta\log(d\vee N_t)
	%     T\gtrsim{s(q,d)}^{\frac{3}{1-q}}(\zeta\log d)^{3/2}
	% \end{align}
	Then, with the approximate backward mapping $\widetilde F^*(\widehat{\Sigma}_t) = \texttt{ST}_{\nu_t}(\widehat{\Sigma}^w_t)$ and parameters $\nu_t \asymp {\frac{\sqrt{\tau\log d}}{T^{1/3}}}$ and $\lambda_t \asymp {\frac{\sqrt{\tau\log d}}{T^{1/3}}}$,
	% \begin{align}
	%     % \nu_t \asymp \kappa_3\sqrt{\frac{\zeta\log\left(d\vee N_t\right)}{N_t}}, \qquad\qquad \lambda_t \asymp \frac{\kappa_1\kappa_3}{\kappa_2}\sqrt{\frac{\zeta\log\left(d\vee N_t\right)}{N_t}},
	%     \nu_t \asymp {\frac{\sqrt{\zeta\log d}}{T^{1/3}}}, \qquad\qquad \lambda_t \asymp {\frac{\sqrt{\zeta\log d}}{T^{1/3}}},
	% \end{align}
	the estimates $\{\widehat{\Theta}_t\}_{t=0}^T$ obtained from~\eqref{generalopt} satisfy the following statements for all $T=0,1,\dots, T$ with probability of at least $1-d^{-\tau+\zeta+2}$:
	\begin{itemize}
		\item[-] \textit{(Sparsistency)} We have $\mathrm{supp}\left(\widehat{\Theta}_t\right) = \mathrm{supp}\left({\Theta}^*_t\right)$ and $\mathrm{supp}\left(\widehat{\Theta}_t-\widehat{\Theta}_{t-1}\right) = \mathrm{supp}\left({\Theta}^*_t-{\Theta}^*_{t-1}\right)$.
		\item[-] (Estimation error) We have
		{\small
			\begin{align*}
			\hspace{-7mm}\|\widehat{\Theta}_t\!-\!\Theta_t^*\|_{\linf}&\!\lesssim\! \sqrt{\frac{\tau\log d}{T^{2/3}}},\;\|\widehat{\Theta}_t\!-\!\Theta_t^*\|_F\lesssim\! \sqrt{\frac{\tau|\mathcal{S}_t|\log d}{T^{2/3}}}.
			\end{align*} 
		}
	\end{itemize}
\end{theorem}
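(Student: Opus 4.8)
The plan is to derive Theorem~\ref{cor_GMRF_ker} as a probabilistic instantiation of the deterministic guarantee in Theorem~\ref{thm_deterministic}, exactly as Theorem~\ref{cor_GMRF} was obtained in the non-smooth case. It suffices to show that, with probability at least $1-d^{-\tau+\zeta+2}$ and simultaneously over all $t$, the approximate backward mapping satisfies
\[
\big\|\Theta_t^*-\widetilde F^*(\widehat\Sigma_t^w)\big\|_{\linf}\lesssim\sqrt{\tfrac{\tau\log d}{T^{2/3}}},
\]
since this immediately yields the first hypothesis of Theorem~\ref{thm_deterministic} for the choice $\lambda_t\asymp\sqrt{\tau\log d}/T^{1/3}$, and the stated estimation-error rates follow from its conclusion with $p=d(d+1)/2$. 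The second and third hypotheses (the min-gap conditions on $\Theta_t^{\min}$ and $\Delta\Theta_t^{\min}$) hold automatically once $\lambda_t\to 0$, i.e.\ for $T$ large enough, because these gaps are assumed independent of $d$ and $T$. Thus the entire argument reduces to a high-probability element-wise bound on the error of the thresholded kernel-weighted covariance.

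The heart of the proof is a bias--variance analysis of the kernel-weighted empirical covariance $\widehat\Sigma_t^w=\sum_{s=0}^t w(s,t)\,X_sX_s^\top$ defined in~\eqref{eq_kernal}. I would first split $\widehat\Sigma_t^w-\Sigma(t/T)$ into a deterministic bias $\sum_s w(s,t)\big(\Sigma(s/T)-\Sigma(t/T)\big)$ and a stochastic fluctuation $\widehat\Sigma_t^w-\mathbb{E}[\widehat\Sigma_t^w]$. For the bias, a second-order Taylor expansion of $\Sigma(\cdot)$ together with the symmetry and vanishing first-moment conditions on the kernel $K$ cancels the linear term, so the uniformly bounded second derivatives leave an element-wise bias of order $h^2$. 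For the fluctuation, each entry is a weighted sum of products of (sub-)Gaussian coordinates; a Bernstein-type tail bound with effective sample size $1/\sum_s w(s,t)^2\asymp Th$ controls a single entry, and a union bound over the $d^2$ entries and the $T+1$ time indices gives an element-wise fluctuation of order $\sqrt{\tau\log d/(Th)}$. The factor $T+1$ is absorbed into the exponent through $\zeta=\max\{\log_d(T+1),1\}$, producing the probability $1-d^{-\tau+\zeta+2}$.

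It remains to pass from the $\linf$-error on the covariance to the $\linf$-error on its precision proxy, and to balance the two error terms. Here I would reuse the backward-mapping machinery underlying Theorem~\ref{cor_GMRF} (following~\cite{yang2014elementary}): the weak-sparsity Assumption~\ref{assum22} ensures that soft-thresholding at level $\nu_t\asymp\sqrt{\tau\log d}/T^{1/3}$ removes the genuinely negligible entries while distorting $\Sigma(t/T)$ by at most $O(s(q,d)\,\nu_t^{1-q})$, and the bounded-norm Assumption~\ref{assum12} (in particular the lower bound $\kappa_2$) guarantees that the thresholded matrix stays well-conditioned, so that matrix inversion is locally Lipschitz and transfers the covariance error to $\Theta_t^*$ up to constants. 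Finally, choosing $h\asymp T^{-1/3}$ equalizes the smoothing bias $O(h^2)=O(T^{-2/3})$ and the fluctuation $O(\sqrt{\tau\log d/(Th)})=O(\sqrt{\tau\log d/T^{2/3}})$, which yields the claimed rate and forces the sample requirement $T\gtrsim s(q,d)^{3/(1-q)}(\tau\log d)^{3/2}$ needed for the thresholding distortion to be subdominant. I expect the main obstacle to be the concentration step for the weighted covariance: handling the smoothing bias of a genuinely time-varying covariance and controlling the variance of a \emph{weighted} sum of quadratic terms with the correct effective sample size $Th$, while simultaneously keeping $\nu_t$ large enough to beat the noise yet small enough that the weak-sparsity distortion remains lower order.
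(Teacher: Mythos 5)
Your proposal follows essentially the same route as the paper: reduce to Theorem~\ref{thm_deterministic} by proving a high-probability $\linf$ bound on the error of the thresholded, inverted kernel-weighted covariance; obtain that bound from a bias--variance split of $\widehat\Sigma_t^w$ with effective sample size $Th$ (the paper imports exactly these two ingredients as Lemma~\ref{l_h} and Lemma~\ref{l_const} from \cite{greenewald2017time}); reuse the soft-thresholding and inversion machinery of Theorem~\ref{cor_GMRF} under Assumptions~\ref{assum12}--\ref{assum22}, which is where $T\gtrsim s(q,d)^{3/(1-q)}(\tau\log d)^{3/2}$ enters to keep $\left\|[\texttt{ST}_{\nu_t}(\widehat\Sigma_t^w)]^{-1}\right\|_\infty\leq 2/\kappa_2$; and close with a union bound over $t$ absorbed into the exponent via $\zeta$. (Your use of the \emph{inverse} of the thresholded matrix matches the paper's proof, even though the theorem statement omits the inversion.) One sub-step of your sketch is inaccurate, though harmlessly so: you claim the smoothing bias is $O(h^2)$ by a second-order Taylor expansion with the linear term cancelled by kernel symmetry. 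But the estimator \eqref{eq_kernal} averages only over past times $s\leq t$, so the effective kernel at each $t$ is one-sided and the linear term does \emph{not} cancel; the correct bias bound (Lemma~\ref{l_h}) is $O\left(h+\frac{1}{T^2h^5}\right)$, first order in $h$ plus a discretization term from the Riemann-sum approximation (which is why the kernel assumptions control $K''$). With $h\asymp T^{-1/3}$ both bias terms are of order $T^{-1/3}$, matching the fluctuation $\sqrt{\tau\log d/(Th)}$; this balancing, not your $h^2$-versus-noise trade-off, is what dictates $h\asymp T^{-1/3}$ (under a genuine $O(h^2)$ bias the optimal bandwidth would be $h\asymp T^{-1/5}$ with a faster rate). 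Since your claimed bias $O(h^2)=O(T^{-2/3})$ is smaller than the true $O(T^{-1/3})$, and both are within the target rate $\sqrt{\tau\log d}/T^{1/3}$, your final bound and all downstream conclusions (sparsistency via the gap conditions, Frobenius bound via $|\mathcal{S}_t|$) go through unchanged.
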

% \begin{proof}
% 	The proof is provided in the Appendix~\ref{app_cor_GMRF_ker}.
% \end{proof}
Theorem~\ref{cor_GMRF_ker} shows how the smoothness assumption on the true covariance matrix can be used to construct the backward mappings using the samples collected during the \textit{entire} time horizon, thereby significantly reducing the sample complexity of learning time-varying GMRFs. In particular, leveraging the smoothness of the covariance matrix can reduce the minimum required number of samples from $\mathcal{O}(T\log d)$ to $\mathcal{O}((\log d)^{1.5})$. On the other hand, Theorem~\ref{cor_GMRF} does not impose any lower bound on $T$, and its estimation error decays faster in terms of the sample size.

% Moreover, the sufficient statistics is characterized by $\{X_t, X_tX_t^\top\}$ with mean parameters $(\nu_t,\Sigma_t) = \left(\mathbb{E}_{\theta_t}[X_t], \mathbb{E}_{\theta_t}[X_tX_t^\top]\right)$.

\section{Solution Method}\label{sec:algorithm}
In this section, we describe the proposed algorithm for solving~\eqref{generalopt}. For the simplicity of notation, we define the lower bound and upper bound vectors $l_t$ and $u_t$ as $l_{t;k} =  [\widetilde F^*(\widehat{\mu}_t)]_{k}-\lambda_t$ and $u_{t;k} =  [\widetilde F^*(\widehat{\mu}_t)]_{k}+\lambda_t$, for every $k=1,\dots, p$. 
% Then, the optimization~\eqref{generalopt} can be written as
% \begin{subequations}\label{generalopt2}
% 	\begin{align}
% 	\left\{\widehat{\theta}_t\right\}_{t=0}^T\!\!\in\underset{\{\theta_t\}_{t=0}^T}{\arg\min}\ \ & (1\!-\!\alpha)\sum_{t=0}^T\|\theta_t\|_0\!+\!\alpha\sum_{t=1}^T\|\theta_t\!-\!\theta_{t-1}\|_0\\
% 	\text{s.t.}\ \ & l_t\leq \theta_t \leq u_t\quad \forall 0\leq t\leq T.\label{generalopt2_bounds}
% 	\end{align}
% \end{subequations}
The following fact plays a key role in our analysis.
\begin{fact}\label{fact}
	An optimal solution $\left\{\widehat{\theta}_{t}\right\}_{t=0}^T$ of \eqref{generalopt} satisfies for every $k=1,\dots,p$,
	\begin{subequations}\label{opt_i}
		\begin{align}
		\left\{\widehat{\theta}_{t;k}\right\}_{t=0}^T\in&\underset{\{\theta_{t;k}\}_{t=0}^T}{\arg\min}\ \  (1-\alpha)\sum_{t=0}^T\mathbbm{1}\{\theta_{t;k}\not=0\}+\alpha\sum_{t=1}^T\mathbbm{1}\{\theta_{t;k}-\theta_{t-1;k}\not=0\}\\
		\mathrm{s.t.}\ \ & l_{t;k}\leq \theta_{t;k} \leq u_{t;k}\quad \forall 0\leq t\leq T
		\end{align}
	\end{subequations}
\end{fact}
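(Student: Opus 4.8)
The plan is to exploit the \emph{separability} of both the objective and the feasible region of~\eqref{generalopt} across the coordinate index $k$. First I would rewrite each $\ell_0$ term in the objective of~\eqref{generalopt} as a sum of per-coordinate indicators: since $\|\theta_t\|_0 = \sum_{k=1}^p \mathbbm{1}\{\theta_{t;k}\neq 0\}$ and $\|\theta_t-\theta_{t-1}\|_0 = \sum_{k=1}^p \mathbbm{1}\{\theta_{t;k}-\theta_{t-1;k}\neq 0\}$, interchanging the two finite sums over $t$ and $k$ shows that the objective of~\eqref{generalopt} equals $\sum_{k=1}^p J_k\bigl(\{\theta_{t;k}\}_{t=0}^T\bigr)$, where $J_k$ is precisely the objective of the $k$-th subproblem~\eqref{opt_i} (with $\alpha=\gamma$).

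Next I would verify that the feasible region factorizes as a Cartesian product across coordinates. By the definition of the $\ell_\infty/\ell_\infty$ norm, the constraint $\|\theta_t - \widetilde F^*(\widehat\mu_t)\|_{\linf}\leq \lambda_t$ is equivalent to the collection of scalar inequalities $|\theta_{t;k}-[\widetilde F^*(\widehat\mu_t)]_k|\leq \lambda_t$ for all $k$, which in turn is exactly the box constraint $l_{t;k}\leq \theta_{t;k}\leq u_{t;k}$ arising from the definitions of $l_{t;k}$ and $u_{t;k}$. Crucially, this constraint couples the time indices $t$ within a fixed coordinate $k$, but never couples distinct coordinates. Hence the feasible set of~\eqref{generalopt} is the product over $k$ of the feasible sets of~\eqref{opt_i}.

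With separability of both the objective and the constraints established, the claim follows from the elementary principle that minimizing a sum of functions over a product domain decomposes into independent minimizations of each summand. Concretely, I would argue by contradiction: suppose $\{\widehat\theta_t\}_{t=0}^T$ is optimal for~\eqref{generalopt}, yet for some coordinate $k_0$ the restriction $\{\widehat\theta_{t;k_0}\}_{t=0}^T$ fails to minimize~\eqref{opt_i}. Then there is a box-feasible sequence $\{\theta'_{t;k_0}\}_{t=0}^T$ with $J_{k_0}(\{\theta'_{t;k_0}\}) < J_{k_0}(\{\widehat\theta_{t;k_0}\})$. Replacing the $k_0$-th coordinate of $\widehat\theta$ by this sequence, while leaving every other coordinate untouched, produces a point that is still feasible for~\eqref{generalopt} (the boxes of the other coordinates are unaffected) and whose objective is strictly smaller by $J_{k_0}(\{\widehat\theta_{t;k_0}\}) - J_{k_0}(\{\theta'_{t;k_0}\}) > 0$, contradicting optimality.

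I do not anticipate a genuine obstacle here, as the argument is purely structural. The only point requiring a moment's care is confirming that the two $\ell_0$-type penalties—despite being written as norms of whole vectors—split additively into per-coordinate indicators, and that the $\ell_\infty/\ell_\infty$ constraint likewise decouples into independent boxes; once both are seen to be coordinate-separable, the decomposition into the $p$ subproblems~\eqref{opt_i} is immediate. This separability is precisely what later enables each subproblem to be recast and solved as an independent shortest-path problem on a directed acyclic graph.
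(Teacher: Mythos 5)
Your proposal is correct and formalizes exactly the reasoning the paper leaves implicit: Fact~\ref{fact} is stated without proof precisely because the objective of~\eqref{generalopt} is a sum of per-coordinate indicator terms and the $\linf$ constraints decouple into independent boxes $[l_{t;k},u_{t;k}]$, so the feasible set is a Cartesian product across $k$ and optimality passes to each coordinate subproblem by the standard exchange argument you give. Your contradiction step (replacing a suboptimal coordinate block while preserving feasibility of all others) is the right way to make this rigorous, and your identification $\alpha=\gamma$ correctly reconciles the notation between~\eqref{generalopt} and~\eqref{opt_i}.
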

Fact~\ref{fact} implies that~\eqref{generalopt} decomposes into the smaller subproblems~\eqref{opt_i}. Therefore, our main focus is devoted to solving each subproblem independently. To further simplify the notation, we drop the subscript $k$ from~\eqref{opt_i}, whenever it is chosen arbitrarily.  Let $\texttt{OPT}_{i\to j}(\alpha)$ denote, for a given $k$, the truncated problem from index $i$ to $j$ with the regularization coefficient $\alpha$, defined as
\begin{subequations}\label{opt_ij}
	\begin{align}
	{f_{i\to j}^*=}\underset{\{\theta_{t}\}_{t=i}^j}{\min}\ \ & (1-\alpha)\sum_{t=i}^j\mathbbm{1}\{\theta_{t}\not=0\}+\alpha\sum_{t=i+1}^j\mathbbm{1}\{\theta_{t}-\theta_{t-1}\not=0\}\\
	\mathrm{subject\ to}\ \ & l_{t}\leq \theta_{t} \leq u_{t}\quad \forall i\leq t\leq j. \label{opt_ij_bounds}
	\end{align}
\end{subequations}
Let the objective function {for a candidate solution $\theta$} be denoted as $f_{i\to j}(\theta)${; by convention, we let  $f_{i\to j}(\theta)=0$ whenever $j<i$}. Moreover, the optimal objective value and the set of optimal solutions to $\texttt{OPT}_{i\to j}(\alpha)$ are respectively denoted as $f_{i\to j}^*$ and $\mathcal{X}_{i\to j}^*$. Similarly, $\widehat\theta_{i\to j}\in\mathcal{X}_{i\to j}^*$ is used to denote an optimal solution to $\texttt{OPT}_{i\to j}(\alpha)$. TWe omit the subscript $i\to j$ whenever $i=0$ and $j=T$. The $t^{\text{th}}$ feasible interval is defined as $\Delta_t=[l_t,u_t]$. Accordingly, the notation $\DI_{t\to s}$ refers to $\DI_{t\to s}\defeq\Delta_t\cap\Delta_{t+1}\cap\dots\cap\Delta_s$. 

\subsection{Special case: $\pmb{\alpha=1}$}
As the first step, we consider the special case $\alpha=1$, and provide an efficient algorithm (Algorithm~\ref{alg_greedy}) for solving $\texttt{OPT}_{0\to T}(1)$, where the sparsity is only promoted on the parameter differences (and not on the individual parameters). As will be shown later, Algorithm~\ref{alg_greedy} will be used as a subroutine in our proposed algorithm for the general case $0<\alpha<1$. At a high level, Algorithm~\ref{alg_greedy} recursively performs the following operations: at any given time $\tau$, the algorithm looks into the future to find a nonempty interval that is feasible for the longest possible time $\delta$. Then, it sets the subvector $\theta_{\tau:\delta}$ to an arbitrarily chosen element from this nonempty interval.
\begin{algorithm}
	\caption{$\texttt{Greedy}(l,u,\tau,T)$}
	\label{alg_greedy}
	\begin{algorithmic}[1]
% 		\STATE{{\bf Input:} $l$, $u$, $\tau$, and $T$}
		\STATE{{\bf Output:} Solution $\theta_{\tau\to T}^{\texttt{Greedy}}$, the objective value $f^{\texttt{Greedy}}_{\tau\to T}$ to $\texttt{OPT}_{\tau\to T}(1)$, and the index set $\Gamma$ of maximal nonempty intervals}
		\vspace{2mm}
		\STATE{Find largest $\delta$ such that $\DI_{\tau\to\delta}\not=0$;}\label{alg_greedy:delta}
		\STATE{Set $\theta_{\tau:\delta}^{\texttt{Greedy}} =\eta$ for some $\eta\in\DI_{\tau\to\delta}$;}
		% 		\IF{$\delta = T-1$}
		% 		\STATE{Set $\theta_T^{\texttt{Greedy}} = \eta$ for some $\eta\in\Delta_T$} \noteag{I don't think we need this}
		\IF{$\delta\leq T-1$}
		\STATE{Execute \texttt{Greedy}$(l,u,\delta+1,T)$;}
		\ENDIF
		\STATE{Set $f^{\texttt{Greedy}}_{\tau\to T} = \sum_{t=\tau+1}^T\mathbbm{1}\{\theta^{\texttt{Greedy}}_{t}-\theta^{\texttt{Greedy}}_{t-1}\not=0\}$;}
		\STATE{Set $\Gamma \leftarrow \Gamma\cup \{\delta\}$;}
		\STATE \textbf{Return }{$\{\theta^{\texttt{Greedy}}_t\}_{t=\tau}^T$, $f^{\texttt{Greedy}}_{\tau\to T}$, and $\Gamma$;}
	\end{algorithmic}
\end{algorithm}

\begin{proposition}\label{prop_beta1}
	$\texttt{Greedy}(l,u,0,T)$ returns an {optimal solution $\{\theta^{\texttt{Greedy}}_t\}_{t=0}^T$} to $\texttt{OPT}_{0\to T}(1)$. {Moreover, the truncated solution $\{\theta^{\texttt{Greedy}}_t\}_{t=0}^j$ is optimal for $\texttt{OPT}_{0\to j}(1)$.}
\end{proposition}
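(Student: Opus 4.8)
The plan is to recast $\texttt{OPT}_{0\to T}(1)$ as a minimum \emph{interval-partitioning} problem and then establish optimality of $\texttt{Greedy}$ through a classical ``staying ahead'' argument. First I would record the reformulation. For a feasible sequence $\{\theta_t\}_{t=0}^T$, group its indices into maximal \emph{runs} of consecutive equal values; a sequence with objective value $m$ (i.e.\ $m$ jumps) has exactly $m+1$ such runs. If a run spans indices $[a,b]$ with common value $c$, feasibility forces $c\in\DI_{a\to b}$, so $\DI_{a\to b}\neq\emptyset$. Conversely, any partition of $\{0,\dots,T\}$ into $K$ contiguous blocks, each with $\DI_{a\to b}\neq\emptyset$, yields a feasible sequence with at most $K-1$ jumps, by assigning each block an arbitrary common feasible point. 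Letting $K^*(j)$ denote the minimum number of contiguous blocks needed to partition $\{0,\dots,j\}$ so that each block $[a,b]$ satisfies $\DI_{a\to b}\neq\emptyset$, these two observations show that the optimal value of $\texttt{OPT}_{0\to j}(1)$ equals $K^*(j)-1$.

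Next I would prove that $\texttt{Greedy}$ realizes a partition into exactly $K^*(T)$ blocks. Line~\ref{alg_greedy:delta} is well posed because $\DI_{\tau\to\delta}$ is nonincreasing (as a set) in $\delta$, so a largest feasible $\delta$ exists. Write the greedy block endpoints as $0=a_1,\,b_1,\,a_2=b_1+1,\,b_2,\dots$ and let an optimal partition have endpoints $a_1^*=0,\,b_1^*,\,a_2^*=b_1^*+1,\dots$ using $K^*(T)$ blocks. I would show by induction that $b_i\ge b_i^*$ for every $i$ (``greedy stays ahead''). The base case holds since $\DI_{0\to b_1^*}\neq\emptyset$, so greedy's maximal extension satisfies $b_1\ge b_1^*$. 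For the step, assuming $b_i\ge b_i^*$, either greedy has already overtaken ($b_i\ge b_{i+1}^*$, so $b_{i+1}>b_i\ge b_{i+1}^*$), or $a_{i+1}=b_i+1\le b_{i+1}^*$, in which case $a_{i+1}\ge a_{i+1}^*$ and intersecting over a subrange only enlarges the set, giving $\DI_{a_{i+1}\to b_{i+1}^*}\supseteq\DI_{a_{i+1}^*\to b_{i+1}^*}\neq\emptyset$; hence greedy extends at least to $b_{i+1}^*$. Thus greedy reaches index $T$ within $K^*(T)$ blocks, so it uses at most $K^*(T)$ blocks, and since its blocks form a feasible partition, minimality forces exactly $K^*(T)$. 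By the reformulation the returned sequence has at most $K^*(T)-1$ jumps, matching the optimum.

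Finally, for the truncation claim I would argue that the greedy run on $\{0,\dots,T\}$, restricted to $\{0,\dots,j\}$, coincides with a valid greedy run on $\{0,\dots,j\}$. Every greedy block strictly preceding the one containing $j$ is determined solely by feasible intervals indexed at most by its own right endpoint, hence is unaffected by truncation; and if $j$ lies in the greedy block $[a_m,b_m]$ (so $a_m\le j\le b_m$), then $\DI_{a_m\to j}\supseteq\DI_{a_m\to b_m}\neq\emptyset$, so on $\{0,\dots,j\}$ greedy's final block is exactly $[a_m,j]$, with the same admissible common value. The restricted sequence therefore uses $m$ feasible blocks, and applying the already-established optimality of greedy to the instance $\texttt{OPT}_{0\to j}(1)$ shows $m=K^*(j)$, so the truncated sequence attains the optimum $K^*(j)-1$.

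The main obstacle is the ``staying ahead'' induction: one must carefully exploit that $\DI_{a\to b}$ shrinks monotonically as the block grows but enlarges when its left endpoint advances, together with the mild case split according to whether greedy has already overtaken the optimal partition. Everything else reduces to the bookkeeping correspondence between jumps and block counts.
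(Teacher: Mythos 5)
Your proof is correct, but it takes a genuinely different route from the paper's. You recast $\texttt{OPT}_{0\to T}(1)$ as a minimum interval-partitioning problem and run a classical ``greedy stays ahead'' induction (the invariant $b_i\ge b_i^*$ against an arbitrary optimal partition, using that $\DI_{a\to b}$ only grows when the left endpoint advances), then handle the truncation claim separately by checking that the greedy run restricted to $\{0,\dots,j\}$ is itself a valid greedy run on the truncated instance. The paper instead extracts a lower-bound certificate directly from the greedy's own breakpoints: by maximality of each greedy block, $\DI_{\delta_{i-1}+1\to\delta_i+1}=\emptyset$, so \emph{every} feasible solution must incur at least one jump inside each segment $[\delta_{i-1}+1,\delta_i+1]$; summing over the $h$ completed segments inside $\{0,\dots,j\}$ gives $f_{0\to j}(\theta)\ge h$ for all feasible $\theta$, which greedy attains. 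The two arguments trade off as follows: the paper's certificate is shorter, avoids any comparison with an optimal solution, and proves optimality for \emph{all} prefixes $j$ simultaneously in one stroke (the uniform-in-$j$ statement is exactly what Theorem~\ref{thm_runtime_sp} later needs), whereas your approach requires the extra truncation-consistency step---your phrase ``determined solely by feasible intervals indexed at most by its own right endpoint'' should really say ``at most by its right endpoint plus one,'' since maximality of a block ending at $b$ is witnessed by $\DI_{a\to b+1}=\emptyset$; this is harmless because any block strictly preceding the one containing $j$ has $b+1\le j$. In exchange, your reformulation makes the combinatorial structure (optimal value $=K^*(j)-1$) explicit and yields the stronger structural fact that greedy's blocks dominate any optimal partition block-by-block, which the paper's argument does not state.
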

% \begin{proof}
% 	The proof is provided in Appendix~\ref{app_prop_beta1}.
% \end{proof}

\subsection{General case: $\pmb{0<\alpha<1}$}
Now, we present our main algorithm for the general case $0<\alpha<1$. To this goal, we first present the following definition.
\begin{definition}
	The set $\mathcal{Z}_{i\to j} = \{i,i+1,\dots,j\}$ is called a {\bf zero-feasible sequence} if $l_k\leq 0\leq u_k$ for every $k\in\mathcal{Z}_{i\to j}$. Moreover, the zero-feasible sequence $\mathcal{Z}_{i\to j}$ is called {\bf maximal} if it is not strictly contained within another zero-feasible sequence.
\end{definition}

Let $\mathcal{Z}_{i_1\to j_1}, \mathcal{Z}_{i_2\to j_2}, \dots, \mathcal{Z}_{i_Z\to j_Z}$ be the set of all maximal zero-feasible sequences such that $0\leq i_1\leq j_1<i_2\leq j_2<\dots<i_Z\leq j_Z\leq T$, where $Z$ is the number of maximal zero-feasible sequences. If $Z=0$, i.e., there is no zero-feasible sequence, then it is easy to see that $\sum_{t=0}^T\mathbbm{1}\{\theta_t\not=0\} = T+1$ for every feasible solution, and hence, $\texttt{Greedy}(l,u,0,T)$ leads to an optimal solution to~\eqref{opt_ij}. Another special case is when $i_1 = 0$ and $j_1=T$, i.e., zero is always feasible. In this case, the optimal solution is $\widehat\theta = 0$. Therefore, without loss of generality, suppose that $Z>0$ and either $i_1\not=0$ or $j_1\not=T$. {As will be shown later, the zero-feasible sequences play an important role in characterizing the optimal solution of \eqref{opt_ij}.
}

% {Since Lemma~\ref{prop:allZero} holds for any zero-feasible sequence, it holds in particular for all maximal zero-feasible sequences. }
Our goal {now} is to obtain an optimal solution to~\eqref{opt_ij} by solving a shortest path problem over a \textit{weighted directed acyclic graph} (DAG) whose nodes correspond to the maximal zero-feasible sequences.
In particular, consider a weighted DAG $\mathcal{G}$ with the vertex set $V = \{0,1,\dots,Z, Z+1\}$, where the vertices $k$ and $l$ are connected via a directed arc $(k,l)$ if $k<l$. Moreover, for every arc $(k,l)$, the weight $W(k,l) = 0$ if $(k,l) = (0,1),i_1=0$ or $(k,l) = (Z,Z+1),j_Z=T$, and
\begin{align}\label{eq_A}
W(k,l) =& 
(1-\alpha)(i_l-j_k-1) + \alpha f^{\texttt{Greedy}}_{j_k+1\to i_l-1}+\alpha\mathbbm{1}\{k\not=0\}+\alpha\mathbbm{1}\{l\not=T+1\}
\end{align}
otherwise, where we define  $j_{0} = -1$ and $i_{T+1} = T+1$.
\begin{algorithm}
	\caption{Algorithm for solving~\eqref{opt_ij}}
	\label{alg_path}
	\begin{algorithmic}[1]
% 		\STATE{{\bf Input:} $l$, $u$, and $\alpha$}
		\STATE{{\bf Output:} Optimal solution $\widehat\theta$ and the objective value $f^*$ to $\texttt{OPT}_{0\to T}(\alpha)$}
		\vspace{2mm}
		\STATE{Find the maximal zero-feasible sequences;
% 		$\bar{\mathcal{Z}} = \{\mathcal{Z}_{i_1\to j_1}, \mathcal{Z}_{i_2\to j_2}, \dots, \mathcal{Z}_{i_Z\to j_Z}\}$;
		}
		\STATE{Construct the DAG $\mathcal{G}$ with weights defined as~\eqref{eq_A};
% 		the node set $V=\{0,1,\dots,Z+1\}$, arc set $A = \{(k,l): k<l\}$, and the weights defined as~\eqref{eq_A};
		}\label{alg_path_construct}
		\STATE{Find the shortest path $p = (v_1,v_2,\dots,v_r)$ between the vertices $0$ and $Z+1$ in $\mathcal{G}$;}\label{alg_path_solve}
		\STATE{Set $\widehat{\theta}_{j_{v_i}+1:i_{v_{i+1}}-1} = \theta^{\texttt{Greedy}}_{j_{v_i}+1\to i_{v_{i+1}}-1}$ and $\widehat{\theta}_{i_{v_l}:j_{v_{l+1}}}  = 0$ for every $l = 1,2,\dots, r$;}
		% 		\IF{there is no zero-feasible sequence}
		% 		\STATE{$(\theta^{\texttt{Greedy}},f^{\texttt{Greedy}}) = \texttt{Greedy}(\texttt{OPT}_{0\to T}(1))$,}
		% 		\STATE{Set $\theta^* = \theta^{\texttt{Greedy}}$ and $f^* = (1-\alpha)(T+1)+\alpha f^{\texttt{Greedy}}$,} \noteag{I don't think we need this case}
		% 		\ELSIF{$i_1 = 0$ \textbf{and} $j_1=T$}
		% 		\STATE{Set $\theta^*=0$ and $f^* = 0$,}\noteag{I don't think we need this case either}
		% 		\ELSE 
		% % 		\STATE{$A = \texttt{ConstructWeight}(l,u,\alpha)$,}
		% % 		\STATE{$(\mathcal{P},f^*) = \texttt{Dijkstra}(A)$,}
		% 		\STATE{Construct adjacency matrix $A = \texttt{ConstructWeight}(l,u,\alpha)$,}\label{alg_path_construct}
		% 		\STATE{Find the shortest path between vertices $0$ and $Z+1$ in $\mathcal{G}(V,A)$}\label{alg_path_solve}
		% 		\ENDIF
		\STATE \textbf{Return }{$\{\widehat\theta\}_{t=0}^T$ and $f^*$}
	\end{algorithmic}
\end{algorithm}
% \notesf{I think it would be better if we write a more detailed subroutine (or pseudocode) for steps 4, 5, and 6 in the appendix.}

% \noteag{I think line~5 does not make sense, since it is impossible to figure out how to do so from the contents of the main body. I think it is important to state in the main body how to recover the solution.}

{
	\begin{theorem}\label{thm_spath}
		The shortest path from $0$ to $Z+1$ on $\mathcal{G}$ has value $f^*$. 
	\end{theorem}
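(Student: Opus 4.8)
The plan is to establish the two inequalities $f^*\le L$ and $L\le f^*$, where $L$ denotes the value of the shortest $0$-to-$(Z+1)$ path in $\mathcal{G}$, by setting up a cost-matching correspondence between paths in $\mathcal{G}$ and a canonical family of feasible solutions of \eqref{opt_ij}. The conceptual backbone is a structural lemma: there is an optimal solution of \eqref{opt_ij} in which every maximal run of zeros coincides with a full maximal zero-feasible sequence; equivalently, each maximal zero-feasible sequence is either zeroed entirely or kept entirely nonzero. Call such solutions \emph{conforming}.

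First I would prove the lemma by an exchange argument. Suppose an optimal $\{\theta_t\}$ has a maximal zero-run $[a,b]$ strictly inside a maximal zero-feasible sequence $[i,j]$, say with $a>i$. Since index $i-1$ is not zero-feasible, $\theta_{i-1}\ne 0$ in every feasible point, so setting $\theta_t=0$ on all of $[i,a-1]$ is feasible. This lowers $\sum_t\mathbbm{1}\{\theta_t\ne 0\}$ by $a-i\ge 1$, while the number of transition-changes cannot increase: before the edit the affected transitions contain at least the nonzero-to-zero change at $t=a$, whereas afterwards at most the single transition at $t=i$ can be a change. As $0<\alpha<1$, the objective strictly drops, contradicting optimality; the mirror argument handles $b<j$. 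Hence we may restrict attention to conforming solutions.

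For $L\le f^*$, take a conforming optimal solution and let $U=\{u_1<\dots<u_m\}\subseteq\{1,\dots,Z\}$ index its fully-zeroed sequences, associating the path $(0,u_1,\dots,u_m,Z+1)$. The index set $\{0,\dots,T\}$ splits into the zeroed sequences and the intervening nonzero segments $[j_{u_s}+1,\,i_{u_{s+1}}-1]$ (with $j_0=-1$, $i_{Z+1}=T+1$), and I would check that the objective decomposes edge-by-edge so as to dominate \eqref{eq_A}: on each segment the sparsity cost equals $(1-\alpha)(i_l-j_k-1)$; the internal changes are at least $f^{\texttt{Greedy}}_{j_k+1\to i_l-1}$, since the restricted solution is feasible for $\texttt{OPT}_{j_k+1\to i_l-1}(1)$ whose optimum is $f^{\texttt{Greedy}}$ by Proposition~\ref{prop_beta1}; and the two zero/nonzero interface transitions contribute exactly $\alpha\mathbbm{1}\{k\ne 0\}+\alpha\mathbbm{1}\{l\ne T+1\}$, the degenerate empty end-segments being precisely the cases $i_1=0$ and $j_Z=T$ carried by the special weights $W(0,1)=W(Z,Z+1)=0$. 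Summing over edges gives $f^*=(\text{solution cost})\ge(\text{path weight})\ge L$.

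For $f^*\le L$, I would instead run the construction of Algorithm~\ref{alg_path} on a shortest path: zero the selected sequences and fill every segment by $\texttt{Greedy}$. The result is feasible, and its cost is at most the path weight, because the internal change counts equal $f^{\texttt{Greedy}}$ exactly while the sparsity and interface terms can only fall below \eqref{eq_A} if $\texttt{Greedy}$ happens to set some segment entries to zero; hence $f^*\le(\text{cost})\le L$. Together the two bounds give $L=f^*$. I expect the main obstacle to be the transition-count bookkeeping in these two decompositions—making the changes at the zero/nonzero interfaces match the indicators $\mathbbm{1}\{k\ne 0\}$, $\mathbbm{1}\{l\ne T+1\}$ exactly and correctly folding the boundary cases $i_1=0$, $j_Z=T$ into the special-case weights—rather than the exchange argument, which, once in place, makes the clean edge-by-edge accounting possible.
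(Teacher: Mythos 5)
Your proposal is correct and follows essentially the same route as the paper's proof: the same key structural lemma (each maximal zero-feasible sequence is entirely zero or entirely nonzero at optimality, established by a zeroing exchange argument), the same correspondence between the set of zeroed sequences and $0\to Z+1$ paths with edge costs decomposed via Proposition~\ref{prop_beta1}, and the same accounting of the interface transitions and the degenerate cases $i_1=0$, $j_Z=T$. The only differences are presentational: the paper packages the first direction as an exact decomposition of $f^*$ through a constrained reformulation and proves shortest-path optimality by contradiction, whereas you split the argument into the two inequalities $L\le f^*$ and $f^*\le L$, noting (as the paper does) that the constructed solution's cost can be strictly below the path weight when \texttt{Greedy} outputs zero values.
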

 	The above theorem implies that the optimal solution to~\eqref{generalopt} can be obtained via Algorithm~\ref{alg_path}. 

	\begin{theorem}\label{thm_runtime_sp}
		Problem \eqref{opt_ij} can be solved in $\mathcal{O}(ZT)$ time and memory. 
	\end{theorem}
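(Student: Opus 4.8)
The plan is to bound the running time of Algorithm~\ref{alg_path} line by line, identifying where the $\mathcal{O}(ZT)$ budget is spent, and noting at the outset that $Z\leq T+1$ since there are at most $T+1$ indices. First I would observe that the maximal zero-feasible sequences can be extracted by a single left-to-right scan over $t=0,\dots,T$, testing the $\mathcal{O}(1)$ condition $l_t\leq 0\leq u_t$ at each index and merging consecutive feasible indices; this costs $\mathcal{O}(T)$ and returns all $Z$ sequences together with their boundary indices $i_k,j_k$.

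The crux is the DAG construction in line~\ref{alg_path_construct}, since a naive evaluation of the $\mathcal{O}(Z^2)$ arc weights $W(k,l)$ in~\eqref{eq_A} would require running \texttt{Greedy} on $\mathcal{O}(Z^2)$ distinct segments $[j_k+1,\,i_l-1]$, costing as much as $\mathcal{O}(Z^2T)$. To avoid this, I would exploit the prefix-optimality established in Proposition~\ref{prop_beta1}: applied from an arbitrary start $a$, a single call $\texttt{Greedy}(l,u,a,T)$ simultaneously yields an optimal solution of $\texttt{OPT}_{a\to b}(1)$ for every $b\geq a$ by truncation. Since $f^{\texttt{Greedy}}_{a\to b}$ merely counts the number of jumps (equivalently, the number of maximal feasible blocks minus one) in the greedy partition restricted to $[a,b]$, I would run \texttt{Greedy} once from each of the $\mathcal{O}(Z)$ distinct start indices $a\in\{j_k+1\}_{k=0}^{Z}$ and store, for each such $a$, a prefix-count array of jumps over $[a,T]$. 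Each greedy run is $\mathcal{O}(T)$: the running interval intersection $\DI_{a\to\delta}$ is maintained in $\mathcal{O}(1)$ per index by updating a single lower/upper bound pair, and an amortization argument shows each index is processed a constant number of times despite the recursive look-ahead. Thus all precomputation costs $\mathcal{O}(ZT)$ in time and memory, and with these arrays in hand every $f^{\texttt{Greedy}}_{j_k+1\to i_l-1}$, hence every weight $W(k,l)$, is retrieved in $\mathcal{O}(1)$, so filling all $\mathcal{O}(Z^2)\leq\mathcal{O}(ZT)$ arcs stays within budget.

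It then remains to handle the shortest-path computation (line~\ref{alg_path_solve}) and the reconstruction. Since the vertex ordering $0<1<\dots<Z+1$ is already a topological order of $\mathcal{G}$, I would compute the shortest path by a single dynamic-programming relaxation pass in this order, running in $\mathcal{O}(|V|+|E|)=\mathcal{O}(Z^2)\leq\mathcal{O}(ZT)$. Finally, tracing back the optimal path partitions $\{0,\dots,T\}$ into segments, each either set to zero or filled from a stored greedy solution; since each of the $T+1$ coordinates of $\widehat\theta$ is written exactly once, reconstruction is $\mathcal{O}(T)$. Summing the contributions yields the claimed $\mathcal{O}(ZT)$ bound for both time and memory, the latter dominated by the stored prefix-count and solution arrays.

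The main obstacle I anticipate is precisely the amortization that brings the DAG construction down to $\mathcal{O}(ZT)$ from the naive $\mathcal{O}(Z^2T)$: one must verify carefully that the truncation property of Proposition~\ref{prop_beta1} indeed lets a single greedy run from $a$ serve all endpoints $b$, and that the greedy routine itself processes each index only a constant number of times despite its recursive search for the longest feasible interval.
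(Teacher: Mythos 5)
Your proposal is correct and follows essentially the same route as the paper: both exploit the truncation (prefix-optimality) property of Proposition~\ref{prop_beta1} so that only $\mathcal{O}(Z)$ calls to \texttt{Greedy}, each costing $\mathcal{O}(T)$, suffice to obtain all arc weights $f^{\texttt{Greedy}}_{j_k+1\to i_l-1}$, after which the shortest path on the acyclic graph $\mathcal{G}$ is found in $\mathcal{O}(Z^2)\leq\mathcal{O}(ZT)$ time and the solution is reconstructed by concatenating stored greedy segments. Your treatment is in fact slightly more explicit than the paper's on two points the paper leaves implicit --- that Proposition~\ref{prop_beta1} must be applied from an arbitrary start index $a$ rather than $0$, and that each greedy run is $\mathcal{O}(T)$ by an amortized scan --- but these are refinements of, not departures from, the paper's argument.
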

% 	\begin{proof}
% 		{Algorithm~\ref{alg_path} involves three main components: construct graph $\mathcal{G}$ (line \ref{alg_path_construct}), solve a shortest problem on the constructed graph (line \ref{alg_path_solve}), and recover the optimal solution from the obtained shortest path.} Since $\mathcal{G}$ is acyclic, the shortest path problem can be solved in time linear in the number of arcs, which is $\mathcal{O}(Z^2)$, via a simple labeling algorithm; see, e.g., Chapter 4.4. in \cite{ahuja1988network}. Constructing graph $\mathcal{G}$ requires computing the costs of all arcs. A na\"ive implementation, where Algorithm~\ref{alg_greedy} is called for every arc, would require $\mathcal(O)(Z^2T)$ time and memory. However, from the second statement in Proposition~\ref{prop_beta1}, we note that a single call to \texttt{Greedy}$(l,u,i,T)$ allows us to compute $f_{i\to j}^{\texttt{Greedy}}$ for all $i\leq j\leq T$. Therefore, Algorithm~\ref{alg_greedy} needs to be invoked only $\mathcal{O}(Z)$ times, and each call require $\mathcal{O}(T)$ leading to a total complexity of $\mathcal{O}(ZT)$. {Moreover, given the shortest path, the optimal solution can be constructed by concatenating the solutions obtained from the calls of \texttt{Greedy}.} Finally, since $Z\leq T+2$, we find that the overall complexity is dominated by that of constructing the graph. This completes the proof. 
% 	\end{proof}
	
	Since $Z=\mathcal{O}(T)$ and solving \eqref{generalopt} requires solving $\mathcal{O}(p)$ instances of \eqref{opt_ij}, we find the total complexity stated in Theorem~\ref{thm_runtime}. 
% 	Note however that if the graph contains few maximal zero-feasible sequences, {then Algorithm~\ref{alg_path} could be substantially faster than what the worst case scenario indicated in Theorem~\ref{thm_runtime}. 
	Note however that if $Z = \mathcal{O}(1)$ for every instance of~\eqref{generalopt}, then the overall complexity reduces to $\mathcal{O}(pT)$, which is linear in the total number of variables.
	In the next section, we will show that the practical runtime of the proposed algorithm is near-linear with respect to the number of variables.
}

\section{Numerical Analysis}\label{sec:simulations}
In this section, we provide detailed information about the performance of the proposed estimator in different case studies. In the first case study, our goal is to compare the statistical performance of our proposed method with two other state-of-the-art methods, namely time-varying Graphical Lasso~\cite{hallac2017network}, and a modified version of the elementary $\ell_1$ estimator~\cite{wang2018fast, yang2014elementary}. We will show that the proposed estimator outperforms the other two estimators, in terms of both sparsity recovery and estimation error. In the second case study, we showcase the statistical and computational performance of the proposed method on massive-scale datasets. In particular, we will show that our proposed estimation method can solve instances of the problem with more than 500 million variables in less than one hour, with almost perfect sparsity recovery. Moreover, we demonstrate the improvements in the runtime of our algorithm with parallelization. Finally, we conduct a case study on the correlation network inference in stock markets. In particular, we show that the inferred time-varying graphical model can correctly identify the stock market spikes based on the historical data.

All simulations are run on a desktop computer with an Intel Core i9 3.50 GHz CPU and 128GB RAM. The reported results are for an implementation in MATLAB R2020b.

\begin{figure*}[ht]\centering
	\subfloat[]{%
		\includegraphics[width=5.5cm]{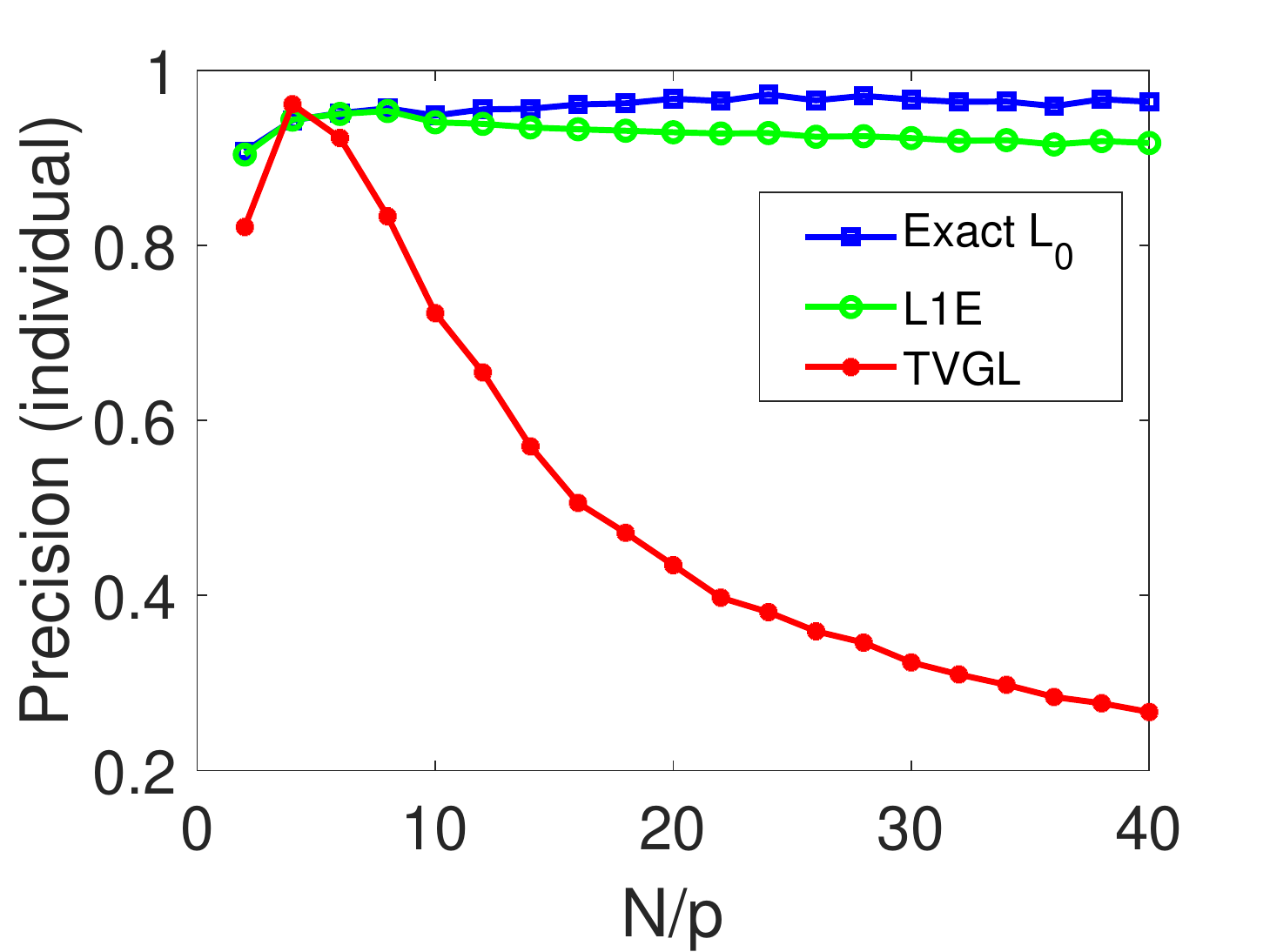}\label{fig:T_precision}%
	}
	\subfloat[]{%
		\includegraphics[width=5.5cm]{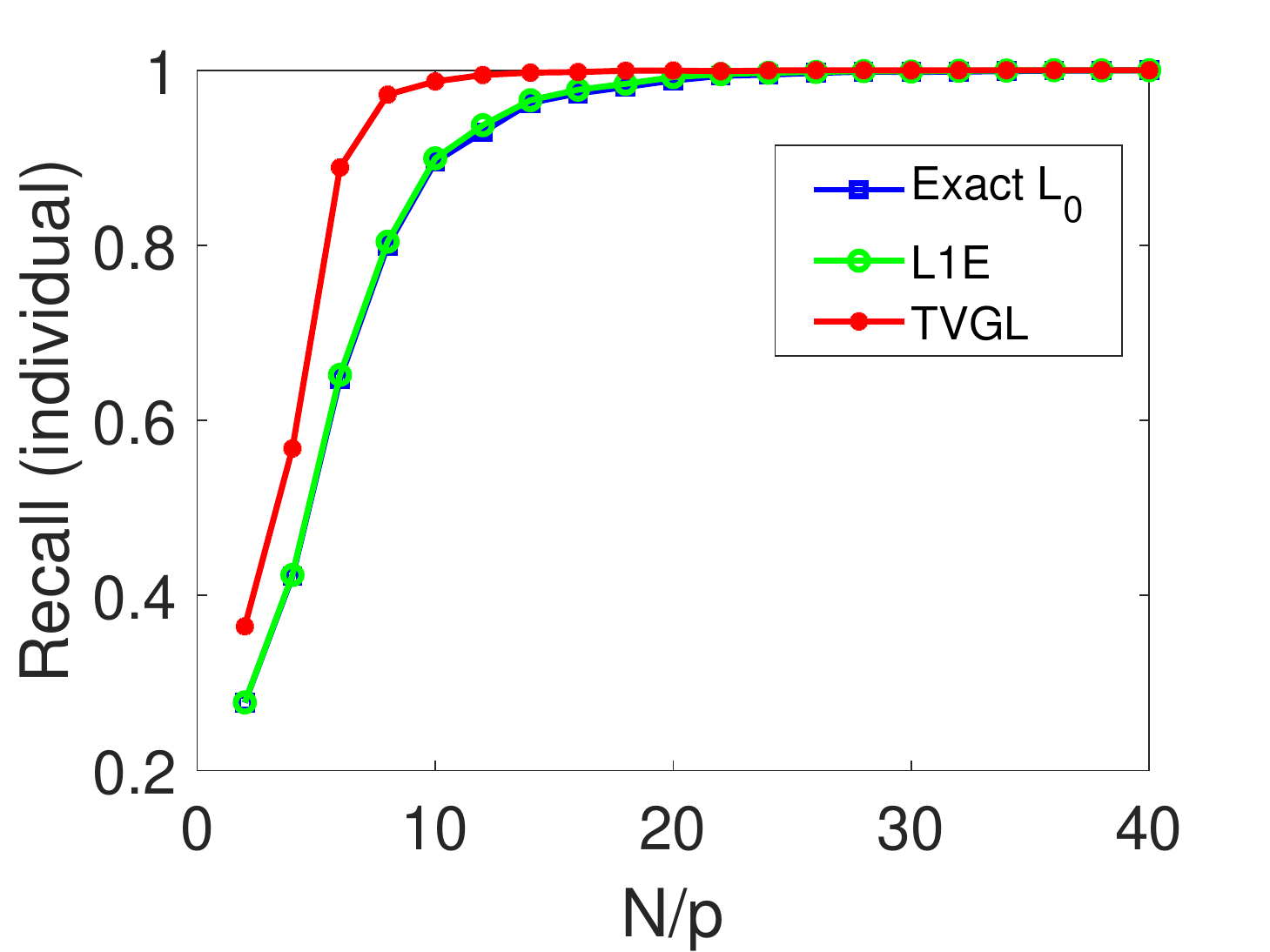} \label{fig:T_precision_diff}%
	}
	\subfloat[]{%
		\includegraphics[width=5.5cm]{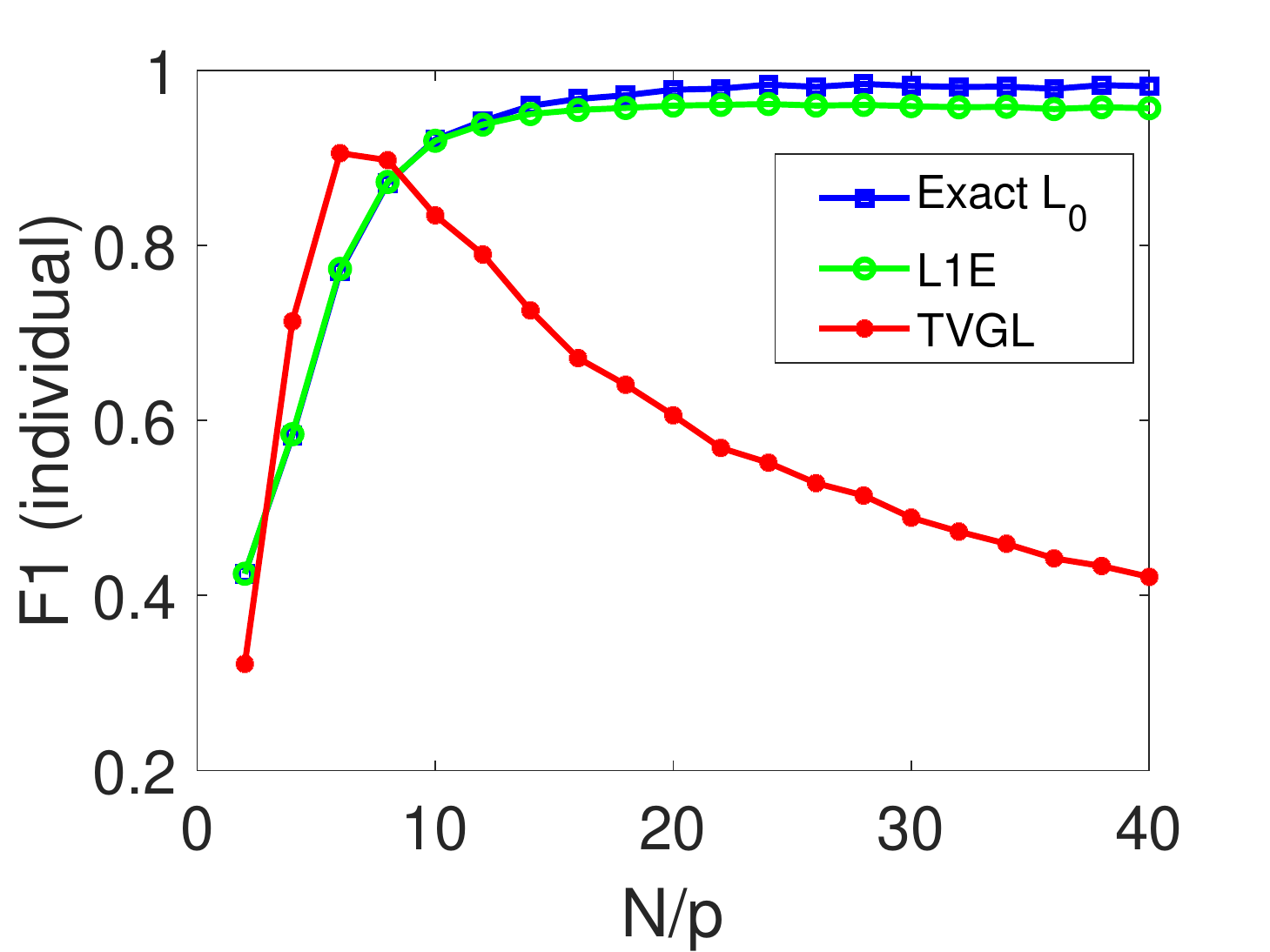} \label{fig:T_recall}%
	}
	
	\subfloat[]{%
		\includegraphics[width=5.5cm]{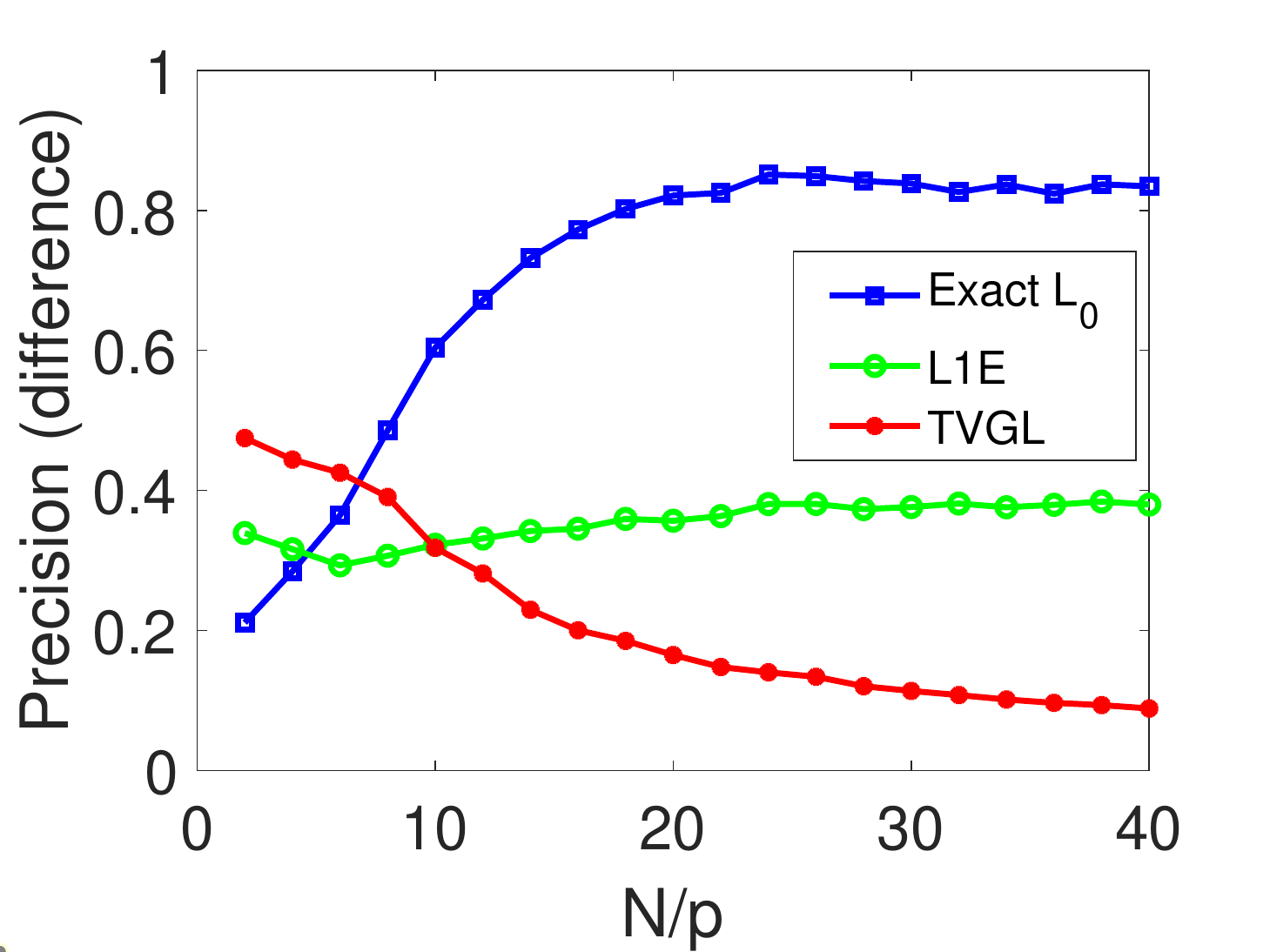}\label{fig:T_recall_diff}%
	}
	\subfloat[]{%
		\includegraphics[width=5.5cm]{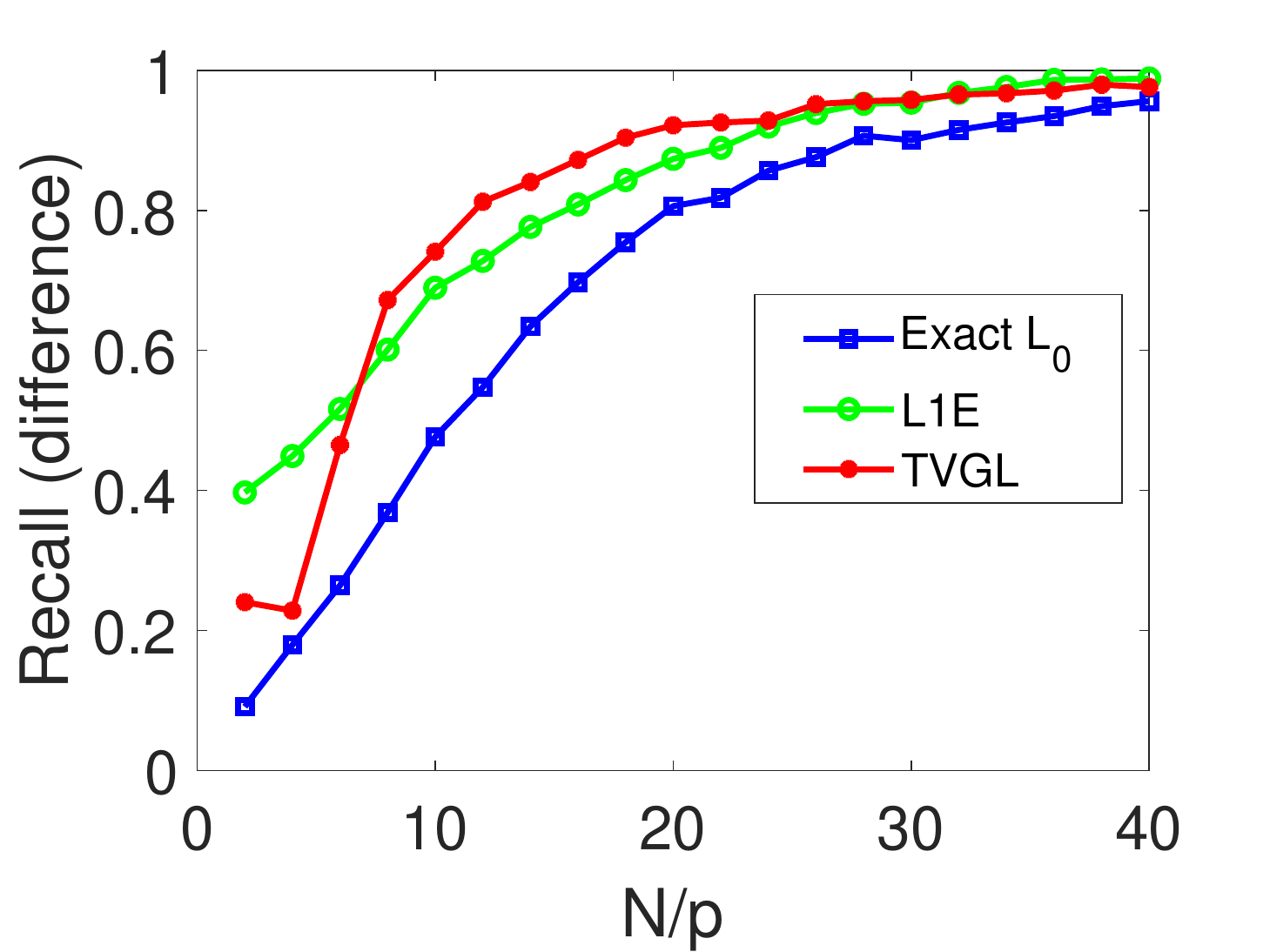} \label{fig:T_F1}%
	}
	\subfloat[]{%
		\includegraphics[width=5.5cm]{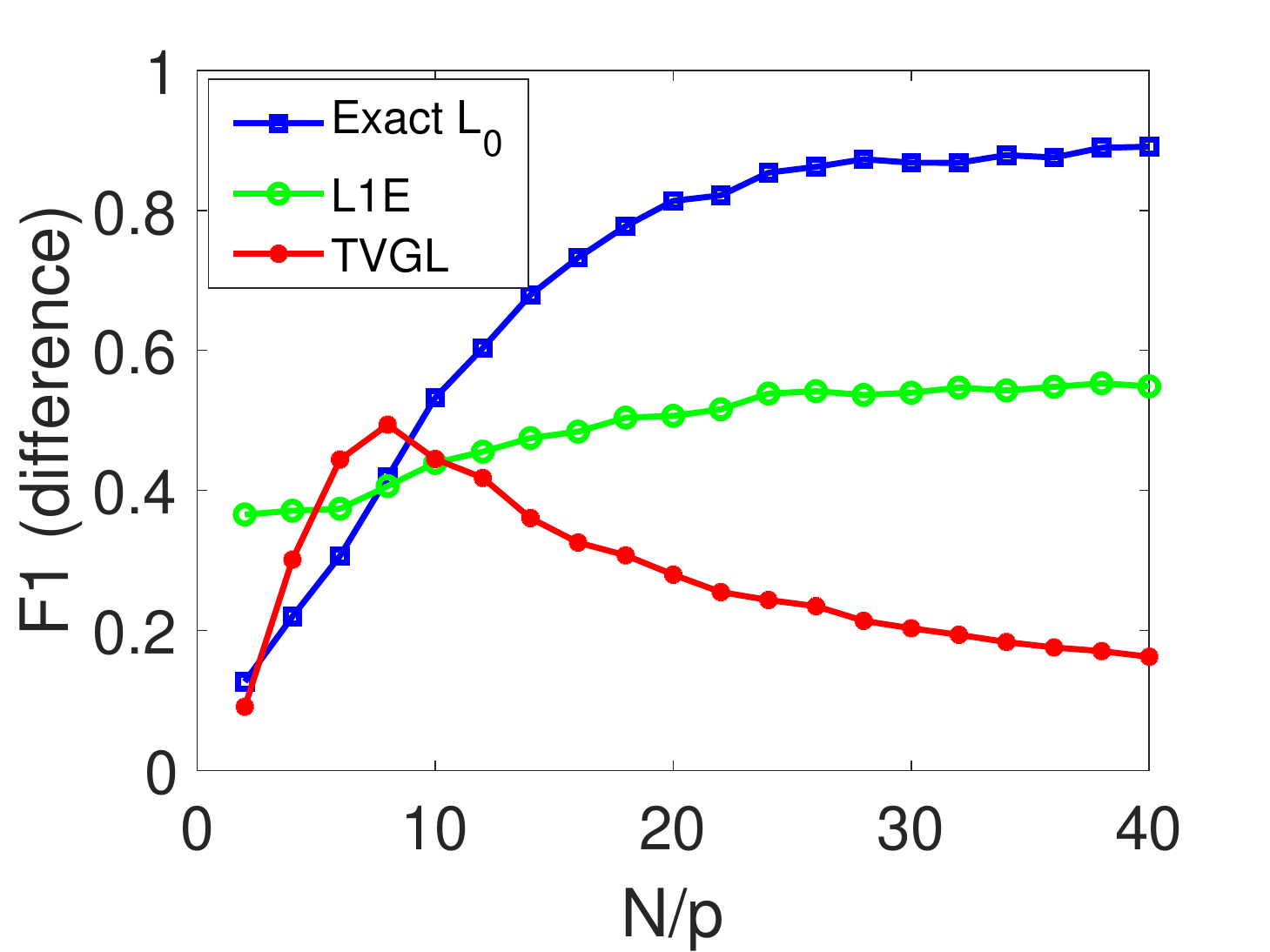} \label{fig:T_F1_diff}%
	}
	\caption{\texttt{Precision}, \texttt{Recall}, and \texttt{F1-score} for the estimated precision matrices and their differences using the proposed method (denoted as \texttt{Exact L$_0$}), \texttt{L1E}, and \texttt{TVGL} (averaged over 10 independent trials).}\label{fig_errors1_small}
\end{figure*}

\begin{figure*}[ht]\centering
	\subfloat[]{%
		\includegraphics[width=6.5cm]{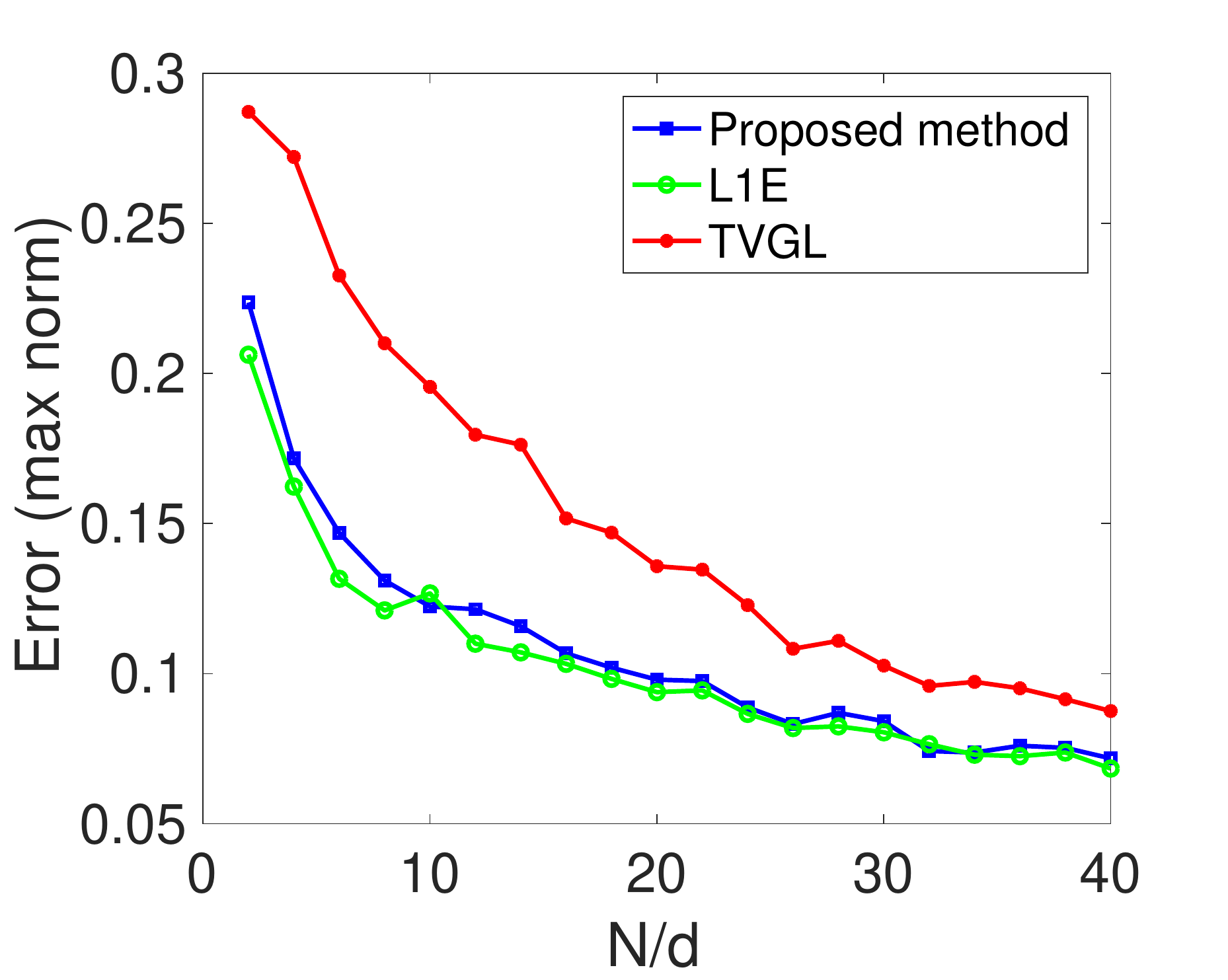}\label{fig:Maydemand}%
	}\hspace{1cm}
	\subfloat[]{%
		\includegraphics[width=6.5cm]{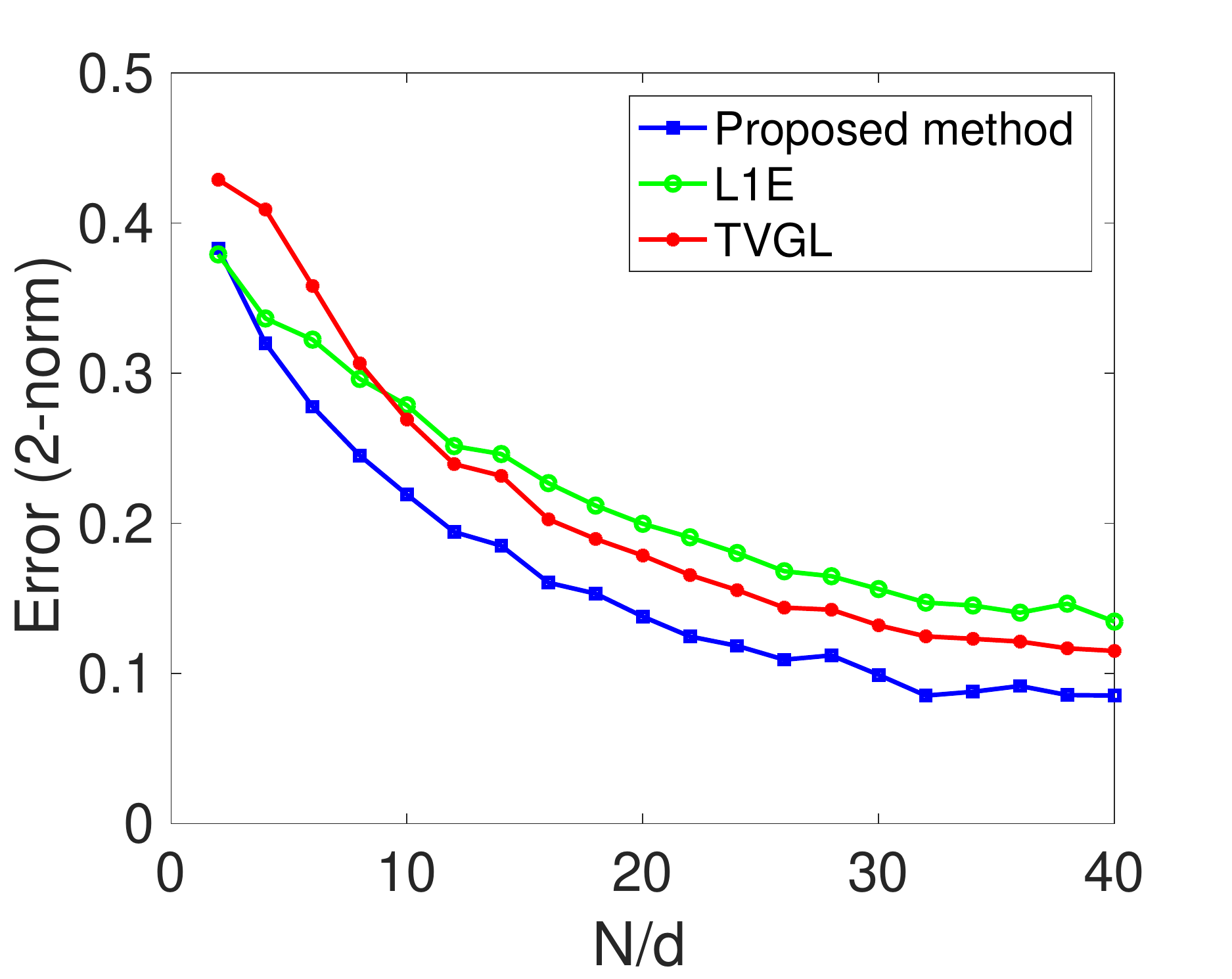} \label{fig:9bus}%
	}
	\caption{The normalized $\ell_{\infty}$-norm and induced 2-norm of the estimation error for the estimated precision matrices and their differences using the proposed method, \texttt{L1E}, and \texttt{TVGL} (averaged over 10 independent trials).}\label{fig_errors2_small}
\end{figure*}

\subsection{Case Study on Small Datasets}\label{subsec:sim_small}
In this case study, we evaluate the statistical performance of the proposed estimator, compared to two other methods, namely time-varying Graphical Lasso (\texttt{TVGL})~\cite{hallac2017network, cai2018capturing}, and a modified version of the elementary $\ell_1$ estimator (\texttt{L1E}) introduced in~\cite{wang2018fast, yang2014elementary}. As mentioned in the introduction, \texttt{TVGL} is a well-known regularized MLE approach for estimating the sparsely-changing GMRFs. On the other hand, different variants of \texttt{L1E} have been used to estimate static MRFs~\cite{yang2014elementary}, and differential networks with sparsity imposed only on the parameter differences~\cite{wang2018fast}. Consider an $\ell_1$ relaxation of the proposed estimator~\eqref{generalopt}, where the $\ell_0$ penalties in the objective function are replaced with $\ell_1$ penalties. The resulted estimator reduces to that of~\cite{yang2014elementary} for $T=0$, and~\cite{wang2018fast} for $T=1$ and $\gamma = 1$.

We consider randomly generated instances of sparsely-changing GMRFs, where the true inverse covariance matrix is constructed as follows: at time $t=0$, we set $\Theta_0 = I_{d\times d}+\sum_{(i,j)\in \mathcal{S}}A^{(i,j)}$, where $d=50$ and $A^{(i,j)}$ is a sparse positive semidefinite matrix with exactly two nonzero off-diagonal elements. In particular, we randomly select $100$ edges in the graph (corresponding to $200$ off-diagonal entries in $\Theta_0$) and collect their indices in $\mathcal{S}$. For every $(i,j)\in\mathcal{S}$, we set $A^{(i,j)}_{ij} = A^{(i,j)}_{ji} = -0.4$ and $A^{(i,j)}_{ii} = A^{(i,j)}_{jj} = 0.4$. Clearly, $A^{(i,j)}\succeq 0$, and hence, $\Theta_0\succ 0$. Moreover, at every time $t=1,\dots,9$, exactly 20 nonzero off-diagonal entries are added to $\Theta_t$ according to the aforementioned rules, and 20 nonzero nonzero off-diagonal entries are deleted by reversing the above procedure. Our goal is to estimate the true sparsely-changing precision matrices $\{\Theta_t\}_{t=0}^9$ based on a varying number of samples $N_t$. We evaluate the accuracy of the different methods in terms of \texttt{Recall}, \texttt{Precision}, and \texttt{F1-score} values, defined as 
\begin{align}
\texttt{Recall} = \frac{\texttt{TP}}{\texttt{TP}+\texttt{FP}},\quad \texttt{Precision} = \frac{\texttt{TP}}{\texttt{TP}+\texttt{FN}}, \quad \texttt{F1-score} = \frac{2\times \texttt{Recall}\times \texttt{Precision}}{\texttt{Recall}+\texttt{Precision}},
\end{align}
where $\texttt{TP}$, $\texttt{FP}$, and $\texttt{FN}$ respectively denote the number of true positives, false positives, and false negatives in the estimated sequence of precision matrices. In all of our experiments, we set $\alpha = 0.7$. Moreover, according to Theorem~\ref{cor_GMRF}, we set $\lambda_t = C_1\frac{\log d}{N_t}$ and $\nu_t = C_2\frac{\log d}{N_t}$ for the proposed method and \texttt{L1E}, where $C_1$ and $C_2$ are constants that are fine-tuned for an instance of the problem, and are kept unchanged throughout the simulations. In particular, we perform an exhaustive search over the constants $C_1$ and $C_2$, and select the ones that achieve the smallest estimation error for a random instance of the problem. Similarly, we set the regularization coefficients $\gamma_1 = C_3\frac{\log d}{N_t}$ and $\gamma_2 = C_4\frac{\log d}{N_t}$ for \texttt{TVGL}~\eqref{mle_reg_gmrf}, where the constants $C_3$ and $C_4$ are selected in a similar fashion.

\begin{figure}[ht]\centering
	\subfloat[]{%
		\includegraphics[width=6.5cm]{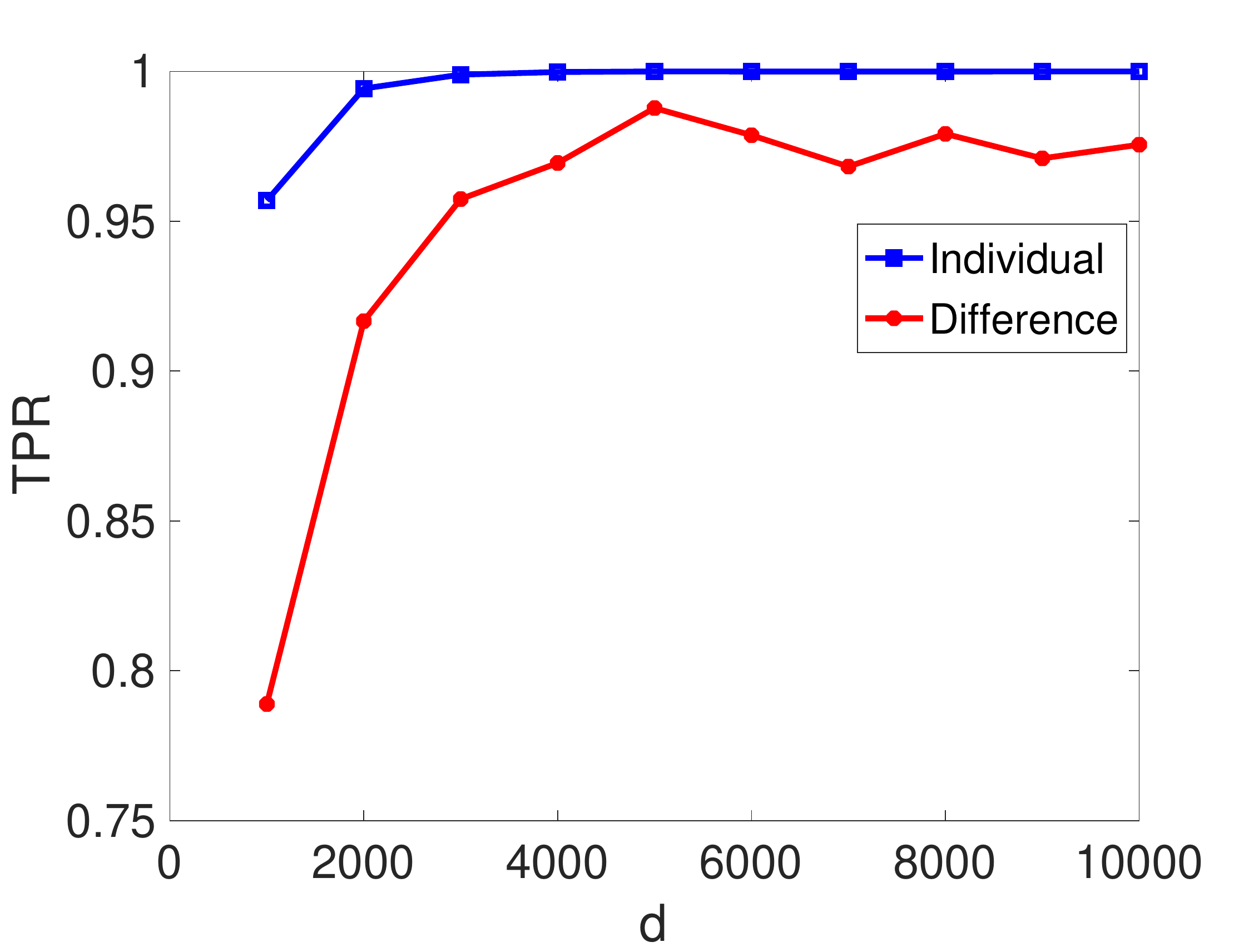}\label{fig:TPR_p}%
	}\hspace{1cm}
	\subfloat[]{%
		\includegraphics[width=6.5cm]{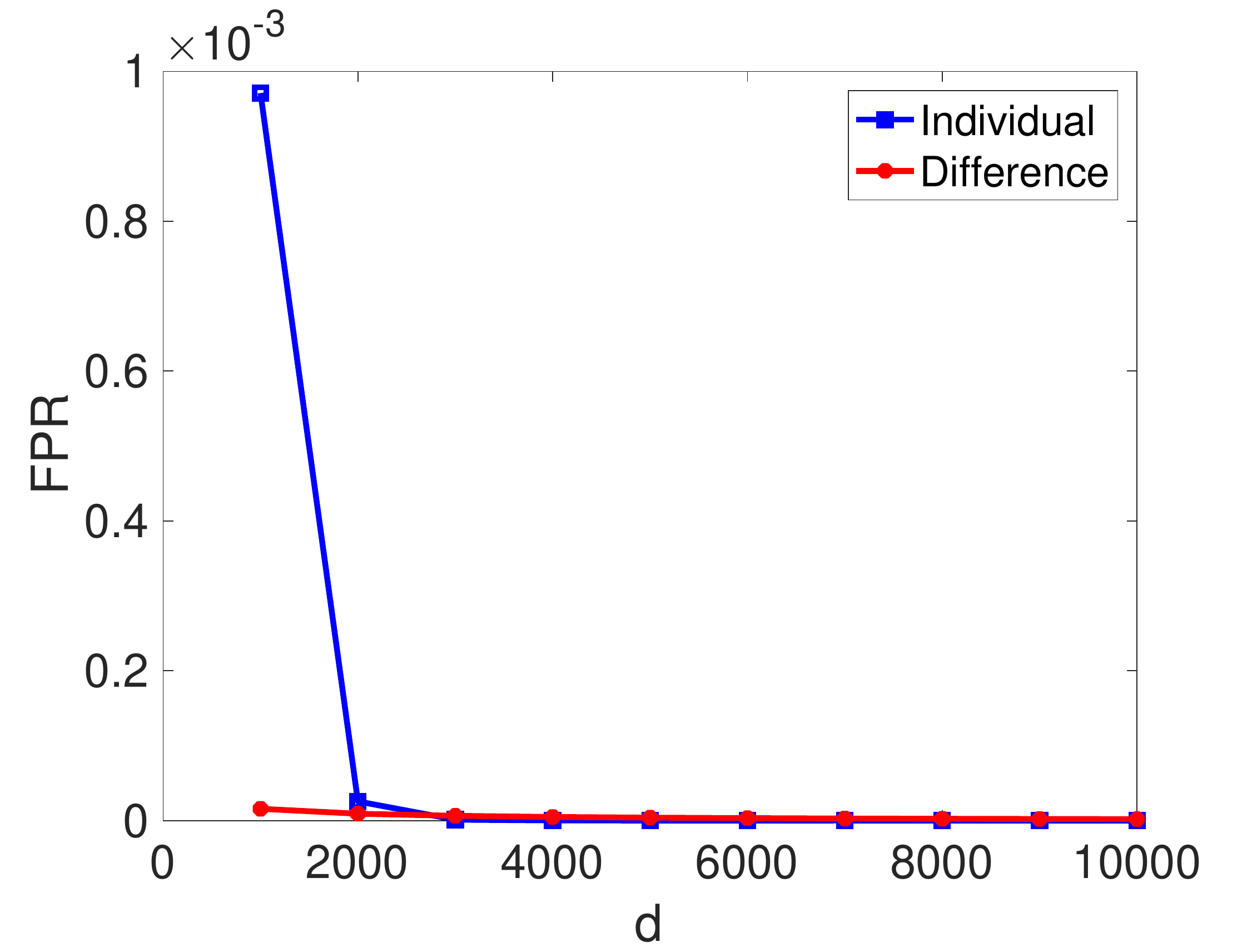} \label{fig:FPR_p}%
	}
	
	\subfloat[]{%
		\includegraphics[width=6.5cm]{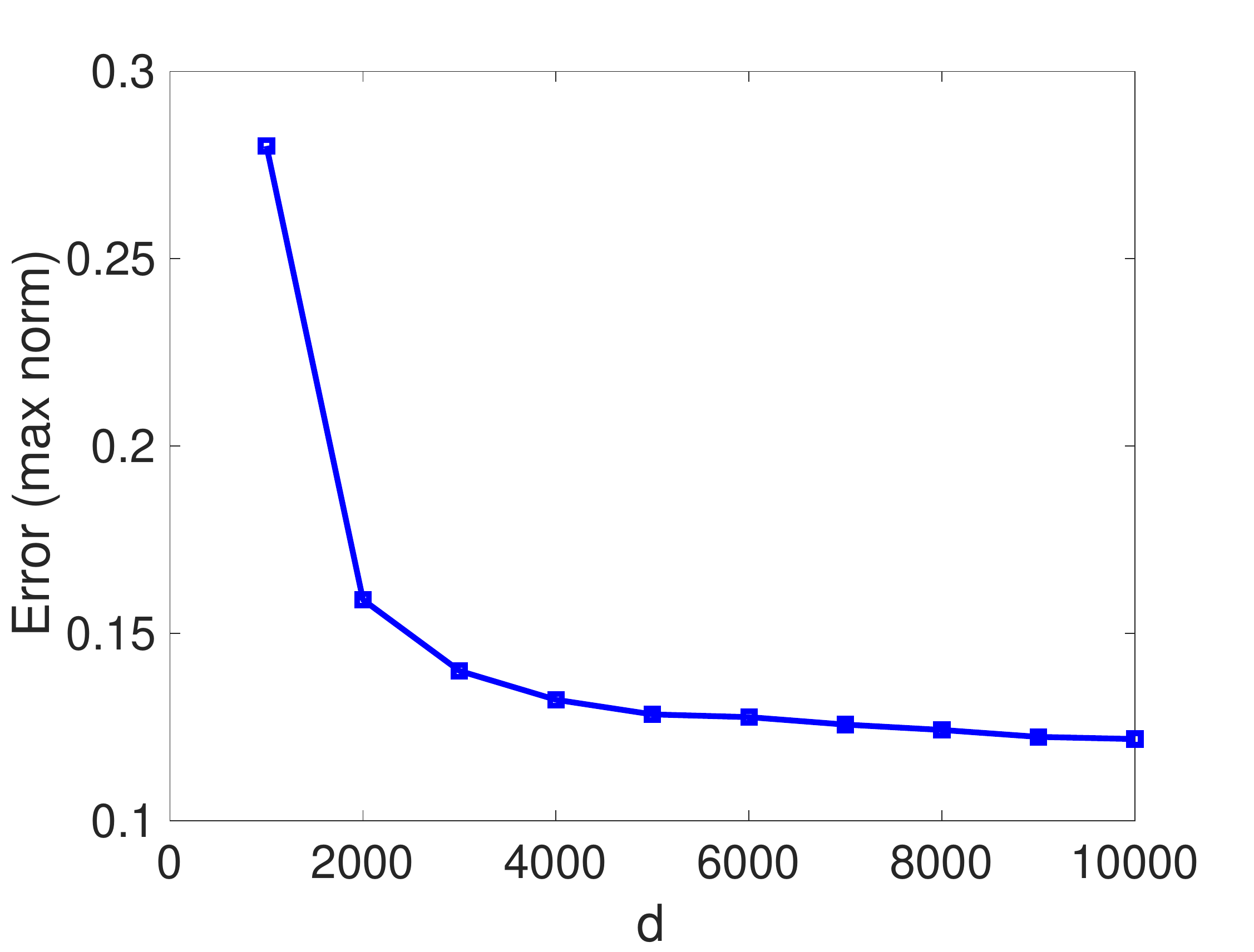}\label{fig:error_l1_p}%
	}\hspace{1cm}
	\subfloat[]{%
		\includegraphics[width=6.5cm]{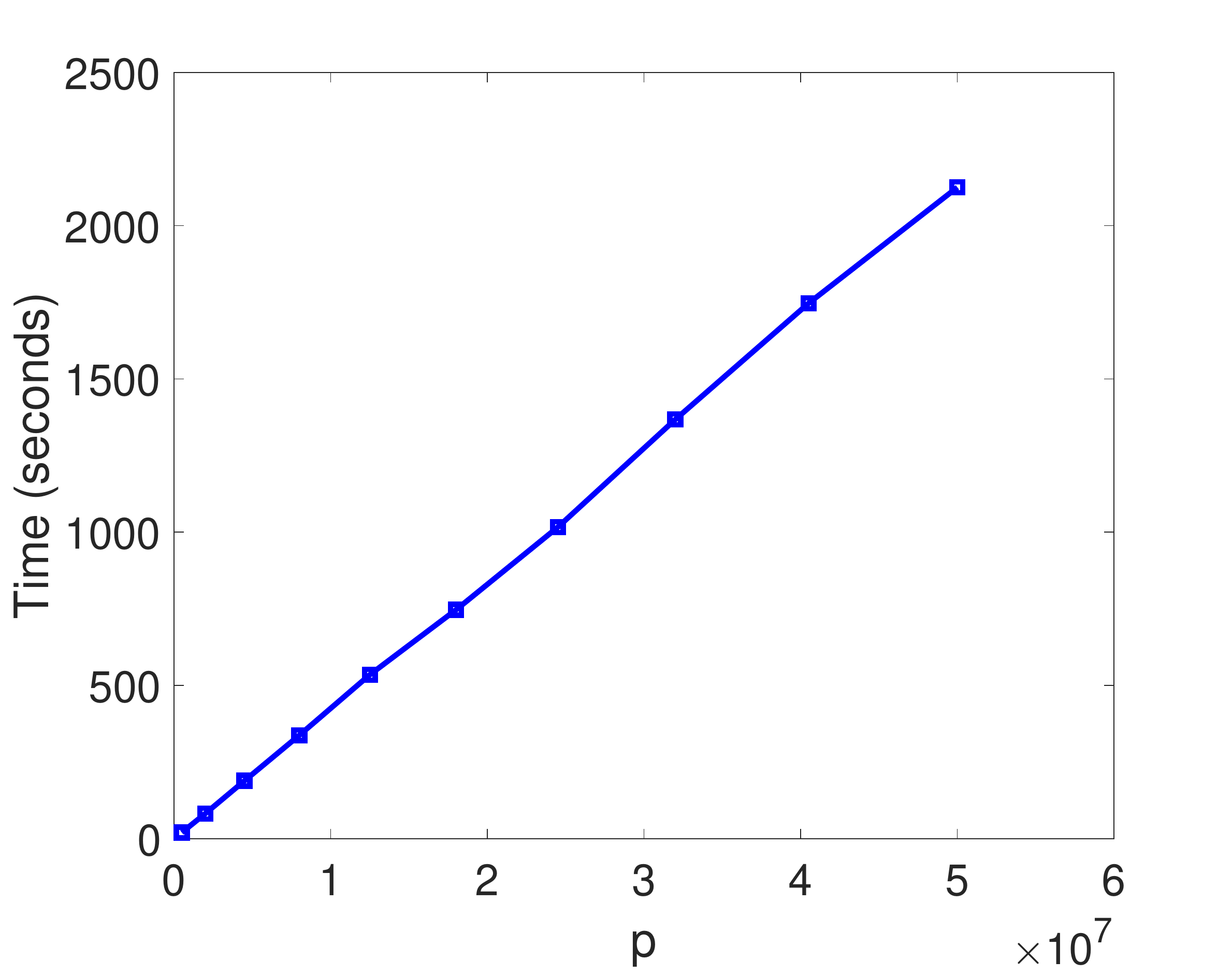} \label{fig:time_p}%
	}
	\caption{\texttt{TPR}, \texttt{FPR}, $\ell_1$-norm estimation error, and the runtime of the proposed method for fixed $T$ and different values of $d$. The number of samples $N_t$ is set to $d/2$ for every $t$. The runtime is shown with respect to $p = d(d+1)/2$.}\label{fig:p}
\end{figure}

Figure~\ref{fig_errors1_small} illustrates the accuracy of the estimated precision matrices for different number of samples. It can be seen that the proposed estimator outperforms \texttt{L1E} and \texttt{TVGL} in terms of \texttt{Precision} value, but has a slightly worse \texttt{Recall} value. In particular, both \texttt{L1E} and \texttt{TVGL} tend to \textit{overestimate} the number of nonzero elements in the precision matrices. This overestimation naturally reduces the number of false negatives (leading to better \texttt{Precision} values), while significantly increasing the number of false positives (leading to worse \texttt{Recall} values). Moreover, \texttt{F1-score} shows the overall performance of the estimates in terms of the sparsity recovery. It can be seen that the proposed estimator outperforms the other two methods. In particular, both \texttt{L1E} and \texttt{TVGL} perform poorly on the sparsity recovery of the parameter differences. Finally, Figure~\ref{fig_errors2_small} depicts the normalized $\ell_{\infty}$-norm and induced 2-norm estimation errors. It can be seen that \texttt{TVGL} incurs a relatively large $\ell_{\infty}$-norm error due to the shrinking effect of its regularization.

% \begin{figure*}
% 	\centering
% 	\includegraphics[width=1\columnwidth]{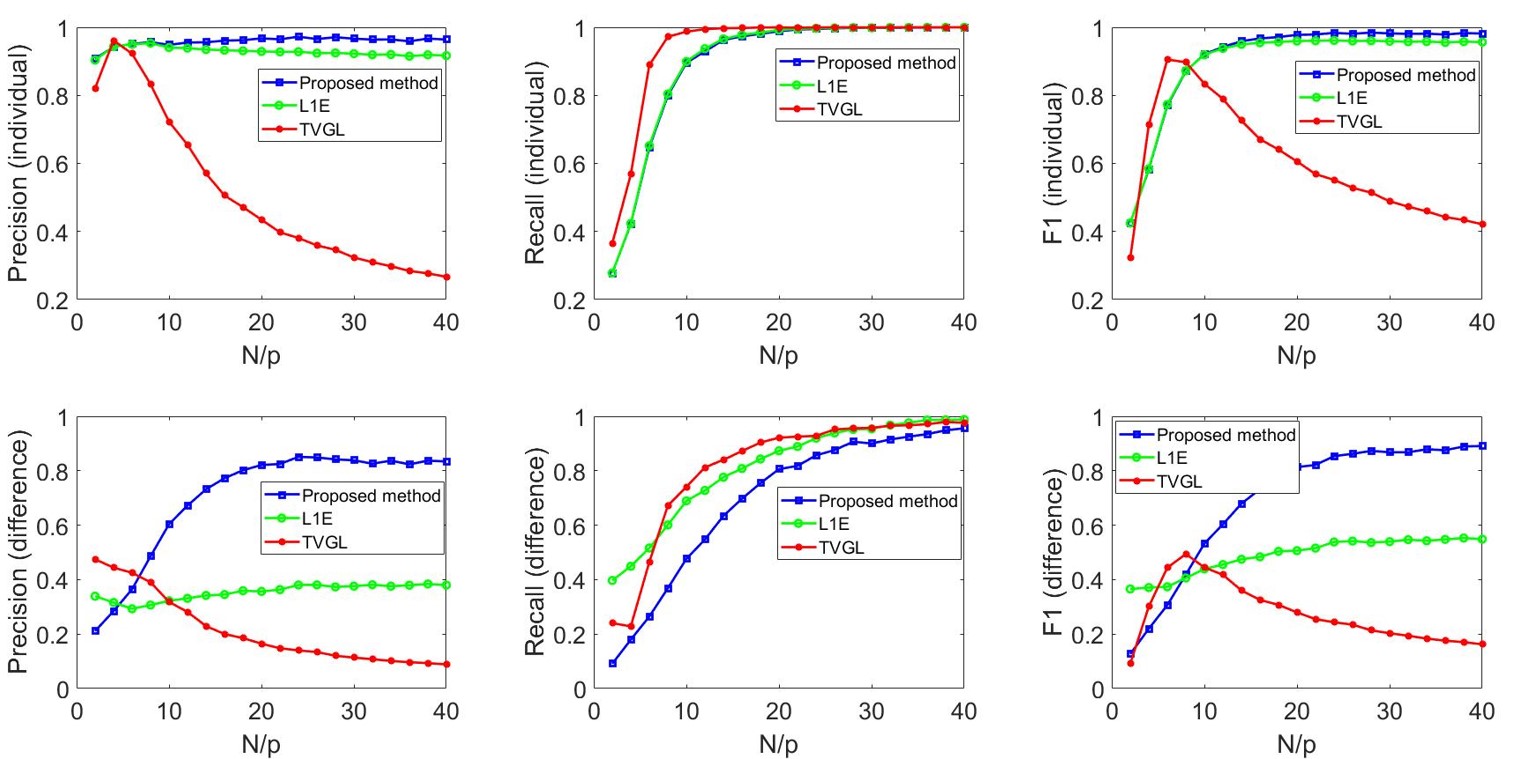}
% 	\vspace{-2mm}
% 	\caption{\texttt{Precision}, \texttt{Recall}, and \texttt{F1-score} for the estimated precision matrices and their differences using the proposed method, \texttt{L1E}, and \texttt{TVGL} (averaged over 10 independent trials).}\label{fig_errors1_small}
% \end{figure*}

% \begin{figure*}
% 	\centering
% 	\includegraphics[width=0.9\columnwidth]{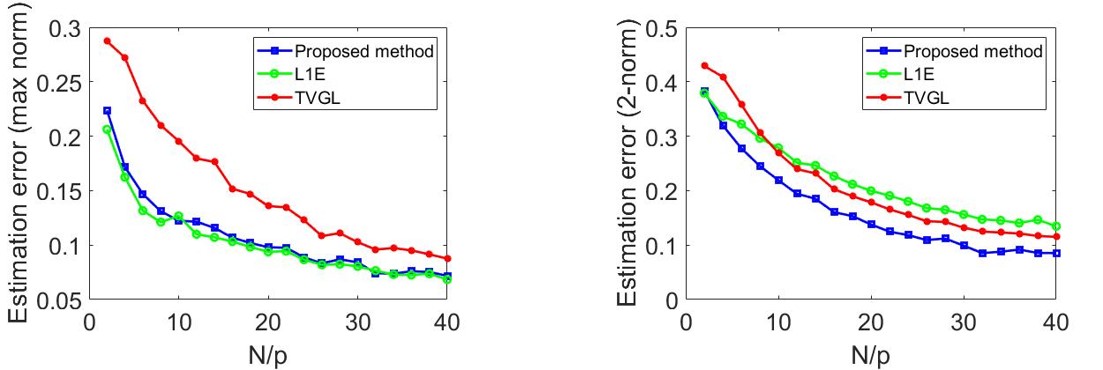}
% 	\vspace{-2mm}
% 	\caption{The normalized $\ell_{\infty}$-norm and induced 2-norm of the estimation error for the estimated precision matrices and their differences using the proposed method, \texttt{L1E}, and \texttt{TVGL} (averaged over 10 independent trials).}\label{fig_errors2_small}
% \end{figure*}

\begin{figure}[ht]\centering
	\subfloat[]{%
		\includegraphics[width=6.5cm]{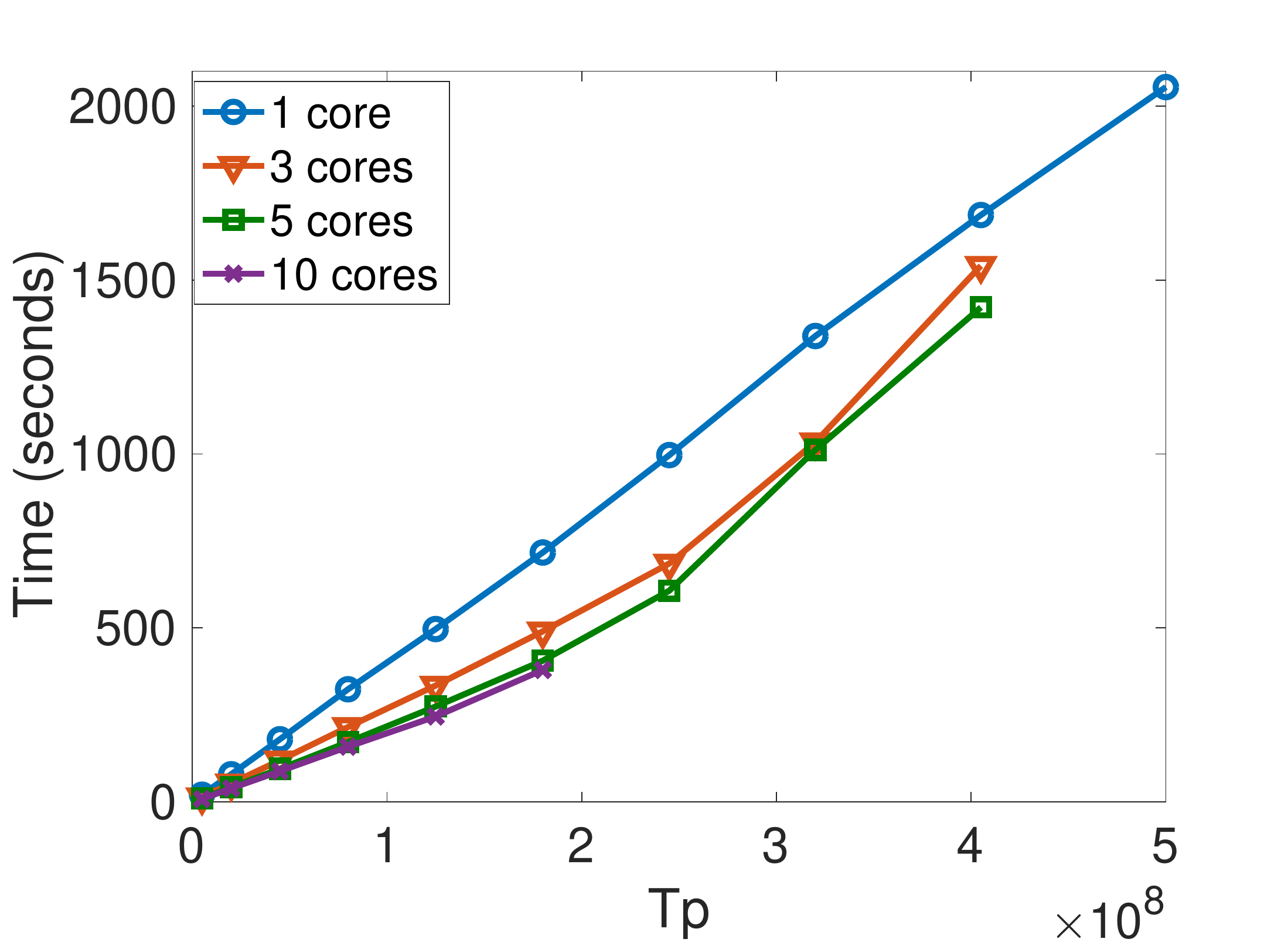}\label{fig:time_multicore}%
	}\hspace{1cm}
	\subfloat[]{%
		\includegraphics[width=6.5cm]{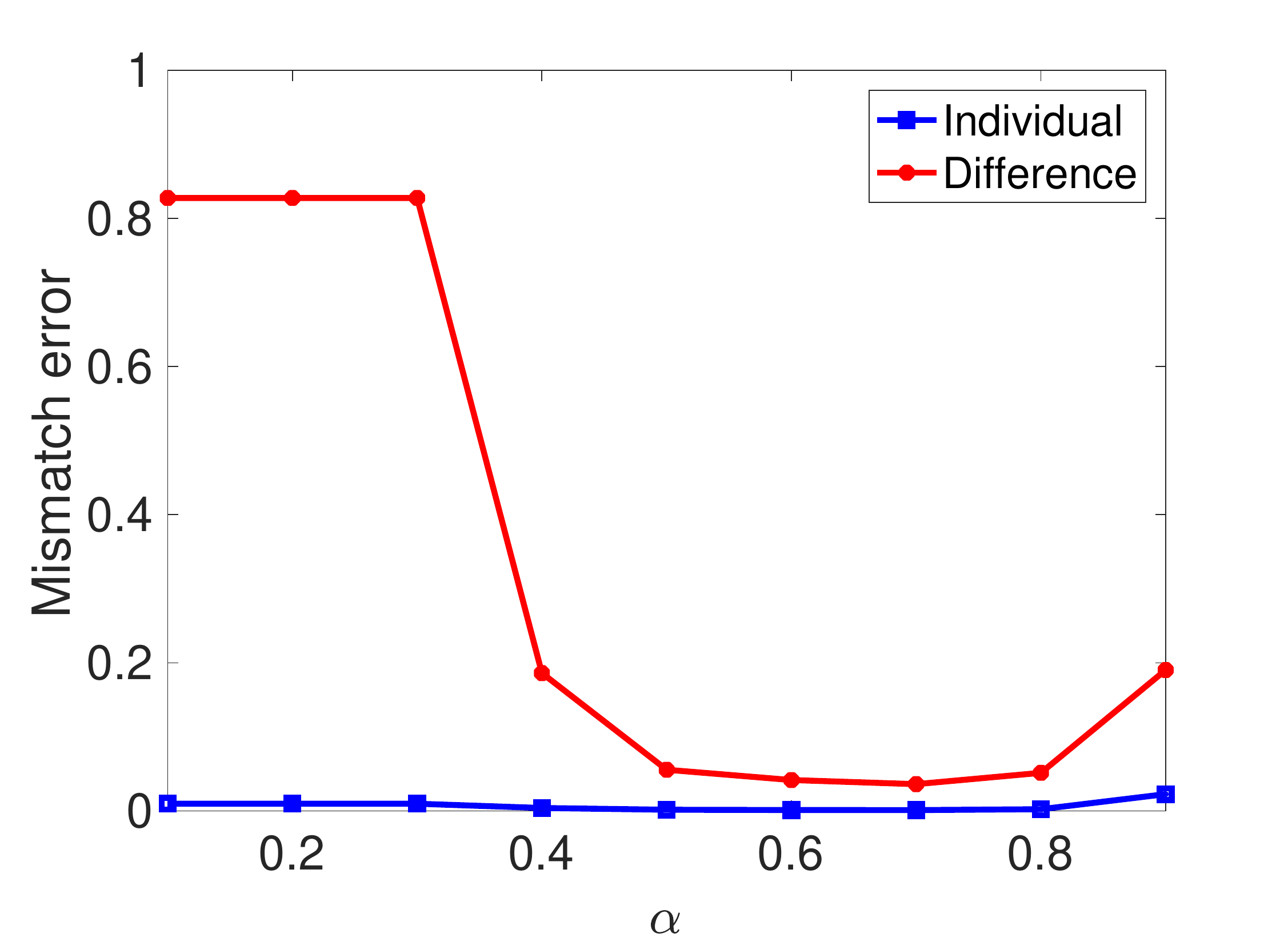}\label{fig:beta}%
	}
	\caption{(a) The runtime of the parallelized algorithm with respect to the number of variables $Tp$, for different number of cores. (b) The normalized mismatch error with respect to the regularization coefficient $\alpha$, for the choices of parameters $d = 4000$, $T=10$, and $N_t = 2000$ for every $t$.}\label{fig:T}
\end{figure}

\begin{figure}[ht]\centering
	\subfloat[]{%
		\includegraphics[width=5.5cm]{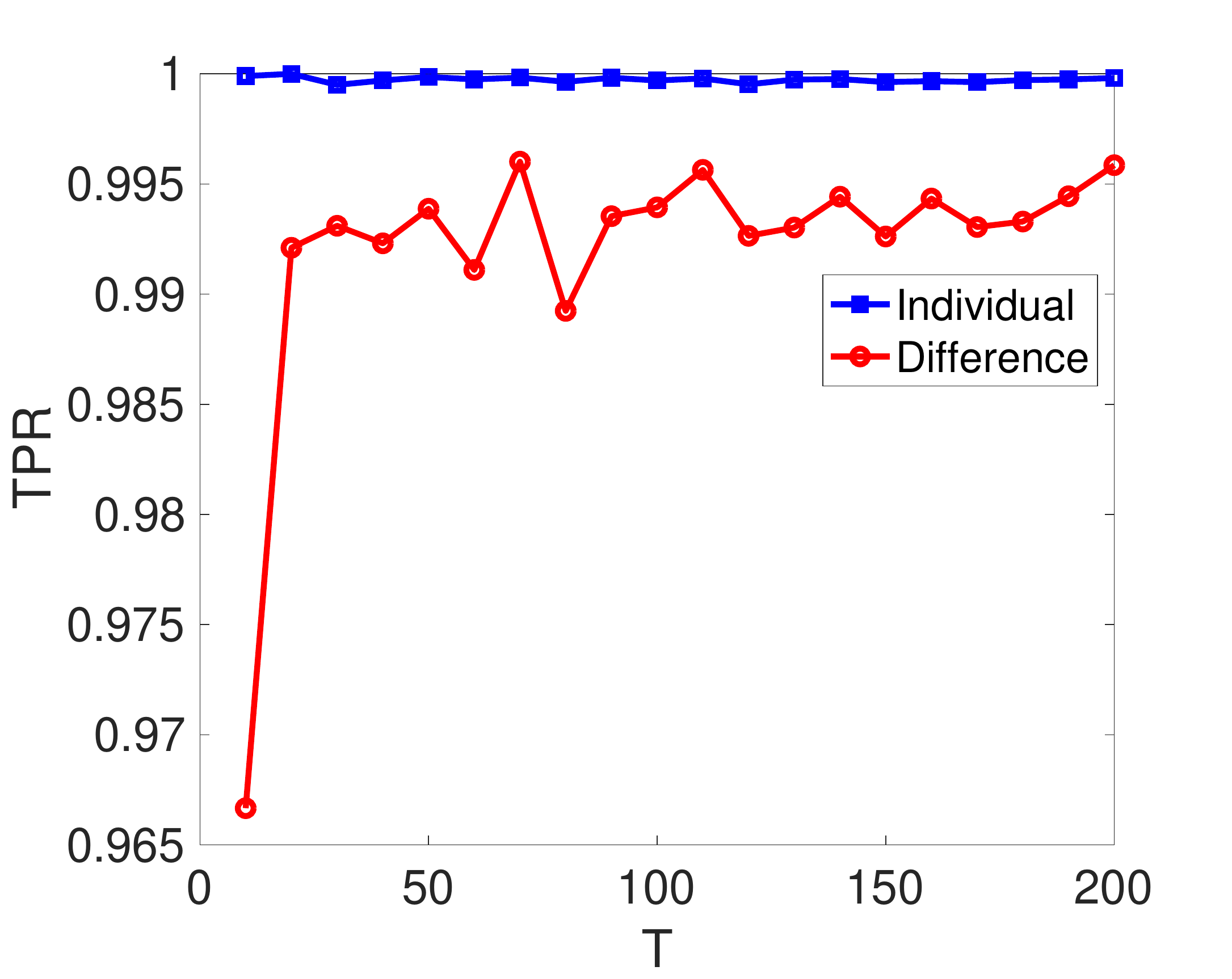}\label{fig:T_TPR}%
	}
	\subfloat[]{%
		\includegraphics[width=5.5cm]{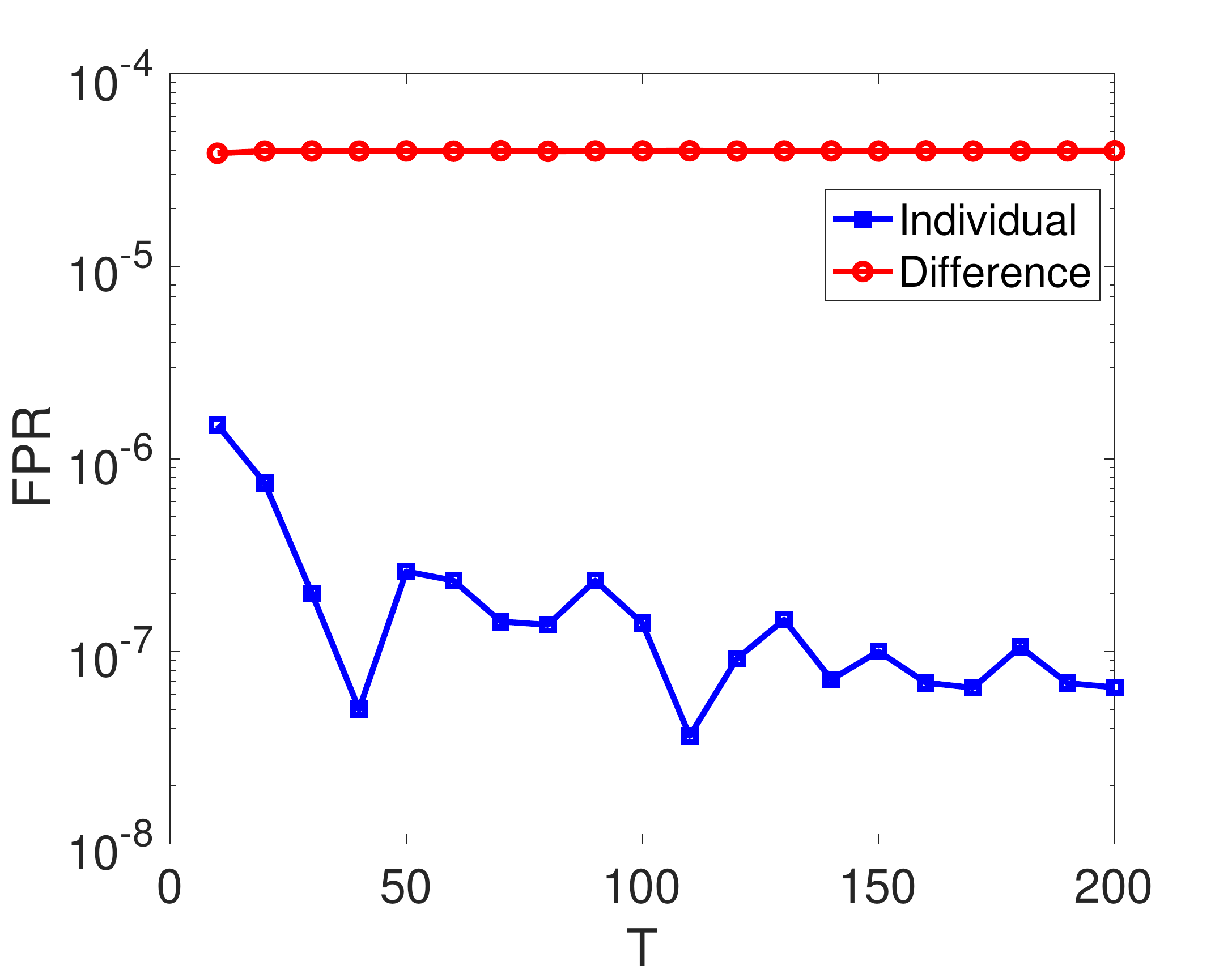} \label{fig:T_FPR}%
	}
	\subfloat[]{%
		\includegraphics[width=5.5cm]{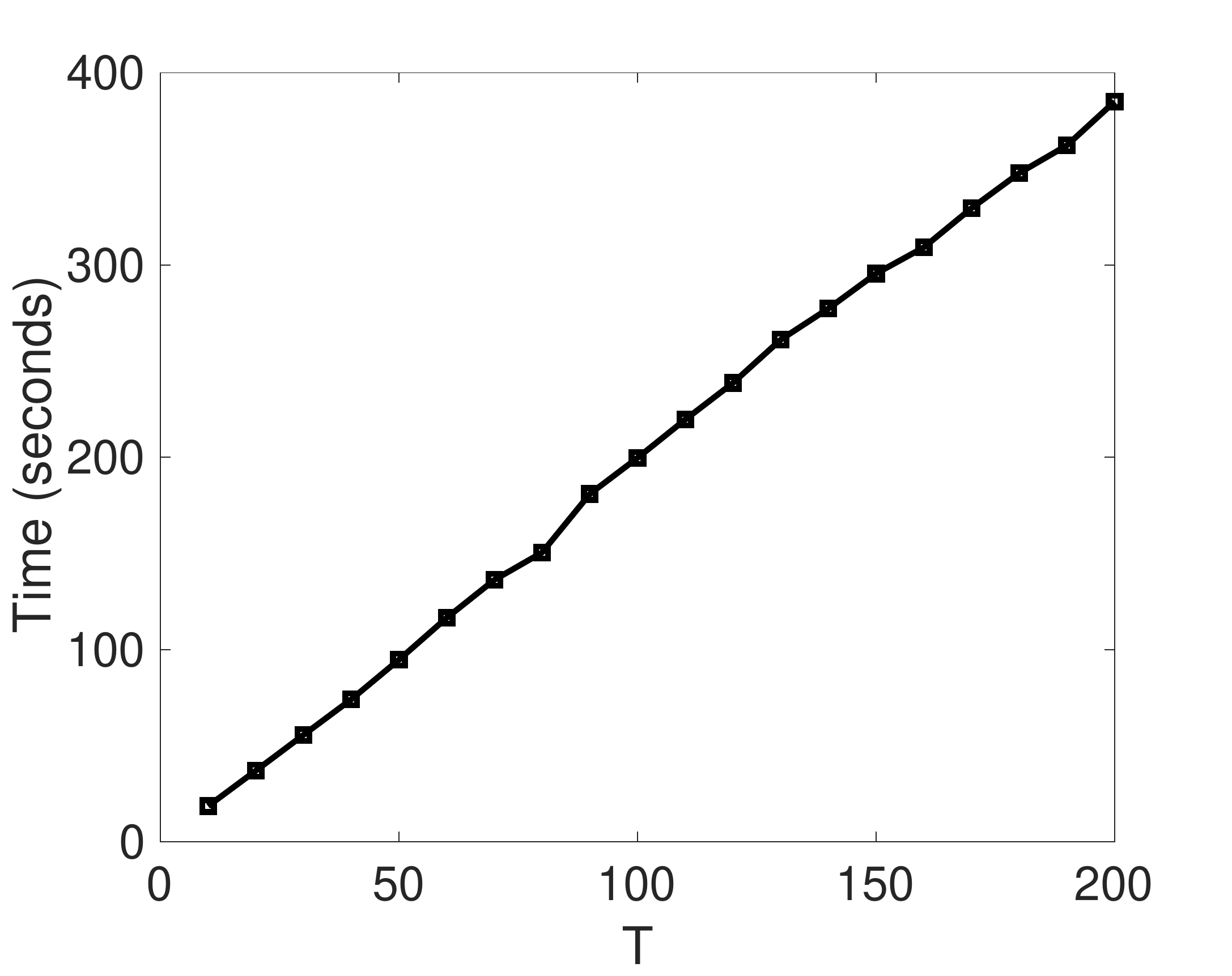} \label{fig:T_time}%
	}
	\caption{\texttt{TPR}, \texttt{FPR}, $\ell_1$-norm estimation error, and the runtime of the proposed method for fixed $d$ and different values of $T$. The number of samples $N_t$ is set to $2d$ for every $t$.}\label{fig:T}
\end{figure}

\subsection{Case Study on Large Datasets}
In this case study, we analyze the performance of the proposed estimator on large datasets, with different values of $d$ and $T$. In particular, we will analyze the runtime of the proposed algorithm and its statistical performance in high dimensional settings, where $N_t<d$ for every $t = 1,2,\dots, T$. Moreover, we will report the improvements in the runtime with parallelization, and analyze the robustness of the estimator for different choices of the regularization coefficient $\alpha$.

Consider the class of synthetically generated sparsely-changing GMRFs with random precision matrices, as explained in Subsection~\ref{subsec:sim_small}. In the first experiment, we fix $T=10$ and change the values of $d$. The number of nonzero elements in the individual precision matrices and their differences are set to $3d$ and $0.04d$, respectively. We evaluate the performance of the proposed method in the high dimensional settings, where $N_t=d/2$ for every $t=0,\dots,T$. The parameters $\lambda_t$ and $\nu_t$ are fine-tuned similar to the previous case study and $\alpha = 0.7$ in all instances. Moreover, define \texttt{TPR} and \texttt{FPR} for the individual parameters and their differences as the \texttt{TP} and \texttt{FP} values, normalized by the total number of nonzero and zero elements in the true precision matrices and their differences, respectively. Clearly, both \texttt{TPR} and \texttt{FPR} are between 0 and 1, with $\texttt{TPR}=1$ and $\texttt{FPR}=0$ corresponding to the perfect recovery of the sparsity patterns. Figure~\ref{fig:p} depicts \texttt{TPR}, \texttt{FPR}, and the $\ell_1$-norm error of the estimated parameters, as well as the runtime of our algorithm for different values of $d$. It can be seen that both \texttt{TPR} and \texttt{FPR} values improve with the dimension for the estimated parameters and their differences. Moreover, the runtime of our algorithm scales almost linearly with $p = d(d+1)/2$, which is in line with the result of Theorem~\ref{thm_runtime}. Using our algorithm, we reliably infer instances of sparsely-changing GMRFs with more than 500 million variables in less than one hour.

As mentioned before, our proposed optimization framework is amenable to parallelization due to its elementwise decomposable nature. Figure~\ref{fig:time_multicore} illustrates the runtime of our parallelized algorithm with respect to the total number of variables (fixed $T$ and varying $p$), for different number of cores. Using 5 cores, the runtime of our algorithm is improved by $40\%$ on average. On the other hand, using 10 cores deteriorates the performance due to the shared memory limitations. Finally, we evaluate the accuracy of the estimated parameters for different choices of the regularization coefficient $\alpha$. In particular, we fix $d = 4000$, $T=10$, and $N_t = 2000$ for every $t$, and depict the normalized mismatch error in the sparsity pattern of the estimated parameters and their differences for $\alpha\in\{0.1,0.2,\dots,1\}$. Based on this figure, it can be concluded that overall performance of the proposed method is not too sensitive to specific choice of the regularization parameter $\alpha$. In particular, it can be seen that the normalized mismatch error remains approximately the same for $\alpha\in[0.5, 0.8]$.

In the next experiment, we set $d=1000$ and $N_t = 2d$, and evaluate the performance of the proposed method for different values of $T \in\{10,20,30,\dots, 200\}$. Figure~\ref{fig:T_TPR} shows \texttt{TPR} for the estimated precision matrices and their differences. It can be seen that \texttt{TPR} for the estimated precision matrices is close to 1 for all values of $T$. Moreover, the \texttt{TPR} for the differences of the estimated precision matrices is at least $0.966$. On the other hand, Figure~\ref{fig:T_FPR} shows that the \texttt{FPR} for the estimated precision matrices is close to zero. Finally, Figure~\ref{fig:T_time} shows that the runtime of the proposed algorithm scales almost linearly with $T$. Together with Figure~\ref{fig:time_p}, this implies that the empirical complexity of the algorithm is linear in both $p$ and $T$.

\begin{figure*}
	\centering
	\includegraphics[width=1\columnwidth]{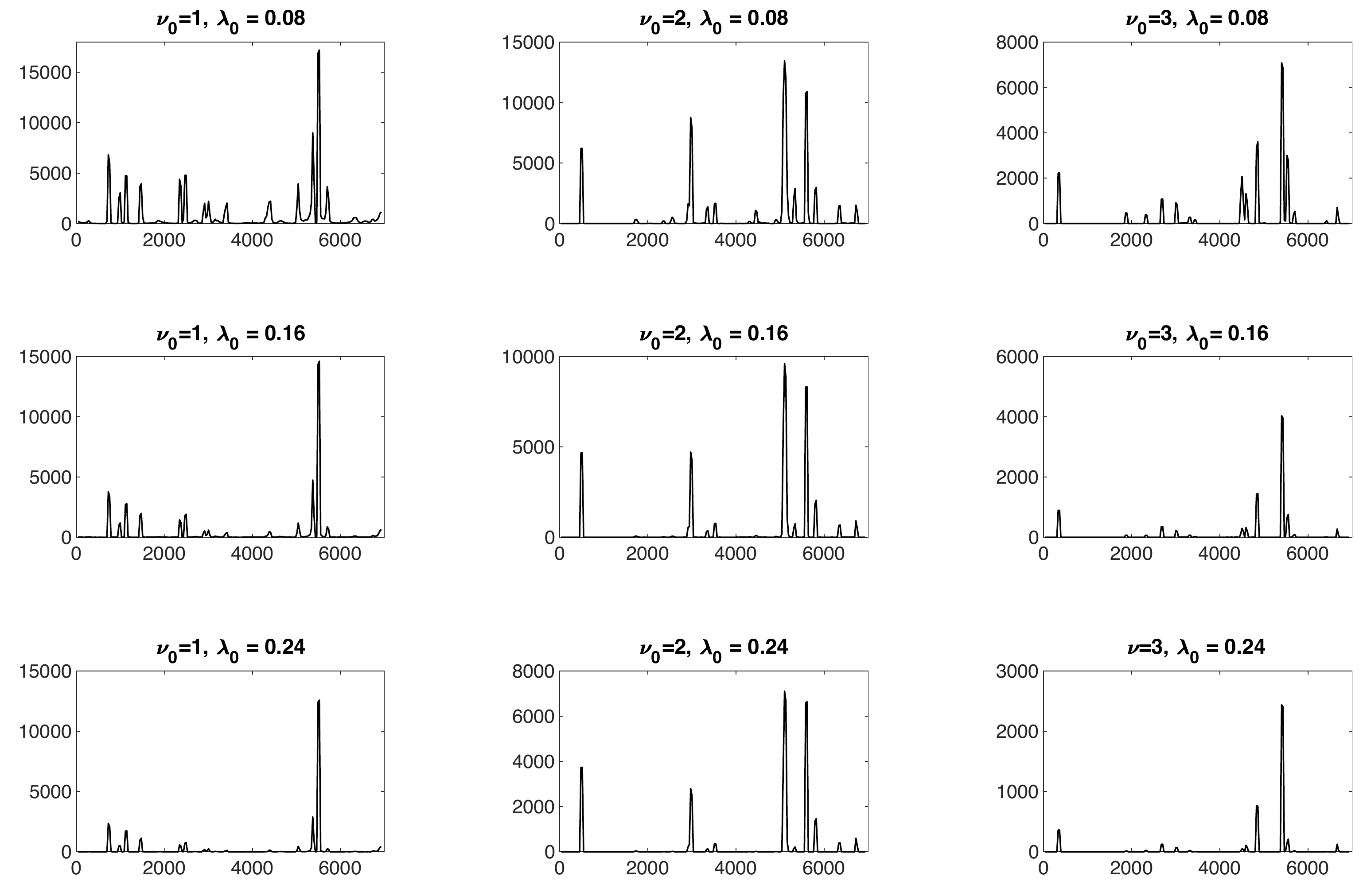}
	\vspace{-2mm}
	\caption{ The number of changes in the estimated stock correlation network, for different choices of $\nu_0$ and $\lambda_0$. The $x$-axis represent the day indexes.}\label{fig_stock}
\end{figure*}

\begin{figure*}[ht]\centering
	\subfloat[]{%
		\includegraphics[width=7.0cm]{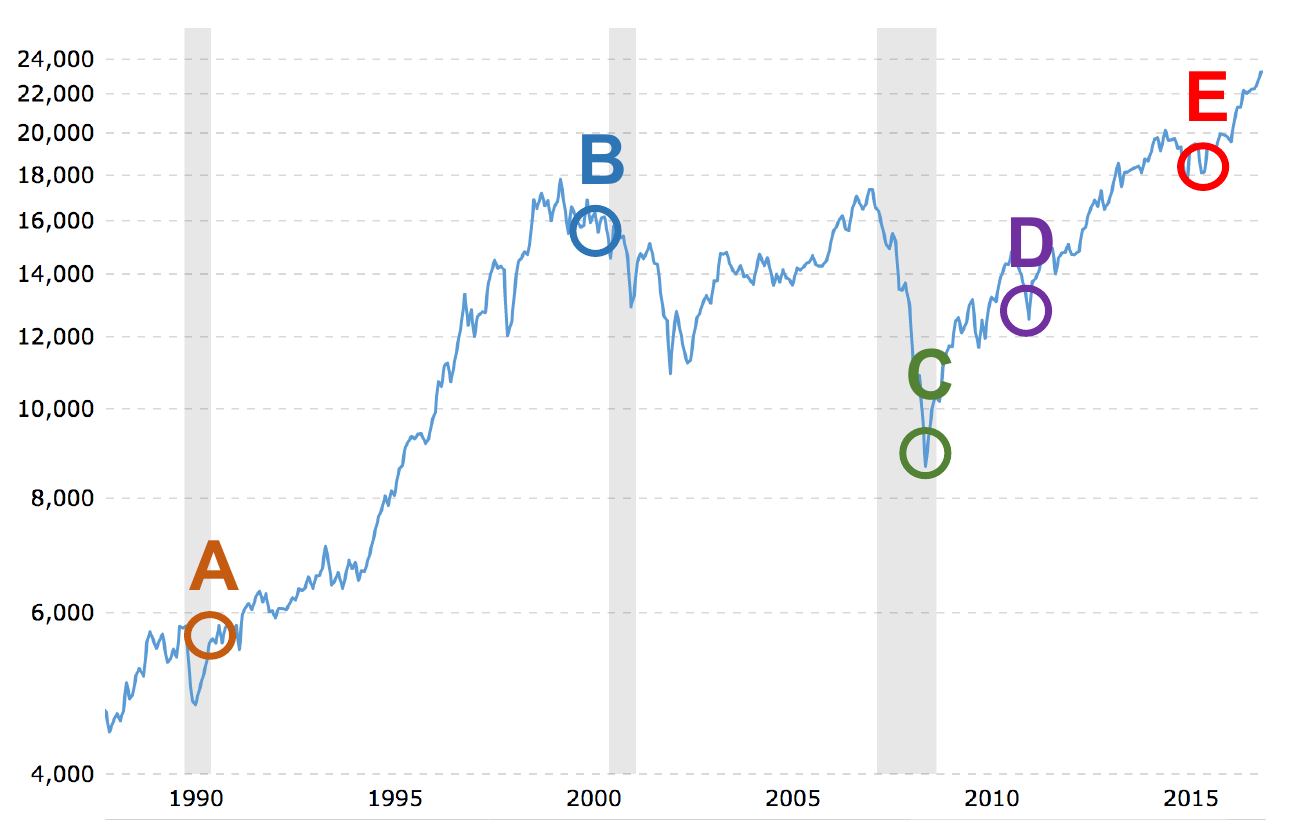}\label{fig:nasdaq}%
	}\hspace{1cm}
	\subfloat[]{%
		\includegraphics[width=7.0cm]{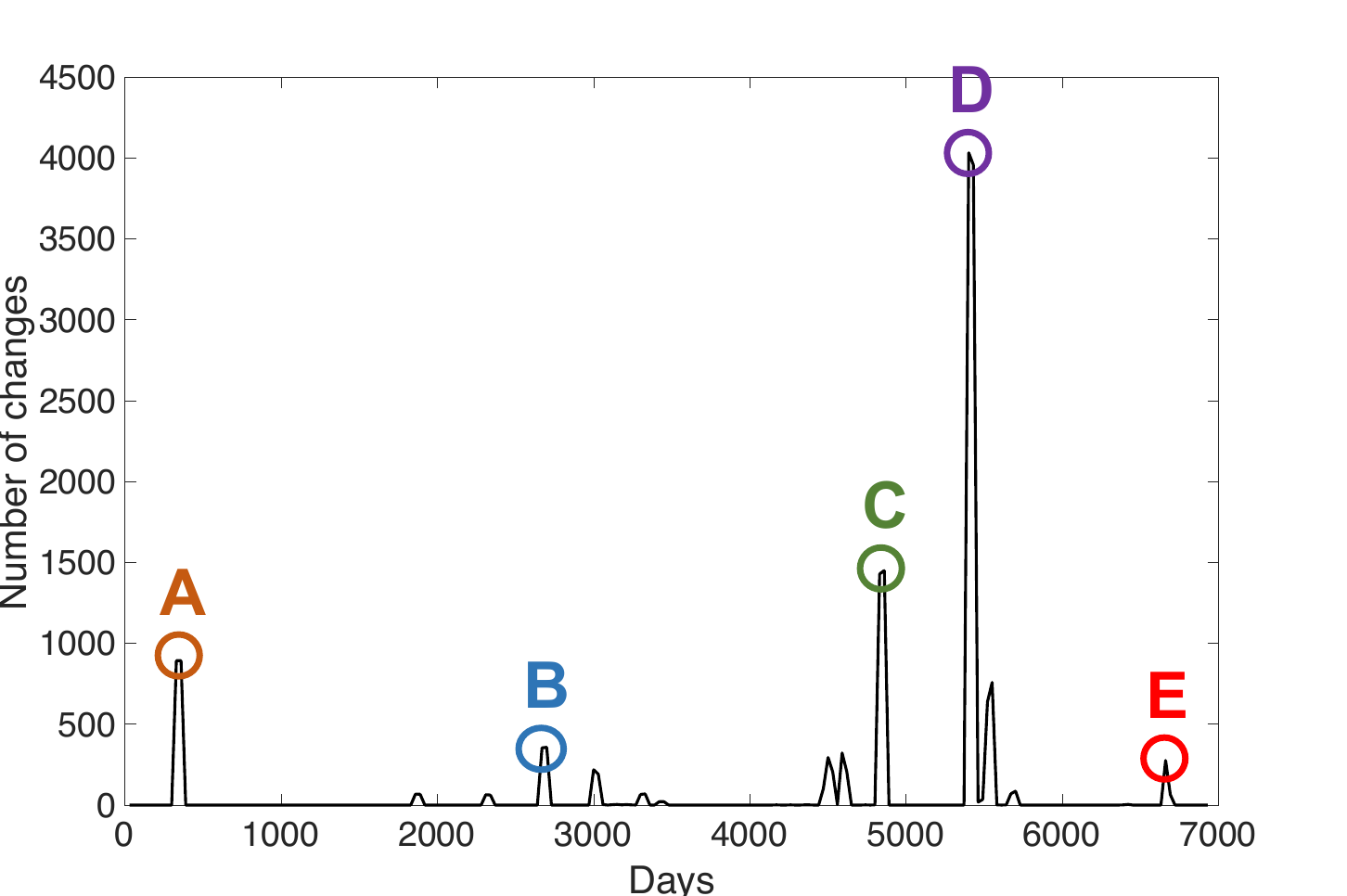}\label{fig:sim_stock}%
	}
	\caption{(a) NASDAQ historical chart from 1988 to 2017~\cite{nasdaq_chart}. (b) The number of changes in the estimated correlation network for $\nu_0=3$ and $\lambda_0=0.16$.}\label{fig:stock_oneshot}
\end{figure*}

\subsection{Case Study on Stock Market}
Finally, we illustrate the performance of our algorithm for the inference of stock correlation network. 
% \notesf{Explain the problem and the data. E.g., what are the applications of this stock correlation network?}
We consider the daily stock prices for 214 securities from $1990/01/04$ to $2017/08/10$, with the total number of 6990 days ($d=214$ and $T=6990$). Due to the continuously changing nature of the stock correlation network, we will use the kernel averaging approach that was introduced in Subsection~\ref{subsec:sparse_GMRF} to estimate the underlying time-varying network. In particular, we consider a Gaussian kernel with bandwidth $h = 0.3T^{-1/3}$ to obtain the sequence of weighted sample covariance matrices. Using the constructed sample covariance matrices, we estimate the sparsely-changing precision matrix $\Theta(t/T)$ at discrete times $t \in\{30,60,90,\dots, 6990\}$. Moreover, we set $\alpha=0.9$, $\lambda_t = \lambda_0\sqrt{\frac{\log(d)}{Th}}$, and $\nu_t = \nu_0\sqrt{\frac{\log(d)}{Th}}$, for some constants $\lambda_0$ and $\nu_0$ to be defined later. Note that these choices of the parameters are consistent with the assumptions of Theorem~\ref{cor_GMRF_ker}.

Figure~\ref{fig_stock} shows the number of changes in the sparsity pattern of the estimated correlation network, for different choices of the parameters $\nu_0$ and $\lambda_0$. A drastic change in the correlation network signals a \textit{spike} in the stock market, which may reflect the market's response to unexpected global events. It can be seen that, for small values of $\nu_0$ and $\lambda_0$, the estimated network can detect both small and large spikes. As the values of $\nu_0$ and $\lambda_0$ increase, the small spikes gradually dimish, and the estimated network only ``picks up'' major changes in the network. Nonetheless, there is a recurring pattern of spikes in these plots that is almost insensitive to different values of $\nu_0$ and $\lambda_0$. A closer look at this recurring pattern sheds light on the behavior of the market. Figure~\ref{fig:stock_oneshot} shows the number of changes in the estimated network, for the choices of $\nu_0 = 3$ and $\lambda_0 = 0.16$, together with the historical chart of National Association of Securities Dealers Automated Quotations (NASDAQ)~\cite{nasdaq}. It can be seen that the major spikes in the estimated network can be attributed to the historical stock market \textit{crashes}. For instance, the spikes \texttt{A}, \texttt{B}, and \texttt{C} respectively correspond to the {``early 1990s recession''}, {``dot-com bubble''}, and {``global financial crisis''}; see~\cite{aliber2017manias} for more details. Interestingly, the estimated network can also detect other historical (but less severe) downturns in 2011 (point \texttt{D}) and 2016 (point \texttt{E}). 

\section{Conclusion}
In this paper, we study the inference of sparsely-changing Markov random fields (MRF), where the goal is to estimate a sequence of time-varying Markov graphs from a limited number of samples, while promoting sparsity of the individual Markov graphs and their differences.We introduce a new class of constrained optimization problems which, unlike existing alternatives, is based on the exact $\ell_0$ regularization. We provide an efficient algorithm that can solve the proposed optimization problem to optimality in polynomial time. The proposed estimator benefits from strong statistical guarantees. As a special case, we show that, using our method, the sparsely-changing Gaussian MRFs can be reliably estimated with as few as one sample per time. Finally, we illustrate the performance of the proposed method in different case studies on synthetic data and financial markets: problems with more than 500 million unknown parameters can be solved in less than one hour.

\bibliography{reference}

\clearpage

% As a future work, we will study the statistical performance of our proposed estimation method for other classes of sparsely-changing MRFs, such as discrete MRFs. Moreover, 

\appendix
\section{Proofs}
\subsection{Proof of Theorem~\ref{thm_deterministic}} Due to the feasibility of $\{\widehat{\theta}_t\}_{t=0}^T$, one can write $\|\widehat{\theta}_t-\widetilde{F}^*(\widehat{\mu}_t)\|_\infty\leq \lambda_t$. Combined with the first assumption of the theorem, this implies that
\begin{align}\label{eq_error}
\left\|\widehat{\theta}_t-\theta_t^*\right\|_{\infty} &= \left\|\widehat{\theta}_t-\widetilde{F}^*(\widehat{\mu}_t)+\widetilde{F}^*(\widehat{\mu}_t)-\theta_t^*\right\|_{\infty} \nonumber\\
&\leq \left\|\widehat{\theta}_t-\widetilde{F}^*(\widehat{\mu}_t)\right\|_{\infty} + \left\|{\theta}^*_t-\widetilde{F}^*(\widehat{\mu}_t)\right\|_{\infty}\nonumber\\
&< 2\lambda_t,
\end{align}
thereby establishing the element-wise estimation error bound.  We proceed to show the sparsistency of the estimated parameters. First, suppose that $\theta_{t;i}^*\not=0$ for some time $t$ and index $i$. One can write
\begin{align}
\left|\widehat \theta_{t;i}\right| &= \left|\widehat \theta_{t;i}-\theta_{t;i}^*+\theta_{t;i}^*\right|\nonumber\\
&\geq \left|\theta_{t;i}^*\right| - \left|\widehat \theta_{t;i}-\theta_{t;i}^*\right|\nonumber\\
&>0
\end{align}
where the last inequality is due to the second assumption of the theorem and~\eqref{eq_error}. This implies that $\mathrm{supp}(\theta^*_t)\subseteq\mathrm{supp}(\widehat{\theta}_t)$. Similarly, suppose that $\theta^*_{t;i}-\theta_{t-1;i}^*\not=0$ for some time $t>0$ and index $i$. One can write
\begin{align}
\left|\widehat \theta_{t;i}-\widehat \theta_{t-1;i}\right| &= \left|\widehat \theta_{t;i}-\theta_{t;i}^*+\theta_{t;i}^*-\theta_{t-1;i}^*+\theta_{t-1;i}^*-\widehat \theta_{t-1;i}\right|\nonumber\\
&\geq \left|\theta_{t;i}^*-\theta_{t-1;i}^*\right|-\left|\widehat \theta_{t;i}-\theta_{t;i}^*\right|-\left|\widehat \theta_{t-1;i}-\theta_{t-1;i}^*\right|\nonumber\\
&>0
\end{align} 
where the last inequality is due to the third assumption of the theorem and~\eqref{eq_error}. This implies that $\mathrm{supp}(\theta_{t}^*-\theta_{t-1}^*)\subseteq \mathrm{supp}(\widehat\theta_{t}-\widehat\theta_{t-1})$. Finally, due to the optimality of $\{\widehat{\theta}_t\}_{t=0}^T$ and feasibility of $\{{\theta}^*_t\}_{t=0}^T$, one can write
\begin{align}
&(1-\alpha)\sum_{t=0}^T\|\widehat\theta_t\|_0+\alpha\sum_{t=1}^T\|\widehat\theta_t-\widehat\theta_{t-1}\|_0\leq (1-\alpha)\sum_{t=0}^T\|\theta_t^*\|_0+\alpha\sum_{t=1}^T\|\theta_t^*-\theta_{t-1}^*\|_0\nonumber\\
\implies&(1-\alpha)\sum_{t=0}^T\left(\sum_{i\in[p]\backslash\mathcal{S}_t}|\widehat\theta_{t;i}|_0+\sum_{i\in\mathcal{S}_t}|\widehat\theta_{t;i}|_0\right)+\alpha\sum_{t=1}^T\left(\sum_{i\in[p]\backslash\mathcal{D}_t}|\widehat\theta_{t;i}-\widehat\theta_{t-1;i}|_0+\sum_{i\in\mathcal{D}_t}|\widehat\theta_{t;i}-\widehat\theta_{t-1;i}|_0\right)\nonumber\\
& \hspace{6cm}\leq (1-\alpha)\sum_{t=0}^T\sum_{i\in\mathcal{S}_t}|\theta_{t;i}^*|_0+\alpha\sum_{t=1}^T\sum_{i\in\mathcal{D}_t}|\theta_{t;i}^*-\theta_{t-1;i}^*|_0\nonumber\\
\implies & (1-\alpha)\sum_{t=0}^T\sum_{i\in[p]\backslash\mathcal{S}_t}|\widehat\theta_{t;i}|_0+\alpha\sum_{t=1}^T\sum_{i\in[p]\backslash\mathcal{D}_t}|\widehat\theta_{t;i}-\widehat\theta_{t-1;i}|_0\leq 0
\end{align}
where the last inequality follows from $\mathrm{supp}(\theta^*_t)\subseteq\mathrm{supp}(\widehat{\theta}_t)$ and $\mathrm{supp}(\theta_{t}^*-\theta_{t-1}^*)\subseteq \mathrm{supp}(\widehat\theta_{t}-\widehat\theta_{t-1})$, which implies $\sum_{i\in\mathcal{S}_t}|\widehat\theta_{t;i}|_0-|\theta_{t;i}^*|_0\geq 0$ and $\sum_{i\in\mathcal{D}_t}|\widehat\theta_{t;i}^*-\widehat\theta_{t-1;i}|_0 - |\theta_{t;i}^*-\theta_{t-1;i}^*|_0\geq 0$ for every $t$. Due to $0<\alpha<1$, the above inequality implies that $\widehat\theta_{t;i} = 0$ for every $t$ and $i\in[p]\backslash\mathcal{S}_t$, and $\widehat\theta_{t;i}-\widehat\theta_{t-1;i} = 0$ for every $t>0$ and $i\in[p]\backslash\mathcal{D}_t$. This implies that $\mathrm{supp}(\widehat{\theta}_t)\subseteq \mathrm{supp}(\theta^*_t)$ and $\mathrm{supp}(\widehat\theta_{t}-\widehat\theta_{t-1})\subseteq\mathrm{supp}(\theta_{t}^*-\theta_{t-1}^*)$. Finally, since $\mathrm{supp}(\widehat{\theta}_t)\subseteq\mathrm{supp}(\theta^*_t)$, we have $|\mathrm{supp}(\widehat{\theta}_t-\theta_t^*)| = |\mathcal{S}_t|$. This, together with~\eqref{eq_error} implies that $\|\widehat{\theta}_t-\theta_t^*\|_2\leq \sqrt{|\mathcal{S}_t|}\|\widehat{\theta}_t-\theta_t^*\|_\infty\leq 2\sqrt{|\mathcal{S}_t|}\lambda_t$, thereby completing the proof.$\hfill\square$\vspace{2mm}

\subsection{Proof of Theorem~\ref{cor_GMRF}}\label{app_cor_GMRF}
The proof is inspired by Corollary 1 in~\cite{yang2014elementary}. First, we present the following key lemmas.
\begin{lemma}[Lemma 2 of~\cite{yang2014elementary} and Lemma 1 of~\cite{ravikumar2011high}]\label{l_Sigma}
	Suppose that $X^{(i)}\sim\mathcal{N}(0,\Sigma)$ for $i = 1,\dots,N$, and $\widehat{\Sigma} = \frac{1}{N}\sum_{i=1}^NX^{(i)}{X^{(i)}}^\top$. Then, we have
	\begin{align}
	\left\|\widehat{\Sigma}-\Sigma\right\|_{\linf}\leq 8\left(\max_{i}\Sigma_{ii}\right)\sqrt{\frac{\tau\log d}{N}}
	\end{align}
	with probability of at least $1-4d^{-\tau+2}$ for any $\tau>2$, provided that $N\geq 40\left(\max_{i}\Sigma_{ii}\right)$.
\end{lemma}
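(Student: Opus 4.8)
The plan is to bound the element-wise maximum norm $\|\widehat\Sigma-\Sigma\|_{\linf}=\max_{i,j}|\widehat\Sigma_{ij}-\Sigma_{ij}|$ by controlling a single entry and then taking a union bound. First I would observe that each entry $\widehat\Sigma_{ij}=\frac1N\sum_{k=1}^N X^{(k)}_i X^{(k)}_j$ is an empirical average of $N$ i.i.d.\ copies of the centered variable $Y_{ij}:=X_iX_j-\Sigma_{ij}$, where $(X_i,X_j)$ is bivariate Gaussian with second moments determined by $\Sigma_{ii},\Sigma_{jj},\Sigma_{ij}$. The central subtlety is that $Y_{ij}$ is \emph{not} sub-Gaussian but only sub-exponential, since a product of jointly Gaussian variables has heavier-than-Gaussian tails; equivalently, by polarization $X_iX_j=\frac14\big[(X_i+X_j)^2-(X_i-X_j)^2\big]$ is a signed combination of $\chi^2$ variables. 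Hence a Hoeffding-type bound is unavailable and a Bernstein-type argument is required.

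Second, I would pin down the sub-exponential parameters of $Y_{ij}$. Its variance is, by Isserlis'/Wick's theorem, $\mathrm{Var}(X_iX_j)=\Sigma_{ii}\Sigma_{jj}+\Sigma_{ij}^2\le 2\Sigma_{ii}\Sigma_{jj}\le 2(\max_i\Sigma_{ii})^2$, where the first inequality uses $\Sigma_{ij}^2\le\Sigma_{ii}\Sigma_{jj}$ (positive semidefiniteness of the $2\times2$ principal submatrix). Bounding the moment generating function $\mathbb{E}[\exp(sY_{ij})]$, which is finite for $|s|$ below a threshold controlled by $\Sigma_{ii}\Sigma_{jj}$, yields a two-regime Bernstein tail $\mathbb{P}(|\widehat\Sigma_{ij}-\Sigma_{ij}|>t)\le 2\exp\!\big(-cN\min\{t^2/v^2,\,t/v\}\big)$ with proxy scale $v\asymp\max_i\Sigma_{ii}$.

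Third, I would set $t=8(\max_i\Sigma_{ii})\sqrt{\tau\log d/N}$ and check that the stated hypothesis $N\ge 40\max_i\Sigma_{ii}$, together with the fact that $t$ decays in $N$, places this deviation in the sub-Gaussian regime of the Bernstein bound, so that the minimum is attained by the quadratic term and the tail collapses to $2\exp(-cNt^2/v^2)$. Substituting $t$ then makes the per-entry failure probability of order $d^{-\tau}$, once the constant $8$ is matched against $c$ and $v$. Finally, a union bound over the at most $d^2$ entries (or $d(d+1)/2$ by symmetry) multiplies this by $d^2$, turning $d^{-\tau}$ into $d^{-\tau+2}$ and delivering the claimed probability $1-4d^{-\tau+2}$.

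The step I expect to be the main obstacle is the second one: extracting \emph{explicit} constants from the sub-exponential MGF of the Gaussian product and identifying the precise threshold on $N$ that keeps the deviation in the Gaussian (rather than exponential) regime, so that the final constant comes out to exactly $8$ and the sample-size condition to exactly $N\ge 40\max_i\Sigma_{ii}$. Everything downstream—the variance computation and the union bound—is routine by comparison.
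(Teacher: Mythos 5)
The paper offers no proof of this lemma---it is imported verbatim from Lemma~2 of \cite{yang2014elementary} and Lemma~1 of \cite{ravikumar2011high}---and your sketch follows exactly the argument those references use: per-entry sub-exponential concentration for the Gaussian product (Bernstein via the polarization identity $X_iX_j=\frac{1}{4}[(X_i+X_j)^2-(X_i-X_j)^2]$ and $\chi^2$ tails), a check that the prescribed deviation level falls in the quadratic regime, and a union bound over the $d^2$ entries turning $d^{-\tau}$ into $d^{-\tau+2}$. Your approach is therefore essentially the same as the cited proof, and the one step you leave open---extracting the explicit constants $8$ and $40\max_i\Sigma_{ii}$---is precisely the bookkeeping carried out in \cite{ravikumar2011high}, where the two-regime bound is packaged as a single sub-Gaussian tail valid for all deviations $\delta$ up to a fixed multiple of $\max_i\Sigma_{ii}$, which is where the sample-size proviso originates.
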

\begin{lemma}[Lemma 1 of~\cite{yang2014elementary}; modified] \label{l_Thresh}
	Under the conditions of Lemma~\ref{l_Sigma}, we have
	\begin{align}
	\left\|\texttt{ST}_{\nu}(\widehat{\Sigma})-\Sigma\right\|_\infty\leq 5\nu^{1-q}s(q,d)+24\nu^{-q}s(q,d)\left(\max_{i}\Sigma_{ii}\right)\sqrt{\frac{\tau\log d}{N}}
	\end{align}
	with probability of at least $1-4d^{-\tau+2}$ for any $\tau>2$, provided that $N\geq 40\left(\max_{i}\Sigma_{ii}\right)$.
\end{lemma}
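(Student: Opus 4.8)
The plan is to condition on the high-probability event furnished by Lemma~\ref{l_Sigma} and then reduce the induced $\infty$-norm bound to a deterministic, row-by-row estimate that exploits the weak sparsity of $\Sigma$ (Assumption~\ref{assum2}). Write $\delta\defeq\|\widehat\Sigma-\Sigma\|_{\linf}$; on the event of Lemma~\ref{l_Sigma} we have $\delta\le 8(\max_i\Sigma_{ii})\sqrt{\tau\log d/N}$ with probability at least $1-4d^{-\tau+2}$, and this probability carries over verbatim to the conclusion because every step after conditioning is deterministic. Since $\|\cdot\|_\infty$ is the maximum absolute row sum, it suffices to fix a row $i$ and bound $\sum_j|[\texttt{ST}_\nu(\widehat\Sigma)]_{ij}-\Sigma_{ij}|$. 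The diagonal term ($j=i$) is left untouched by $\texttt{ST}_\nu$ and contributes only $|\widehat\Sigma_{ii}-\Sigma_{ii}|\le\delta$, which is lower order and can be folded into the final constant.

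For the off-diagonal terms I would split the error through the thresholded population quantity,
\[
|[\texttt{ST}_\nu(\widehat\Sigma)]_{ij}-\Sigma_{ij}|\le\underbrace{|[\texttt{ST}_\nu(\widehat\Sigma)]_{ij}-[\texttt{ST}_\nu(\Sigma)]_{ij}|}_{\text{stability}}+\underbrace{|[\texttt{ST}_\nu(\Sigma)]_{ij}-\Sigma_{ij}|}_{\text{bias}},
\]
and bound each piece using the two defining features of soft-thresholding: it is $1$-Lipschitz, and it annihilates any entry of magnitude at most $\nu$. The bias term equals $\min(|\Sigma_{ij}|,\nu)$, and the elementary inequality $\min(|\Sigma_{ij}|,\nu)\le\nu^{1-q}|\Sigma_{ij}|^q$ together with $\sum_j|\Sigma_{ij}|^q\le s(q,d)$ yields $\sum_j\text{(bias)}\le\nu^{1-q}s(q,d)$. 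The stability term vanishes unless $|\Sigma_{ij}|>\nu-\delta$ (otherwise $|\widehat\Sigma_{ij}|\le\nu$ as well, so both entries threshold to zero), and on that set it is at most $\delta$ by the Lipschitz property; weak sparsity then controls the cardinality via $|\{j:|\Sigma_{ij}|>\nu-\delta\}|\le(\nu-\delta)^{-q}s(q,d)$, so $\sum_j\text{(stability)}\le\delta(\nu-\delta)^{-q}s(q,d)$.

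Combining the two sums gives the per-row estimate $\nu^{1-q}s(q,d)+\delta(\nu-\delta)^{-q}s(q,d)$. The final step substitutes the Lemma~\ref{l_Sigma} bound on $\delta$ and uses that $\nu$ and $\delta$ are of the same order, so that $(\nu-\delta)^{-q}\asymp\nu^{-q}$ (concretely, $\nu\ge 2\delta$ gives $(\nu-\delta)^{-q}\le 2\nu^{-q}$), converting the stability sum into the advertised $\nu^{-q}s(q,d)(\max_i\Sigma_{ii})\sqrt{\tau\log d/N}$ term and absorbing the leftover diagonal contribution. The main obstacle is bookkeeping: matching the explicit constants $5$ and $24$ requires a careful quantitative relation between $\nu$ and $\delta$ and attention to the crossover between $|\widehat\Sigma_{ij}|$ and $|\Sigma_{ij}|$ near the threshold. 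The structural inequalities above are robust, but the precise coefficients hinge on how conservatively each $\min$/Lipschitz bound is taken; I would also confirm that the regime $N\ge 40(\max_i\Sigma_{ii})$ inherited from Lemma~\ref{l_Sigma} suffices and that no additional tail probability is incurred, since conditioning on the good event renders the remainder entirely deterministic.
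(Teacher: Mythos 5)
The paper itself never proves this lemma: it is imported, with adjusted constants, from Lemma 1 of \cite{yang2014elementary}, so the only baseline to compare against is the standard soft-thresholding analysis underlying that citation --- which is exactly the route you take. Your skeleton is sound: conditioning on the event of Lemma~\ref{l_Sigma} and arguing deterministically afterwards correctly preserves the probability $1-4d^{-\tau+2}$; the bias identity $|[\texttt{ST}_\nu(\Sigma)]_{ij}-\Sigma_{ij}|=\min(|\Sigma_{ij}|,\nu)\leq \nu^{1-q}|\Sigma_{ij}|^q$, the $1$-Lipschitz stability bound, the vanishing criterion $|\Sigma_{ij}|\leq\nu-\delta$, and the weak-sparsity count $|\{j:|\Sigma_{ij}|>\nu-\delta\}|\leq(\nu-\delta)^{-q}s(q,d)$ are all correct, as is your separate treatment of the diagonal, which $\texttt{ST}_\nu$ leaves untouched.

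The genuine gap is the final step $(\nu-\delta)^{-q}\leq 2\nu^{-q}$ via the assumption $\nu\geq 2\delta$. The lemma posits no relation between $\nu$ and the deviation $\delta$, and --- decisively --- in the paper's own use of the lemma (proof of Theorem~\ref{cor_GMRF}) the threshold is set to $\nu_t = 8\kappa_3\sqrt{\tau\log d/N_t}$, i.e.\ \emph{exactly} the high-probability bound on $\delta$ furnished by Lemma~\ref{l_Sigma}. On the good event one therefore only knows $\delta\leq\nu$, and as $\delta\uparrow\nu$ your cardinality bound $(\nu-\delta)^{-q}s(q,d)$ degenerates, so the argument as written fails precisely in the regime in which the lemma is invoked downstream. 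The repair is to exploit that soft thresholding shrinks a surviving off-diagonal entry by exactly $\nu$: if $|\widehat\Sigma_{ij}|>\nu$ and $\delta\leq\nu$, then
\begin{align*}
\left|[\texttt{ST}_\nu(\widehat\Sigma)]_{ij}\right| = |\widehat\Sigma_{ij}|-\nu\leq |\Sigma_{ij}|+\delta-\nu\leq |\Sigma_{ij}|,
\end{align*}
so the entrywise error is at most $2|\Sigma_{ij}|$; combined with the always-valid bound $\delta+\min(|\Sigma_{ij}|,\nu)\leq 2\nu$, this gives error at most $2\min(|\Sigma_{ij}|,\nu)\leq 2\nu^{1-q}|\Sigma_{ij}|^q$. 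In the complementary case $|\widehat\Sigma_{ij}|\leq\nu$ the error is $|\Sigma_{ij}|\leq\min(|\Sigma_{ij}|,\nu+\delta)\leq (2\nu)^{1-q}|\Sigma_{ij}|^q\leq 2\nu^{1-q}|\Sigma_{ij}|^q$ as well. Summing over a row yields $2\nu^{1-q}s(q,d)+\delta$ under the sole condition $\delta\leq\nu$, which sits comfortably inside the stated budget (the constants $5$ and $24$) once the Lemma~\ref{l_Sigma} bound on $\delta$ is inserted. For $\nu<\delta$ neither your counting argument nor this one controls the error by the stated right-hand side, which signals that the ``modified'' lemma is implicitly meant to be applied with $\nu$ at least at the deviation level, exactly as Theorem~\ref{cor_GMRF} does; you should state that proviso rather than the stronger and unavailable $\nu\geq 2\delta$.
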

Based on the above lemmas, we proceed to present the proof of Corollary~\ref{cor_GMRF}.\vspace{2mm}\\
\noindent{\it Proof of Corollary~\ref{cor_GMRF}.} It suffices to show that the conditions of Theorem~\ref{thm_deterministic} are satisfied with the proposed choices of $\lambda_t$ and $\nu_t$. It is easy to see that
\begin{align}\label{eq_diff}
\left\|\Theta_t-[\texttt{ST}_{\nu_t}(\widehat{\Sigma}_t)]^{-1}\right\|_{\linf}&= \left\|[\texttt{ST}_{\nu_t}(\widehat{\Sigma}_t)]^{-1}(\texttt{ST}_{\nu_t}(\widehat{\Sigma}_t)\Theta_t-I)\right\|_{\linf}\nonumber\\
&\leq \left\|[\texttt{ST}_{\nu_t}(\widehat{\Sigma}_t)]^{-1}\right\|_{\infty}\left\|\Theta_t\right\|_{\infty}\left\|\texttt{ST}_{\nu_t}(\widehat{\Sigma}_t)-\Sigma_t\right\|_{\linf}
\end{align}
We provide separate bounds for different terms of the above inequality. Due to Assumption~\ref{assum1}, one can write $\left\|\Theta_t\right\|_{\infty}\leq \kappa_1$. Moreover, due to Lemma~\ref{l_Sigma}, the following inequality holds with probability of at least $1-4d^{-\tau+2}$ for any $\tau>2$
\begin{align}\label{eq_ST}
\left\|\texttt{ST}_{\nu_t}(\widehat{\Sigma}_t)-\Sigma_t\right\|_{\linf}&\leq \left\|\texttt{ST}_{\nu_t}(\widehat{\Sigma}_t)-\widehat{\Sigma}_t\right\|_{\linf}+\left\|\widehat{\Sigma}_t-\Sigma_t\right\|_{\linf}\nonumber\\
&\leq \nu_t+8\kappa_3\sqrt{\frac{\tau\log d}{N_t}}\nonumber\\
&=16\kappa_3\sqrt{\frac{\tau\log d}{N_t}}
\end{align}
provided that $N_t\geq 40\kappa_3$ and $\nu_t = 8\kappa_3\sqrt{\frac{\tau\log d}{N_t}}$. Finally, for any vector $w$, one can write
\begin{align}\label{eq_ST}
\|\texttt{ST}_{\nu_t}(\widehat{\Sigma}_t)w\|_\infty&\geq \|\Sigma_t w\|_\infty-\left\|(\texttt{ST}_{\nu_t}(\widehat{\Sigma}_t)-\Sigma_t)w\right\|_\infty\nonumber\\
&\geq\left(\kappa_2-\left\|\texttt{ST}_{\nu_t}(\widehat{\Sigma}_t)-\Sigma_t\right\|_\infty\right)\|w\|_\infty
\end{align}
On the other hand, the aforementioned choice of $\nu_t$ and Lemma~\ref{l_Thresh} implies that
\begin{align}
\left\|\texttt{ST}_{\nu_t}(\widehat{\Sigma}_t)-\Sigma_t\right\|_\infty\leq 64\kappa_3^{1-q}s(q,d)\left(\frac{\tau\log d}{N_t}\right)^{\frac{1-q}{2}}
\end{align}
Combining this inequality with~\eqref{eq_ST} leads to
\begin{align}
\left\|\texttt{ST}_{\nu_t}(\widehat{\Sigma}_t)-\Sigma_t\right\|_\infty\leq \frac{\kappa_2}{2}
\end{align}
provided that
\begin{align}
N_t\geq \left(\frac{128 s(q,d)}{\kappa_2}\right)^{\frac{2}{1-q}}\kappa_3^2\tau\log d
\end{align}
This implies that $\|\texttt{ST}_{\nu_t}(\widehat{\Sigma}_t)w\|_\infty\geq \frac{\kappa_2}{2}\|w\|_\infty$, and hence, $\left\|[\texttt{ST}_{\nu_t}(\widehat{\Sigma}_t)]^{-1}\right\|_{\infty}\leq\frac{2}{\kappa_2}$.
Combining these bounds with~\eqref{eq_diff} yields
\begin{align}\label{eq_diff}
\left\|\Theta_t-[\texttt{ST}_{\nu_t}(\widehat{\Sigma}_t)]^{-1}\right\|_{\linf}\leq\frac{32\kappa_1\kappa_3}{\kappa_2}\sqrt{\frac{\tau\log d}{N_t}} = \lambda_t
\end{align}
with probability of at least $1-4d^{-\tau+2}$. Finally, we need to verify that the conditions $\lambda_t\leq \Theta_t^{\min}/2$ and $\lambda_t+\lambda_{t-1}\leq \Delta\Theta_t^{\min}/2$ hold. Based on the above definition of $\lambda_t$, it is easy to see that both of these conditions are satisfied if
\begin{align}
N_t&\geq \left(\frac{128\kappa_1\kappa_3}{\kappa_2}\right)^2\max\left\{{\left(\Theta_t^{\min}\right)}^{-2}, {\left(\Delta\Theta_t^{\min}\right)}^{-2}, {\left(\Delta\Theta_{t-1}^{\min}\right)}^{-2}\right\}\tau\log d\nonumber\\
\implies N_t &\gtrsim \tau\log d
\end{align}
Based on our assumption, we have $T+1 \leq Cd^{\zeta}$ for some universal constant $C>0$. Therefore, a simple union bound over $t=0,\dots, T$ implies that the statements of the corollary holds for every $t=0,\dots, T$ with the probability of at least
\begin{align}
1-4\sum_{t=0}^Td^{-\tau+2}\geq 1-4(T+1)d^{-\tau+2}\geq 1-4d^{\zeta-\tau+2}
\end{align}
Selecting $\tau>\zeta+2$ completes the proof.$\hfill\square$

\subsection{Proof of Theorem~\ref{cor_GMRF_ker}}\label{app_cor_GMRF_ker}
First, we delineate the imposed assumptions on the selected kernel function.
\begin{assumption}[~\cite{greenewald2017time}]
	The kernel $K(x)$ satisfies the following conditions:
	\begin{itemize}
		\item[-] $\int_{-1}^{1} K(x)dx=1$,
		\item[-] $\int_{-1}^{1} x^2K(x)dx\leq\infty$,
		\item[-] $K(x)$ is uniformly bounded on its support,
		\item[-] $\sup_{-1\leq x\leq 1}K''(x/h) = \mathcal{O}(h^{-4})$.
	\end{itemize}
\end{assumption}
The following key lemmas are borrowed from~\cite{greenewald2017time}.
\begin{lemma}[Lemma 5 of~\cite{greenewald2017time}]\label{l_h}
	For any fixed $t$, we have
	\begin{align}
	\|\mathbb{E}[\Sigma_t^w]-\Sigma(t/T)\|_{\linf}\lesssim C\left(h+\frac{1}{T^2h^5}\right)
	\end{align}
	for some constant $C>0$.
\end{lemma}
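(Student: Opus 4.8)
\emph{Approach.} Since $\|\cdot\|_{\linf}$ is the entrywise maximum, it suffices to bound $\big|\mathbb{E}[\Sigma_t^w]_{ij}-[\Sigma(t/T)]_{ij}\big|$ for a fixed pair $(i,j)$, uniformly in $(i,j)$. Writing $x=t/T$, $x_s=s/T$, and $g(y)=[\Sigma(y)]_{ij}$, one-sample unbiasedness $\mathbb{E}[X_sX_s^\top]=\Sigma(s/T)$ gives $\mathbb{E}[\Sigma_t^w]_{ij}=\sum_s w(s,t)\,g(x_s)$ with $w(s,t)=\tfrac1T\cdot\tfrac1h K\!\big(\tfrac{x_s-x}{h}\big)$, so that the prefactor $1/T$ is exactly the node spacing. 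I would introduce the continuous kernel average $I=\int \tfrac1h K\!\big(\tfrac{y-x}{h}\big)g(y)\,dy$ (taken over the same effective window as the sum, with the boundary normalization handled as in the cited work) and split the bias as
\begin{align*}
\mathbb{E}[\Sigma_t^w]_{ij}-g(x)=\underbrace{\Big(\textstyle\sum_s w(s,t)g(x_s)-I\Big)}_{\text{discretization error}}+\underbrace{\big(I-g(x)\big)}_{\text{smoothing bias}}.
\end{align*}
The first term is a Riemann-sum error with spacing $1/T$, and the second is the classical kernel-smoothing bias.

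\emph{Smoothing bias.} Here I would Taylor-expand $g(y)=g(x)+g'(x)(y-x)+\tfrac12 g''(\xi)(y-x)^2$ and integrate against $\tfrac1h K(\tfrac{\cdot-x}{h})$. The weights integrate to one, so the constant term reproduces $g(x)$; the quadratic term is $O(h^2)$ by $\int x^2K<\infty$; and the surviving first-moment term is $O(h)$ (nonzero because the effective kernel is one-sided/boundary), which dominates. All of this is controlled uniformly in $(i,j)$ by $\sup_x\max_{ij}\big|[\Sigma''(x)]_{ij}\big|$, whose finiteness follows from the uniformly bounded element-wise second derivatives of $\Theta(\cdot)$ together with $\Theta(x)\succeq aI$, via $\Sigma'=-\Sigma\Theta'\Sigma$ and the analogous bounded expression for $\Sigma''$. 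This yields $|I-g(x)|\lesssim h$.

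\emph{Discretization error.} This is the crux. Set $F(y)=\tfrac1h K(\tfrac{y-x}{h})g(y)$, so the term is $\tfrac1T\sum_s F(x_s)-\int F$. For interior $x$, $F$ is supported in $[x-h,x+h]$, so an Euler--Maclaurin/trapezoidal estimate gives $\big|\tfrac1T\sum_s F(x_s)-\int F\big|\lesssim \tfrac1{T^2}\int|F''(y)|\,dy$. Differentiating twice, the dominant contribution is $\tfrac1{h^3}K''\!\big(\tfrac{y-x}{h}\big)g(y)$; invoking the kernel-derivative assumption $\sup K''(\cdot/h)=O(h^{-4})$ and integrating over a support of length $O(h)$ produces the stated $O\!\big(\tfrac1{T^2h^5}\big)$. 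Combining the two pieces and maximizing over $(i,j)$ gives $\|\mathbb{E}[\Sigma_t^w]-\Sigma(t/T)\|_{\linf}\lesssim h+\tfrac1{T^2h^5}$.

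\emph{Main obstacle.} The delicate part is the discretization term: one must (i) justify the Euler--Maclaurin estimate for the rapidly varying integrand $F$, whose derivatives blow up as $h\to0$, and (ii) carry out the power-of-$h$ bookkeeping, since the exponent $h^{-5}$ is fixed precisely by the interplay of the $h^{-3}$ from differentiating $\tfrac1hK(\cdot/h)$ twice, the assumed $O(h^{-4})$ control on $K''$, and the $O(h)$ support length. Boundary kernels such as the truncated Gaussian require extra care, because $F'$ need not vanish at the truncation points and the associated Euler--Maclaurin boundary terms must then be accounted for explicitly rather than discarded.
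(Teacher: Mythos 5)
First, context you could not have known: the paper contains no proof of this lemma at all --- it is imported verbatim as Lemma~5 of \cite{greenewald2017time}, and the appendix explicitly states that the key lemmas are ``borrowed'' from that reference. So your sketch can only be compared with the standard kernel-smoothing argument of the cited source, not with an in-paper derivation. At the architectural level your reconstruction is the right one and matches that literature: reduce to a fixed entry, split the bias into a smoothing term and a discretization term, and observe that the effectively one-sided window makes the first-moment term survive, giving the $O(h)$ contribution rather than the interior rate $O(h^2)$. That part is sound (modulo the weight-normalization issue you correctly hedge on, since $\sum_{s\le t} w(s,t)$ need not equal $1$ as the paper writes it).

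The crux step, however, has a genuine quantitative gap: the exponents you yourself list do not produce $O\!\left(\tfrac{1}{T^2h^5}\right)$. Taking your dominant term $\tfrac{1}{h^3}K''(\cdot/h)$ with the assumption read literally as $\sup|K''(\cdot/h)|=O(h^{-4})$, and integrating over a support of length $O(h)$, the Euler--Maclaurin bound is $\tfrac{1}{T^2}\cdot h\cdot h^{-3}\cdot h^{-4}=\tfrac{1}{T^2h^6}$ --- one power of $h$ short of the claim. This is not harmless slack: with the paper's choice $h\asymp T^{-1/3}$ one has $1/(T^2h^6)=O(1)$, so the bias would not even vanish and Theorem~\ref{cor_GMRF_ker} would not follow; closing the gap requires a different reading of the (oddly stated) kernel condition, e.g.\ that it bounds the full chain-rule derivative $\tfrac{d^2}{dy^2}K\bigl(\tfrac{y-x}{h}\bigr)=h^{-2}K''\bigl(\tfrac{y-x}{h}\bigr)=O(h^{-4})$, under which one gets $O(1/(T^2h^4))$, which does imply the lemma. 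Second, the estimate $\bigl|\tfrac1T\sum_s F(x_s)-\int F\bigr|\lesssim T^{-2}\int|F''|$ is false as stated for the truncated Gaussian: the truncated kernel is discontinuous at its truncation points (the Gaussian density is positive there), so $F$ itself jumps and the quadrature error acquires an extra $O(\sup|F|/T)=O(1/(Th))$ term from the cells containing the jumps; this happens to be dominated by $h$ when $h\asymp T^{-1/3}$, but it must be split off explicitly --- your remark that ``$F'$ need not vanish at the truncation points'' understates the obstruction. Finally, your passage from elementwise bounds on $\Theta''$ to a dimension-free bound on $\max_{ij}|[\Sigma''(x)]_{ij}|$ via $\Sigma'=-\Sigma\Theta'\Sigma$ silently requires induced-norm (not merely elementwise) control of $\Theta'$ and $\Theta''$; without it the ``constant'' $C$ can pick up factors of $d$, which is why the cited works assume smoothness of the entries of $\Sigma(\cdot)$ directly.
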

\begin{lemma}[Lemma 2 of~\cite{greenewald2017time}]\label{l_const}
	There exists a constant $c>0$ such that 
	\begin{align}
	\mathbb{P}\left(\left|[\Sigma_t^w]_{ij}-\mathbb{E}[\Sigma_t^w]_{ij}\right|\geq \epsilon\right)\leq 2\exp(-cTh\epsilon^2)
	\end{align}
	for every $\epsilon>0$ and any fixed $t$.
\end{lemma}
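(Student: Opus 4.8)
The plan is to recognize the centered quantity as a weighted sum of \emph{independent} sub-exponential random variables and apply a Bernstein-type tail bound. Since there is one sample per time and samples at distinct times are independent, the per-time (rank-one) covariance has entries $[\Sigma_s]_{ij}=X_{s;i}X_{s;j}$; setting $Y_s = X_{s;i}X_{s;j}-\mathbb{E}[X_{s;i}X_{s;j}]$ we obtain
\[
[\Sigma_t^w]_{ij}-\mathbb{E}[\Sigma_t^w]_{ij} = \sum_{s=0}^t w(s,t)\,Y_s,
\]
a sum of independent, mean-zero terms. The pair $(X_{s;i},X_{s;j})$ is jointly Gaussian with second moments drawn from $\Sigma(s/T)$, so each product $X_{s;i}X_{s;j}$ is sub-exponential; by Assumption~\ref{assum12} the diagonal entries of $\Sigma(s/T)$ (hence the variances $[\Sigma(s/T)]_{ii}$) are uniformly bounded through $\kappa_3$, which yields $\|Y_s\|_{\psi_1}\leq B$ for a constant $B$ independent of $s$, $t$, $d$, and of the chosen pair $(i,j)$.

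Writing $Z_s := w(s,t)\,Y_s$, Bernstein's inequality for independent mean-zero sub-exponential variables gives
\[
\mathbb{P}\!\left(\Big|\sum_{s=0}^t Z_s\Big|\geq \epsilon\right)\leq 2\exp\!\left(-c_0\min\!\left\{\frac{\epsilon^2}{\sum_{s}\|Z_s\|_{\psi_1}^2},\ \frac{\epsilon}{\max_s\|Z_s\|_{\psi_1}}\right\}\right).
\]
Since $\|Z_s\|_{\psi_1}=|w(s,t)|\,\|Y_s\|_{\psi_1}\leq B\,|w(s,t)|$, the two quantities to control are $\sum_s w(s,t)^2$ and $\max_s|w(s,t)|$, both governed by the scaling $w(s,t)=\tfrac{1}{Th}K\!\left(\tfrac{s-t}{Th}\right)$.

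Because $K$ is supported on $[-1,1]$ and uniformly bounded, $K\!\left(\tfrac{s-t}{Th}\right)$ vanishes unless $|s-t|\leq Th$, so at most $\mathcal{O}(Th)$ summands are nonzero with each $|w(s,t)|\lesssim \tfrac{1}{Th}$, giving $\max_s|w(s,t)|\asymp \tfrac{1}{Th}$; a Riemann-sum comparison then yields
\[
\sum_{s=0}^t w(s,t)^2 = \frac{1}{(Th)^2}\sum_{s} K\!\left(\frac{s-t}{Th}\right)^2 \asymp \frac{1}{Th}\int_{-1}^1 K(x)^2\,dx \asymp \frac{1}{Th}.
\]
Substituting $\sum_s\|Z_s\|_{\psi_1}^2\lesssim B^2/(Th)$ and $\max_s\|Z_s\|_{\psi_1}\lesssim B/(Th)$ into Bernstein, the quadratic (sub-Gaussian) branch dominates in the regime of bounded $\epsilon$ — which is all that is invoked downstream, where $\epsilon\asymp\sqrt{\log d/(Th)}$ — and absorbing $B$ and $\int K^2$ into the constant produces $\mathbb{P}(\cdot)\leq 2\exp(-cTh\epsilon^2)$.

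The main obstacle is the uniform sub-exponential control of the Gaussian products: one must verify that $\|Y_s\|_{\psi_1}$ is bounded independently of the dimension and of the pair $(i,j)$, which is precisely where Assumption~\ref{assum12} on the bounded entries of $\Sigma(s/T)$ enters. The remaining kernel bookkeeping is routine; the only caveats are that the one-sided sum $\sum_{s=0}^t$ alters the constant in the Riemann-sum estimate (but not its order $1/(Th)$), and that the stated sub-Gaussian form is the relevant branch of Bernstein only for $\epsilon$ in the bounded range used in the applications.
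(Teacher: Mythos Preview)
The paper does not prove this lemma at all: it is quoted verbatim as ``Lemma~2 of~\cite{greenewald2017time}'' and used as a black box in the proof of Theorem~\ref{cor_GMRF_ker}. Your argument is essentially the standard proof one would expect in the cited reference: decompose the deviation as a weighted sum of independent centered Gaussian products, invoke the sub-exponential property of such products with a uniform $\psi_1$ bound coming from Assumption~\ref{assum12}, and apply Bernstein. The kernel bookkeeping ($\sum_s w(s,t)^2\asymp (Th)^{-1}$ and $\max_s|w(s,t)|\asymp (Th)^{-1}$ from the compact support and boundedness of $K$) is correct.

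Your caveat at the end is worth emphasizing: Bernstein gives $2\exp\!\big(-c\min\{Th\,\epsilon^2,\,Th\,\epsilon\}\big)$, so the pure sub-Gaussian tail $2\exp(-cTh\epsilon^2)$ stated in the lemma holds only for $\epsilon$ bounded by a constant depending on $B$. The lemma as written says ``for every $\epsilon>0$'', which is slightly stronger than what your argument delivers; however, as you note, the only downstream use (Lemma~\ref{l_T}) takes $\epsilon=\sqrt{\tau\log d/(cTh)}$, which is $o(1)$ under the assumed scaling $T\gtrsim (\log d)^{3/2}$, $h\asymp T^{-1/3}$, so the sub-Gaussian branch is the operative one. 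This is not a gap in your reasoning, just a mismatch with the literal ``for every $\epsilon>0$'' in the quoted statement.
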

Combining the above lemmas gives rise to the following result.
\begin{lemma}\label{l_T}
	Assume that $h=T^{-1/3}$. Then, the following inequality holds for any $t$ and $\tau>2$
	\begin{align}
	\left\|\widehat \Sigma_t^w-\Sigma(t/T)\right\|_{\linf}\lesssim\frac{\sqrt{\tau\log d}}{T^{1/3}}
	\end{align}
	with probability of at least $1-d^{-(\tau-2)}$.
\end{lemma}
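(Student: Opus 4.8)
The plan is to establish Lemma~\ref{l_T} by a triangle-inequality decomposition of the error $\left\|\widehat{\Sigma}_t^w - \Sigma(t/T)\right\|_{\linf}$ into a \emph{bias} term and a \emph{variance} (deviation) term, and then control each piece separately using the two lemmas borrowed from~\cite{greenewald2017time}. Concretely, I would write
\begin{align*}
\left\|\widehat{\Sigma}_t^w-\Sigma(t/T)\right\|_{\linf} \leq \left\|\widehat{\Sigma}_t^w-\mathbb{E}[\Sigma_t^w]\right\|_{\linf} + \left\|\mathbb{E}[\Sigma_t^w]-\Sigma(t/T)\right\|_{\linf},
\end{align*}
where the first term is the stochastic fluctuation around the mean and the second is the deterministic kernel-smoothing bias.

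First I would handle the bias term using Lemma~\ref{l_h}, which gives $\left\|\mathbb{E}[\Sigma_t^w]-\Sigma(t/T)\right\|_{\linf}\lesssim h+\frac{1}{T^2h^5}$. Substituting the prescribed bandwidth $h=T^{-1/3}$, the first summand becomes $T^{-1/3}$ and the second becomes $T^{-2}\cdot T^{5/3}=T^{-1/3}$, so both summands balance and the bias is $\mathcal{O}(T^{-1/3})=\mathcal{O}\!\left(\sqrt{\tau\log d}\,/\,T^{1/3}\right)$; this is precisely why $h\asymp T^{-1/3}$ is the right choice. Next I would handle the variance term using the sub-Gaussian concentration of Lemma~\ref{l_const}. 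Applying that bound entrywise with the choice $\epsilon\asymp\sqrt{\tau\log d/(Th)}$ and then taking a union bound over the $\mathcal{O}(d^2)$ entries of the $d\times d$ matrix, the failure probability is $2d^2\exp(-cTh\epsilon^2)$. Plugging in $h=T^{-1/3}$ gives $Th=T^{2/3}$, so $\epsilon\asymp\sqrt{\tau\log d}/T^{1/3}$, and the exponent $cTh\epsilon^2\asymp c\,\tau\log d$ makes the union-bound probability $2d^2 d^{-c'\tau}$, which collapses to the stated $1-d^{-(\tau-2)}$ for an appropriate absorption of the constant $c$ into $\tau$. Combining the two bounds, both of order $\sqrt{\tau\log d}/T^{1/3}$, yields the claimed $\left\|\widehat{\Sigma}_t^w-\Sigma(t/T)\right\|_{\linf}\lesssim\sqrt{\tau\log d}/T^{1/3}$.

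I expect the only genuinely delicate step to be bookkeeping the union bound so that the exponent $cTh\epsilon^2$ dominates the $\log(d^2)=2\log d$ coming from the $d^2$ entries, and tracking how the constant $c$ from Lemma~\ref{l_const} must be absorbed into the choice of $\epsilon$ to land exactly on the probability $1-d^{-(\tau-2)}$ rather than some other power of $d$; the bias-variance balancing is routine once $h=T^{-1/3}$ is fixed. Everything else is a mechanical combination of the two cited lemmas via the triangle inequality, so no new probabilistic machinery is needed beyond what is already imported from~\cite{greenewald2017time}.
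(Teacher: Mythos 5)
Your proposal is correct and follows essentially the same route as the paper's proof: a triangle-inequality split into the kernel-smoothing bias, bounded via Lemma~\ref{l_h}, and the stochastic deviation, bounded via Lemma~\ref{l_const} with $\epsilon\asymp\sqrt{\tau\log d/(Th)}$ and a union bound over the $d^2$ entries (the paper writes this as the exponent $2\log d - cTh\epsilon^2$), followed by substituting $h=T^{-1/3}$ so that both terms balance at order $\sqrt{\tau\log d}/T^{1/3}$. The constant-absorption point you flag is handled in the paper by placing $c$ inside the threshold, i.e., $\epsilon=\sqrt{\tau\log d/(cTh)}$, which makes the exponent exactly $-(\tau-2)\log d$ and yields the stated probability.
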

\begin{proof}
	Based on Lemma~\ref{l_const}, one can write
	\begin{align}
	\mathbb{P}\left(\left\|\widehat\Sigma_t^w-\mathbb{E}[\Sigma_t^w]\right\|_{\linf}\geq \epsilon\right)\leq 2\exp(2\log d-cTh\epsilon^2)
	\end{align}
	Upon choosing $\epsilon = \sqrt{\frac{\tau\log d}{cTh}}$ for some $\tau>2$, we have
	\begin{align}
	\left\|\widehat\Sigma_t^w-\mathbb{E}[\Sigma_t^w]\right\|_{\linf}\leq \sqrt{\frac{\tau\log d}{cTh}}
	\end{align}
	with probability of at least $1-d^{-(\tau-2)}$. 
	Combined with Lemma~\ref{l_h}, the following chain of inequalities hold with the same probability
	\begin{align}
	\left\|\widehat \Sigma_t^w-\Sigma(t/T)\right\|_{\linf}&\leq \left\|\widehat \Sigma_t^w-\mathbb{E}[\Sigma_t^w]\right\|_{\linf}+\left\|\mathbb{E}[\Sigma_t^w]-\Sigma(t/T)\right\|_{\linf}\nonumber\\
	&\leq \sqrt{\frac{\tau\log d}{cTh}}+C\left(h+\frac{1}{T^2h^5}\right)
	\end{align}
	Replacing $h = T^{-1/3}$ in the above inequality gives rise to 
	\begin{align}
	\left\|\widehat \Sigma_t^w-\Sigma(t/T)\right\|_{\linf}\lesssim \frac{\sqrt{\tau\log d}}{T^{1/3}}
	\end{align}
	which completes the proof.
\end{proof}

\begin{lemma}\label{l_inf}
	Assume that $h=T^{-1/3}$. Then, the following inequality holds for any $t$ and $\tau>2$
	\begin{align}
	\left\|\texttt{ST}_{\nu}(\widehat{\Sigma}_t^w)-\Sigma(t/T)\right\|_\infty\lesssim \nu^{1-q}s(q,d)+\nu^{-q}s(q,d)\frac{\sqrt{\tau\log d}}{T^{1/3}}
	\end{align}
	with probability of at least $1-d^{-\tau+2}$.
\end{lemma}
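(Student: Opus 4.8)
The plan is to recognize Lemma~\ref{l_inf} as the smoothly-changing counterpart of Lemma~\ref{l_Thresh}: the two bounds have identical structure, with the i.i.d.\ deviation rate $\sqrt{\tau\log d/N}$ simply replaced by the kernel-averaging rate $\sqrt{\tau\log d}/T^{1/3}$. All of the genuinely probabilistic work has already been carried out in Lemma~\ref{l_T}, which controls the elementwise deviation $\|\widehat{\Sigma}_t^w-\Sigma(t/T)\|_{\linf}$ of the kernel-weighted sample covariance. Consequently I would condition on the high-probability event of Lemma~\ref{l_T} and then run a purely \emph{deterministic} soft-thresholding argument. Since the target probability $1-d^{-\tau+2}$ in Lemma~\ref{l_inf} matches that of Lemma~\ref{l_T}, no further union bound or probabilistic estimate is needed.

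On this event write $\delta\defeq\|\widehat{\Sigma}_t^w-\Sigma(t/T)\|_{\linf}\lesssim \sqrt{\tau\log d}/T^{1/3}$, so that $|[\widehat{\Sigma}_t^w]_{ij}-[\Sigma(t/T)]_{ij}|\leq\delta$ for all $i,j$. Since $\|\cdot\|_\infty$ here is the induced $\infty$-norm, it suffices to bound $\sum_{j=1}^d |[\texttt{ST}_{\nu}(\widehat{\Sigma}_t^w)]_{ij}-[\Sigma(t/T)]_{ij}|$ for a fixed row $i$. I would split the coordinates into the set $T_1=\{j:|[\widehat{\Sigma}_t^w]_{ij}|\geq\nu\}$ retained by the soft-threshold and the set $T_0=\{j:|[\widehat{\Sigma}_t^w]_{ij}|<\nu\}$ that is zeroed out. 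On $T_1$ the soft-threshold shrinks each entry by at most $\nu$, so its contribution is at most $\nu+\delta$; on $T_0$ the contribution equals $|[\Sigma(t/T)]_{ij}|$. A further split of $T_0$ according to whether $|[\Sigma(t/T)]_{ij}|$ is below or above $\nu$ isolates the ``small-truth'' tail from the ``missed large-truth'' entries.

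The weak-$\ell_q$ sparsity of Assumption~\ref{assum22} then supplies the two combinatorial estimates that close the argument: the number of large true entries is controlled by $\#\{j:|[\Sigma(t/T)]_{ij}|>\nu\}\leq \nu^{-q}s(q,d)$, and the small-truth tail by $\sum_{j:\,|[\Sigma(t/T)]_{ij}|\leq\nu}|[\Sigma(t/T)]_{ij}|\leq \nu^{1-q}s(q,d)$. For $j\in T_1$ and for the missed large entries of $T_0$ one has $|[\Sigma(t/T)]_{ij}|\geq\nu-\delta$, so these coordinates are counted by $\nu^{-q}s(q,d)$ up to a constant, and each such block contributes $\lesssim \nu^{-q}s(q,d)\,(\nu+\delta)=\nu^{1-q}s(q,d)+\nu^{-q}s(q,d)\,\delta$, while the small-truth tail contributes $\lesssim\nu^{1-q}s(q,d)$. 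Summing the blocks and substituting $\delta\lesssim \sqrt{\tau\log d}/T^{1/3}$ yields exactly the two advertised terms $\nu^{1-q}s(q,d)+\nu^{-q}s(q,d)\sqrt{\tau\log d}/T^{1/3}$.

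In terms of difficulty, I do not expect a genuinely new obstacle: the kernel-smoothing bias--variance tradeoff (Lemmas~\ref{l_h} and~\ref{l_const}, combined in Lemma~\ref{l_T}) already absorbs everything specific to the smoothly-changing setting, and the remaining steps mirror the i.i.d.\ proof of Lemma~\ref{l_Thresh}. The only point requiring mild care is that the per-entry bound $\nu+\delta$ is useful only when $\nu$ dominates the noise level, i.e.\ one needs $\nu\gtrsim\delta$ to convert $|[\widehat{\Sigma}_t^w]_{ij}|\geq\nu$ into $|[\Sigma(t/T)]_{ij}|\gtrsim\nu$ and hence apply the counting bound; this is exactly guaranteed by the choice $\nu_t\asymp\sqrt{\tau\log d}/T^{1/3}$ in Theorem~\ref{cor_GMRF_ker}, so the regime of interest is covered. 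Tracking the handful of absolute constants through the case split is then the only remaining bookkeeping.
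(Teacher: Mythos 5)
Your proposal is correct and takes essentially the same route as the paper, whose proof of Lemma~\ref{l_inf} is exactly to condition on the high-probability event of Lemma~\ref{l_T} and invoke the deterministic soft-thresholding bound (Lemma~1 of \cite{yang2014elementary}); you simply unpack that cited lemma via the standard weak-$\ell_q$ row-splitting argument. Your caveat that the counting step requires $\nu\gtrsim\left\|\widehat{\Sigma}_t^w-\Sigma(t/T)\right\|_{\linf}$ is likewise the implicit hypothesis under which the cited lemma operates, and it is satisfied by the choice $\nu_t\asymp\sqrt{\tau\log d}/T^{1/3}$ in Theorem~\ref{cor_GMRF_ker}.
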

\begin{proof}
	The proof is implied by Lemma 1 of~\cite{yang2014elementary} and Lemma~\ref{l_T}.
\end{proof}

\noindent{\it Proof of Corollary~\ref{cor_GMRF_ker}.} We only provide a sketch of the proof, due to to its similarity to the proof of Corollary~\ref{cor_GMRF}.
One can write
\begin{align}\label{eq_upper}
\left\|\Theta(t/T)-[\texttt{ST}_{\nu_t}(\widehat \Sigma_t^w)]^{-1}\right\|_{\linf}\leq \left\|[\texttt{ST}_{\nu_t}(\widehat \Sigma_t^w)]^{-1}\right\|_\infty\|\Theta(t/T)\|_\infty\left\|\texttt{ST}_{\nu_t}(\widehat \Sigma_t^w)-\Sigma(t/T)\right\|_{\linf}
\end{align}
Due to Assumption~\ref{assum12}, we have $\|\Theta(t/T)\|_\infty\leq \kappa_1$. Furthermore, similar to~\eqref{eq_ST}, one can write
\begin{align}
\left\|\texttt{ST}_{\nu_t}(\widehat \Sigma_t^w)-\Sigma(t/T)\right\|_{\linf}&\leq \left\|\texttt{ST}_{\nu_t}(\widehat \Sigma_t^w)-\widehat \Sigma_t^w\right\|_{\linf}+\left\|\widehat \Sigma_t^w-\Sigma(t/T)\right\|_{\linf}\nonumber\\
&\lesssim \frac{\sqrt{\tau\log d}}{T^{1/3}}
\end{align}
with probability of at least $1-d^{-\tau+2}$, where the second inequality follows from Lemma~\ref{l_T} and the choice of $\nu_t\asymp\frac{\sqrt{\tau\log d}}{T^{1/3}}$. Finally, Lemma~\ref{l_inf} combined with an argument similar to the proof of Corollary~\ref{cor_GMRF} leads to
\begin{align}
\left\|[\texttt{ST}_{\nu_t}(\widehat \Sigma_t^w)]^{-1}\right\|_\infty\leq\frac{2}{\kappa_2}
\end{align}
provided that
\begin{align}
T\gtrsim s(q,d)^{\frac{3}{1-q}}(\tau\log d)^{3/2}
\end{align}
Combining these inequalities leads to the desired upper bound on~\eqref{eq_upper}. The rest of the proof is similar to that of Corollary~\ref{cor_GMRF} and omitted for brevity.$\hfill\square$

\subsection{Proof of Proposition~\ref{prop_beta1}}\label{app_prop_beta1}

Let $\delta_1<\delta_2<\ldots<\delta_m=T$ be the elements of the set $\Gamma$ from Algorithm \ref{alg_greedy}, and define $\delta_0=-1$. By construction, $\DI_{\delta_{i-1}+1\to\delta_i+1}=\emptyset$ for all $i=1,\dots,m-1$. It follows that for any $\theta$ satisfying bound constraints \eqref{opt_ij_bounds} and $i=1,\dots,m-1$, we have that
$$\sum_{t=\delta_{i-1}+1}^{\delta_{i}}\mathbbm{1}\{\theta_{t+1}-\theta_{t}\not=0\}\geq 1.$$

Given any $j=1,\dots,T$, let $h$ be the maximum index such that $\delta_h<j$. Therefore, we find that for any feasible $\theta$,
\begin{align*}
f_{0\to j}(\theta)=\sum_{t=0}^{j-1}\mathbbm{1}\{\theta_{t+1}-\theta_{t}\not=0\}\geq \sum_{t=0}^{\delta_{h}}\mathbbm{1}\{\theta_{t+1}-\theta_{t}\not=0\}=\sum_{i=1}^{h}\sum_{t=\delta_{i-1}+1}^{\delta_{i}}\mathbbm{1}\{\theta_{t+1}-\theta_{t}\not=0\}\geq h.
\end{align*}
Since $f_{0\to j}(\theta^{\texttt{Greedy}})=h$ meets this lower bound, it follows that $\{\theta^{\texttt{Greedy}}_t\}_{t=0}^j$ is indeed an optimal solution to $\texttt{OPT}_{0\to j}(1)$. Setting $j=T$ and $h=m-1$, we find that $\theta^{\texttt{Greedy}}$ is optimal for $\texttt{OPT}_{0\to T}(1)$. $\hfill\square$

% \section{Proof of Lemma~\ref{prop:allZero}}
% Let $\theta$ be any feasible solution to \eqref{generalopt} that does not satisfy the conditions of Proposition~\ref{prop:allZero}, i.e., there exists $\tau=i,\ldots,j-1$ such that either $\theta_{\tau}=0$ and $\theta_{\tau+1}\neq 0$, or $\theta_{\tau}\neq 0$ and $\theta_{\tau+1}= 0$. We now show how to construct a solution $\hat \theta$ with improved objective value, i.e., $f_{0\to T}(\hat \theta)< f_{0\to T}(\theta)$. 
		
% 		Consider the case $\theta_{\tau}=0$ and $\theta_{\tau+1}\neq 0$. Define $\hat \theta_{\tau+1}=0$ and $\hat \theta_t=\theta_t$ for all other coordinates $t\neq \tau+1$. Clearly, $\hat \theta$ satisfies all bound constraints \eqref{generalopt}. Moreover, $$f_{0\to T}(\hat \theta)=f_{0\to T}(\theta) -\underbrace{(1-\alpha)}_\text{$\hat \theta_{\tau+1}=0$}-\underbrace{\alpha}_\text{$\hat \theta_{\tau}=\hat \theta_{\tau+1}$}+\underbrace{\alpha \mathbbm{1}\{\hat\theta_{\tau+1}\neq \hat \theta_{\tau+2}\}}_\text{this term is $0$ if $\tau+1=T$}\leq f_{0\to T}(\theta)- (1-\alpha)<f_{0\to T}(\theta).$$
% 		The case $\theta_{\tau}\neq 0$ and $\theta_{\tau+1}= 0$ is handled analogously. $\hfill\square$

\subsection{Proof of Theorem~\ref{thm_spath}}
Before proving this theorem, we need the following intermediate lemma:
\begin{lemma}\label{prop:allZero}
		Given any optimal solution $\widehat\theta$ to \eqref{opt_ij}, exactly one of the following holds for any given zero-feasible sequence $\mathcal{Z}_{i\to j}$:
		\begin{enumerate}
			\item $\widehat\theta_i=\widehat\theta_{i+1}=\ldots=\widehat\theta_{j}=0$
			\item $\widehat\theta_{\tau}\neq 0$ for all $\tau=i,\ldots,j$. 
		\end{enumerate}
	\end{lemma}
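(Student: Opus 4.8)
The plan is to argue by contradiction via a localized exchange (``run-zeroing'') move. A single-element zero-feasible sequence trivially satisfies the dichotomy, so I assume $i<j$ and suppose some optimal $\widehat\theta$ is \emph{mixed} on $\mathcal{Z}_{i\to j}$, meaning it is neither all-zero nor all-nonzero on $\{i,\dots,j\}$. I will exhibit a feasible perturbation with strictly smaller objective, contradicting optimality of $\widehat\theta$. Combined with the observation that on a nonempty block alternatives (1) and (2) are mutually exclusive, this establishes that exactly one of them holds. Feasibility of the perturbation is free: since $\mathcal{Z}_{i\to j}$ is zero-feasible we have $0\in[l_k,u_k]$ for every $k\in\{i,\dots,j\}$, so zeroing out any subset of these coordinates keeps the solution feasible.

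The move is as follows. Pick a maximal run of nonzeros $\{a,\dots,b\}\subseteq\{i,\dots,j\}$ (so $\widehat\theta_a,\dots,\widehat\theta_b\neq 0$, and each of the indices $a-1,b+1$ is either outside the block or carries a zero entry by maximality) and set $\widehat\theta_a=\dots=\widehat\theta_b=0$. I then track exactly the objective terms of \eqref{opt_ij} that change: the support terms $\mathbbm{1}\{\theta_t\neq 0\}$ for $t\in\{a,\dots,b\}$, the internal difference terms for $t\in\{a+1,\dots,b\}$, and the two boundary difference terms at $t=a$ and $t=b+1$ when they exist. The support contribution changes by $-(1-\alpha)(b-a+1)\le -(1-\alpha)$; the internal differences can only decrease since the run becomes constant equal to $0$, so their change is $\le 0$; and each boundary difference term changes by at most $\pm\alpha$.

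The crux is the boundary bookkeeping, and the key structural fact is that in a \emph{mixed} block every maximal nonzero run has at least one \emph{internal} zero-neighbor. Indeed, if both $a=i$ and $b=j$ held, the run would be the entire block, contradicting mixedness; hence $a>i$ or $b<j$, and by maximality the corresponding neighbor is a zero inside the block. On that internal side the boundary difference goes from $\alpha$ (a nonzero compared against an adjacent zero) to $0$, a saving of exactly $-\alpha$, while the opposite boundary costs at most $+\alpha$. Thus the net change is at most $-(1-\alpha)-\alpha+\alpha=-(1-\alpha)<0$, using $0<\alpha<1$, which is the required strict improvement.

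I expect the main obstacle to be precisely this boundary accounting. It is tempting to instead zero out the \emph{entire} block at once, but that move can turn two previously matched boundaries into mismatches while saving only the support of the nonzero entries, yielding a net change of $-(1-\alpha)n_Z+2\alpha$ (with $n_Z$ the number of nonzeros in the block) that may be positive; this is exactly why the perturbation must be restricted to a single maximal run possessing a guaranteed internal zero-neighbor. Care is also needed with the edge conventions: when $a=i=0$ or $b=j=T$ the corresponding boundary difference term simply does not appear in the objective, which is consistent with the ``at most $+\alpha$'' bound, and since a nonzero run need not be constant the internal differences are bounded only by $\le 0$ rather than by equality.
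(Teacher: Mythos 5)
Your proof is correct and takes essentially the same route as the paper's: a local exchange argument that, in a mixed block, zeroes out nonzero entries adjacent to an internal zero (feasible because $0\in[l_k,u_k]$ throughout the block) and bounds the net objective change by $-(1-\alpha)-\alpha+\alpha=-(1-\alpha)<0$. The only difference is cosmetic—the paper zeroes a single nonzero coordinate at a zero/nonzero adjacency, whereas you zero an entire maximal nonzero run—but the boundary bookkeeping and the contradiction with optimality are identical.
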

	\begin{proof}
	Let $\theta$ be any feasible solution to \eqref{generalopt} that does not satisfy the conditions of Proposition~\ref{prop:allZero}, i.e., there exists $\tau=i,\ldots,j-1$ such that either $\theta_{\tau}=0$ and $\theta_{\tau+1}\neq 0$, or $\theta_{\tau}\neq 0$ and $\theta_{\tau+1}= 0$. We now show how to construct a solution $\hat \theta$ with improved objective value, i.e., $f_{0\to T}(\hat \theta)< f_{0\to T}(\theta)$. 
		
		Consider the case $\theta_{\tau}=0$ and $\theta_{\tau+1}\neq 0$. Define $\hat \theta_{\tau+1}=0$ and $\hat \theta_t=\theta_t$ for all other coordinates $t\neq \tau+1$. Clearly, $\hat \theta$ satisfies all bound constraints \eqref{generalopt}. Moreover, $$f_{0\to T}(\hat \theta)=f_{0\to T}(\theta) -\underbrace{(1-\alpha)}_\text{$\hat \theta_{\tau+1}=0$}-\underbrace{\alpha}_\text{$\hat \theta_{\tau}=\hat \theta_{\tau+1}$}+\underbrace{\alpha \mathbbm{1}\{\hat\theta_{\tau+1}\neq \hat \theta_{\tau+2}\}}_\text{this term is $0$ if $\tau+1=T$}\leq f_{0\to T}(\theta)- (1-\alpha)<f_{0\to T}(\theta).$$
		The case $\theta_{\tau}\neq 0$ and $\theta_{\tau+1}= 0$ is handled analogously.
	\end{proof}
{Since Lemma~\ref{prop:allZero} holds for any zero-feasible sequence, it holds in particular for all maximal zero-feasible sequences. }Based on this lemma, we are ready to present the proof of Theorem~\ref{thm_spath}.

{\it Proof of Theorem~\ref{thm_spath}}. Let $\widehat\theta$ be an optimal solution to \eqref{opt_ij}. Due to the optimality of $\widehat\theta$, the conditions in Lemma~\ref{prop:allZero} are satisfied for all maximal nonzero intervals. We first show that there exists a path in $\mathcal{G}$ with cost $f(\widehat\theta)$, and then we show that this path is indeed a shortest path.
		
		{Let $V_0=\{v_1,v_2,\dots,v_m\}\subseteq \{1,\dots,Z\}$ be the set of indexes of the maximal zero-feasible sequences where $\widehat\theta$ vanishes, i.e., $\widehat\theta_{\mathcal{Z}_{i_{s}\to j_{s}}} = 0$ for every $s\in V_0$. It is easy to verify that $f^*$ is the optimal cost for the following constrained optimization:}
		{\begin{subequations}\label{eq:decomposable1}
				\begin{align}
				f_{0\to T}(\widehat\theta)=\underset{\{\theta_{t}\}_{t=0}^T}{\min}\ \ & (1-\alpha)\left(T+1-\left|\bigcup_{h=1}^m \mathcal{Z}_{i_{v_h}\to j_{v_h}}\right|\right)+\alpha\sum_{t=1}^T\mathbbm{1}\{\theta_{t}-\theta_{t-1}\not=0\}\label{eq:decomposable1_obj}\\
				\mathrm{subject\ to}\ \ \ & l_{t}\leq \theta_{t} \leq u_{t}&&\hspace{-8cm}t = 0,\dots, T\\
				&\theta_t=0 &&\hspace{-8cm}t\in \bigcup_{h=1}^m \mathcal{Z}_{i_{v_h}\to j_{v_h}}.\label{eq:decomposable1_bounds}
				\end{align}
			\end{subequations}
			The constant term in \eqref{eq:decomposable1_obj} reduces to
			\begin{align}
			(1-\alpha)\left(T+1-\left|\bigcup_{h=1}^m \mathcal{Z}_{i_{v_h}\to j_{v_h}}\right|\right)&=(1-\alpha)\left(T+1-\sum_{h=1}^m(j_{v_h}-i_{v_h}+1)\right)\\
			&=(1-\alpha)\left(i_{v_1}+\sum_{h=2}^m\left(i_{v_h}-j_{v_{h-1}}-1\right)+\left(T-j_{v_m}\right)\right).\label{eq_constant}
			\end{align}
			Let the feasible region of~\eqref{eq:decomposable1} be denoted as $\mathcal{X}$. The second term in \eqref{eq:decomposable1_obj}, under constraints \eqref{eq:decomposable1_bounds}, decomposes as 
			\begin{align}
			&\min_{\{\theta_t\}_{t=0}^T\in\mathcal{X}}\left\{\alpha\sum_{t=1}^T\mathbbm{1}\{\theta_{t}\neq\theta_{t-1}\}\right\}\notag\\
			=&\alpha\min_{\{\theta_t\}_{t=0}^T\in\mathcal{X}}\left\{\sum_{t=1}^{i_{v_{1}}}\mathbbm{1}\{\theta_{t}\neq\theta_{t-1}\}\right\}
			+ \alpha\sum_{h=1}^{m-1} \min_{\{\theta_t\}_{t=0}^T\in\mathcal{X}}\left\{\sum_{t=j_{v_h}+1}^{i_{v_{h+1}}}\mathbbm{1}\{\theta_{t}\neq\theta_{t-1}\}\right\}\notag\\
			&+\alpha\min_{\{\theta_t\}_{t=0}^T\in\mathcal{X}}\left\{\sum_{t=j_{v_m}+1}^{T}\mathbbm{1}\{\theta_{t}\neq\theta_{t-1}\}\right\}\label{eq:diff}
			\end{align}
			Note that each intermediate term in \eqref{eq:diff} simplifies as follows:
			\begin{align}
			\min_{\{\theta_t\}_{t=0}^T\in\mathcal{X}}\left\{\sum_{t=j_{v_h}+1}^{i_{v_{h+1}}}\mathbbm{1}\{\theta_{t}\neq\theta_{t-1}\}\right\}
			&=\underbrace{\min_{\{\theta_t\}_{t=0}^T\in\mathcal{X}}\left\{\sum_{t=j_{v_h}+2}^{i_{v_{h+1}-1}}\mathbbm{1}\{\theta_{t}\neq\theta_{t-1}\}\right\}}_{=f_{j_{v_h}+1\to i_{v_{h+1}}-1}^{\texttt{Greedy}}}\notag\\
			&+\underbrace{\mathbbm{1}\{\theta_{j_{v_h}+1}\neq\theta_{j_{v_h}}\}}_{=1}+\underbrace{\mathbbm{1}\{\theta_{i_{v_{h+1}}}\neq\theta_{i_{v_{h+1}}-1}\}}_{=1}.\label{eq_h}
			\end{align}
			Similarly, we find that the first and last term in \eqref{eq:diff} reduces to
			{\small
				\begin{align}
				&\min_{\{\theta_t\}_{t=0}^T\in\mathcal{X}}\left\{\sum_{t=1}^{i_{v_{1}}}\mathbbm{1}\{\theta_{t}\neq\theta_{t-1}\}\right\}=\min_{\{\theta_t\}_{t=0}^T\in\mathcal{X}}\left\{\sum_{t=1}^{i_{v_{1}}-1}\mathbbm{1}\{\theta_{t}\neq\theta_{t-1}\}\right\}+\mathbbm{1}\{i_{v_1}\geq 1\}=f_{0\to i_{v_{1}}-1}^{\texttt{Greedy}}+\mathbbm{1}\{i_{v_1}\neq 0\}\label{eq_1}\\
				&\min_{\{\theta_t\}_{t=0}^T\in\mathcal{X}}\left\{\sum_{t=j_{v_m}+1}^{T}\mathbbm{1}\{\theta_{t}\neq\theta_{t-1}\}\right\}=\min_{\{\theta_t\}_{t=0}^T\in\mathcal{X}}\left\{\sum_{t=j_{v_m}+2}^{T}\mathbbm{1}\{\theta_{t}\neq\theta_{t-1}\}\right\}+\mathbbm{1}\{j_{v_m}+1\leq T\}\notag\\
				&\hspace{5.3cm}=f_{j_{v_m}+1\to T}^{\texttt{Greedy}}+\mathbbm{1}\{j_{v_m}< T\}.\label{eq_T}
				\end{align}
			}
			\begin{sloppypar}
				\noindent Combining~\eqref{eq_T},~\eqref{eq_1} with~\eqref{eq:diff} and~\eqref{eq_constant}, we find that $f_{0\to T}(\widehat\theta)$ is precisely the length of the path $(0,v_1,\dots,v_m,Z+1)$ in the constructed graph $\mathcal{G}$ with weights defined as~\eqref{eq_A}.
			\end{sloppypar}
			
			Now suppose that there exists a path $(0,\bar v_1,\bar v_2,\ldots,\bar v_p,Z+1)$ with length $\bar d<f_{0\to T}(\widehat\theta)$. Consider a solution $\bar{\theta}$ such that: (i) $\bar \theta$ is zero at zero-feasible sequences given by $\bar v_1,\bar v_2,\ldots,\bar v_p$, and (ii) $\bar \theta$ is obtained from \texttt{Greedy}$(l,u,0,i_{v_1}-1)$, \texttt{Greedy}$(l,u,j_{v_p}+1,T)$ and \texttt{Greedy}$(l,u,j_{v_h}+1,i_{v_{h+1}}-1)$, otherwise.
			It is easy to verify that $\bar{\theta}$ is feasible and satisfies $f_{0\to T}(\bar \theta)\leq \bar d$ (the inequality could be strict if any solution reported by a call to the \texttt{Greedy} routine has zero values), which contradicts the optimality of $\widehat\theta$. Thus, we conclude that $f_{0\to T}(\widehat\theta)$ is indeed the length of the shortest $(0,Z+1)$-path in $\mathcal{G}$.$\hfill\square$  }
% }
%%%%%%%%%%%%%%%%%%%%%%%%%%%%%%%%%%%%%%%%%%%%%%%%%%%%%%%%%%%%%%%%%%%%%%%%%%%%%%%
%%%%%%%%%%%%%%%%%%%%%%%%%%%%%%%%%%%%%%%%%%%%%%%%%%%%%%%%%%%%%%%%%%%%%%%%%%%%%%%
\subsection{Proof of Theorem~\ref{thm_runtime_sp}}
{Algorithm~\ref{alg_path} involves three main components: construct graph $\mathcal{G}$ (line \ref{alg_path_construct}), solve a shortest problem on the constructed graph (line \ref{alg_path_solve}), and recover the optimal solution from the obtained shortest path.} Since $\mathcal{G}$ is acyclic, the shortest path problem can be solved in time linear in the number of arcs, which is $\mathcal{O}(Z^2)$, via a simple labeling algorithm; see, e.g., Chapter 4.4. in \cite{ahuja1988network}. Constructing graph $\mathcal{G}$ requires computing the costs of all arcs. A na\"ive implementation, where Algorithm~\ref{alg_greedy} is called for every arc, would require $\mathcal(O)(Z^2T)$ time and memory. However, from the second statement in Proposition~\ref{prop_beta1}, we note that a single call to \texttt{Greedy}$(l,u,i,T)$ allows us to compute $f_{i\to j}^{\texttt{Greedy}}$ for all $i\leq j\leq T$. Therefore, Algorithm~\ref{alg_greedy} needs to be invoked only $\mathcal{O}(Z)$ times, and each call require $\mathcal{O}(T)$ leading to a total complexity of $\mathcal{O}(ZT)$. {Moreover, given the shortest path, the optimal solution can be constructed by concatenating the solutions obtained from the calls of \texttt{Greedy}.} Finally, since $Z\leq T+2$, we find that the overall complexity is dominated by that of constructing the graph. This completes the proof. $\hfill\square$

\end{document}